\def\jmlr{1}
\newcommand{\qed}{\hspace{\stretch{1}}
  $\blacksquare$
}
\renewenvironment{proof}{\noindent{\bf Proof}\hspace*{1em}}{\qed\\}
\newenvironment{exa}[1]{
  \textbf{Example}
  \ifthenelse{\isempty{#1}}{%
    \textbf{.}
  }{%
    ({#1})\textbf{.}
  }
}{}
\theoremstyle{definition}
\newcommand{\real}{\mathbb{R}}
\newcommand{\MYMIN}[1]{\ensuremath{\Big\{#1 \Big \}}}
\long\def\@makecaption#1#2{
  \vskip 0.8ex
  \setbox\@tempboxa\hbox{\small {\bf #1:} #2}
  \parindent 1.5em  
  \dimen0=\hsize
  \advance\dimen0 by -3em
  \ifdim \wd\@tempboxa >\dimen0
  \hbox to \hsize{
    \parindent 0em
    \hfil
    \parbox{\dimen0}{\def\baselinestretch{0.96}\small
      {\bf #1.} #2
    }
    \hfil}
  \else \hbox to \hsize{\hfil \box\@tempboxa \hfil}
  \fi
}
\title{Communication-Efficient Algorithms for \\ Statistical Optimization}
\author{\name Yuchen Zhang$^1$ \email yuczhang@eecs.berkeley.edu \\
  \name
  John C.\ Duchi$^1$ \email jduchi@eecs.berkeley.edu \\
  \name Martin J.\ Wainwright$^{1,2}$ \email wainwrig@berkeley.edu \\
  \addr $^1$Department of Electrical Engineering and
  Computer Science \\ $^2$Department of Statistics \\
  University of California, Berkeley \\
  Berkeley, CA 94720-1776 USA
}
\title{{{\LARGE{\bf{Communication-Efficient Algorithms for \\
          Statistical Optimization}}}}}
\author{Yuchen Zhang\footnotemark[1]
  \footnotemark[3] \and John
  C.\ Duchi\footnotemark[1]\ \footnotemark[3] \and Martin
  J.\ Wainwright\footnotemark[1]\ \footnotemark[2]\ \footnotemark[3]}
\begin{document}

\maketitle

\ifdefined\jmlr \else \renewcommand{\thefootnote}{\fnsymbol{footnote}}
\footnotetext[1]{Department of Electrical Engineering and Computer
  Sciences, University of California, Berkeley}
\footnotetext[2]{Department of Statistics, University of California,
  Berkeley, Berkeley, CA 94720. Email:
  \texttt{\{yuczhang,jduchi,wainwrig\}@eecs.berkeley.edu}}
\footnotetext[3]{ YZ, JCD, and MJW were partially supported by MURI
  grant N00014-11-1-0688 from the Office of Naval Research.  JCD was
  additionally supported by an NDSEG fellowship and a Facebook
  Fellowship.}  \renewcommand{\thefootnote}{\arabic{footnote}} \fi

\renewcommand{\labelenumi}{(\roman{enumi})}

\begin{abstract}
We analyze two communication-efficient algorithms for distributed
optimization in statistical settings involving large-scale data
sets. The first algorithm is a standard averaging method that
distributes the $\totalnumobs$ data samples evenly to $\nummac$
machines, performs separate minimization on each subset, and then
averages the estimates.  We provide a sharp analysis of this average
mixture algorithm, showing that under a reasonable set of conditions,
the combined parameter achieves mean-squared error (MSE) that decays
as $\order(\totalnumobs^{-1}+(\totalnumobs/\nummac)^{-2})$.  Whenever
$\nummac \le \sqrt{\totalnumobs}$, this guarantee matches the best
possible rate achievable by a centralized algorithm having access to
all $\totalnumobs$ samples.  The second algorithm is a novel method,
based on an appropriate form of bootstrap subsampling. Requiring only
a single round of communication, it has mean-squared error that decays
as $\order(\totalnumobs^{-1} + (\totalnumobs/\nummac)^{-3})$, and so
is more robust to the amount of parallelization. In addition, we show
that a stochastic gradient-based method attains mean-squared error
decaying as $\order(\totalnumobs^{-1} + (\totalnumobs /
\nummac)^{-3/2})$, easing computation at the expense of a potentially
slower MSE rate.  We also provide an experimental evaluation of our
methods, investigating their performance both on simulated data and on
a large-scale regression problem from the internet search domain. In
particular, we show that our methods can be used to efficiently solve
an advertisement prediction problem from the Chinese SoSo Search
Engine, which involves logistic regression with $\totalnumobs \approx
2.4 \times 10^8$ samples and $d \approx 740,\!000$ covariates.
\end{abstract}

\ifdefined\jmlr
\begin{keywords}
  Distributed Learning, Stochastic Optimization,
  Averaging, Subsampling
\end{keywords}
\else
\fi

\section{Introduction}

Many procedures for statistical estimation are based on a form of
(regularized) empirical risk minimization, meaning that a parameter of
interest is estimated by minimizing an objective function defined by
the average of a loss function over the data.  Given the current
explosion in the size and amount of data available in statistical
studies, a central challenge is to design efficient algorithms for
solving large-scale problem instances.  In a centralized setting,
there are many procedures for solving empirical risk minimization
problems, among them standard convex programming approaches
\citep[e.g.][]{BoydVa04} as well as stochastic approximation and
optimization algorithms~\citep{RobbinsMo51, HazanKaKaAg06,
  NemirovskiJuLaSh09}.  When the size of the dataset becomes extremely
large, however, it may be infeasible to store all of the data on a
single computer, or at least to keep the data in memory.  Accordingly,
the focus of this paper is the study of some distributed and
communication-efficient procedures for empirical risk minimization.

Recent years have witnessed a flurry of research on distributed
approaches to solving very large-scale statistical optimization
problems.  Although we cannot survey the literature adequately---the
papers~\citet{NedicOz09, RamNeVe10, JohanssonRaJo09,
  DuchiAgWa12, DekelGiShXi12, AgarwalDu11, RechtReWrNi11, DuchiBaWa12}
and references therein contain a sample of relevant work---we touch on
a few important themes here. It can be difficult within a purely
optimization-theoretic setting to show explicit benefits arising from
distributed computation. In statistical settings, however, distributed
computation can lead to gains in computational efficiency, as shown by a
number of authors~\citep{AgarwalDu11, DekelGiShXi12, RechtReWrNi11,
  DuchiBaWa12}.  Within the family of distributed algorithms, there
can be significant differences in communication complexity: different
computers must be synchronized, and when the dimensionality of the
data is high, communication can be prohibitively expensive. It is thus
interesting to study distributed estimation algorithms that require
fairly limited synchronization and communication while still enjoying
the greater statistical accuracy that is usually associated
with a larger dataset.

With this context, perhaps the simplest algorithm for distributed
statistical estimation is what we term the \emph{average mixture}
(\avgm) algorithm.  It is an appealingly simple method: given
$\nummac$ different machines and a dataset of size $\totalnumobs$,
first assign to each machine a (distinct) dataset of size $\numobs =
\totalnumobs / \nummac$, then have each machine $i$ compute the
empirical minimizer $\optvar_i$ on its fraction of the data, and
finally average all the parameter estimates $\optvar_i$ across the
machines.  This approach has been studied for some classification and
estimation problems by~\citet{MannMcMoSiWa09}
and~\citet{McDonaldHaMa2010}, as well as for certain stochastic
approximation methods by~\citet{ZinkevichSmWeLi10}. Given an empirical
risk minimization algorithm that works on one machine, the procedure
is straightforward to implement and is extremely communication
efficient, requiring only a single round of communication. It is also
relatively robust to possible failures in a subset of machines and/or
differences in speeds, since there is no repeated synchronization.
When the local estimators are all unbiased, it is clear that the the
\avgm\ procedure will yield an estimate that is essentially as good as
that of an estimator based on all $\totalnumobs$ samples.  However,
many estimators used in practice are biased, and so it is natural to
ask whether the method has any guarantees in a more general setting.
To the best of our knowledge, however, no work has shown rigorously
that the \avgm\ procedure generally has greater
efficiency than the naive approach of using $\numobs = \totalnumobs /
\nummac$ samples on a single machine.

This paper makes three main contributions.  First, in
Section~\ref{sec:avg_bound}, we provide a sharp analysis of the
\avgm\ algorithm, showing that under a reasonable set of conditions on
the population risk, it can indeed achieve substantially better rates
than the naive approach.  More concretely, we provide bounds on the
mean-squared error (MSE) that decay as \mbox{$\order((\numobs
  \nummac)^{-1} + \numobs^{-2})$.}  Whenever the number of machines
$\nummac$ is less than the number of samples $\numobs$ per machine,
this guarantee matches the best possible rate achievable by a
centralized algorithm having access to all $\totalnumobs = \numobs
\nummac$ samples.  In the special case of optimizing log likelihoods,
the pre-factor in our bound involves the trace of the Fisher
information, a quantity well-known to control the fundamental limits
of statistical estimation. We also show how the result extends to
stochastic programming approaches, exhibiting a stochastic
gradient-descent based procedure that also attains convergence rates
scaling as $\order((\numobs \nummac)^{-1})$, but with slightly worse
dependence on different problem-specific parameters. 

Our second contribution is to develop a novel extension of simple
averaging.  It is based on an appropriate form of
resampling~\citep{EfronTi93, Hall92, PolitisRoWo99}, which we refer to
as the \emph{subsampled average mixture} (\savgm) approach. At a high
level, the \savgm\ algorithm distributes samples evenly among
$\nummac$ processors or computers as before, but instead of simply
returning the empirical minimizer, each processor further subsamples
its own dataset in order to estimate the bias of its own estimate, and
returns a subsample-corrected estimate.  We establish that the
\savgm\ algorithm has mean-squared error decaying as
$\order(\nummac^{-1} \numobs^{-1} + \numobs^{-3})$.  As long as
$\nummac < \numobs^2$, the subsampled method again matches the
centralized gold standard in the first-order term, and has a
second-order term smaller than the standard averaging approach.

Our third contribution is to perform a detailed empirical evaluation
of both the \avgm\ and \savgm\ procedures, which we present in
Sections~\ref{sec:simulations} and~\ref{sec:real-experiment}. Using
simulated data from normal and non-normal regression models, we
explore the conditions under which the \savgm\ algorithm yields better
performance than the \avgm\ algorithm; in addition, we study the
performance of both methods relative to an oracle baseline that uses
all $\totalnumobs$ samples. We also study the sensitivity of the
algorithms to the number of splits $\nummac$ of the data, and in the
\savgm\ case, we investigate the sensitivity of the method to the
amount of resampling. These simulations show that both \avgm\ and
\savgm\ have favorable performance, even when compared to the
unattainable ``gold standard'' procedure that has access to all
$\totalnumobs$ samples. In Section~\ref{sec:real-experiment}, we
complement our simulation experiments with a large logistic regression
experiment that arises from the problem of predicting whether a user
of a search engine will click on an advertisement. This experiment is
large enough---involving $\totalnumobs \approx 2.4 \times 10^8$
samples in $d \approx 740,000$ dimensions with a storage size of
approximately 55 gigabytes---that it is difficult to solve efficiently
on one machine.  Consequently, a distributed approach is essential to
take full advantage of this data set.  Our experiments on this
problem show that \savgm---with the resampling and correction
it provides---gives substantial performance benefits over naive
solutions as well as the averaging algorithm \avgm.


\section{Background and problem set-up}

We begin by setting up our decision-theoretic framework for empirical
risk minimization, after which we describe our algorithms and the
assumptions we require for our main theoretical results.

\subsection{Empirical risk minimization}

Let $\{f(\cdot; \statsample), \; \statsample \in \statsamplespace\}$
be a collection of real-valued and convex loss functions, each defined
on a set containing the convex set $\optvarspace \subseteq \R^d$. Let
$\statprob$ be a probability distribution over the sample space
$\statsamplespace$.  Assuming that each function $x \mapsto f(\optvar;
\, x)$ is $\statprob$-integrable, the \emph{population risk} $\popfun
: \optvarspace \rightarrow \R$ is given by
\begin{equation}
  \label{eqn:objective}
  \popfun(\optvar) \defeq \E_\statprob[f(\optvar; \statrv)] =
  \int_{\statsamplespace} f(\optvar; \statsample) d
  \statprob(\statsample).
\end{equation}
Our goal is to estimate the parameter vector minimizing the population
risk, namely the quantity
\begin{equation}
  \label{eqn:optimal-point}
  \optvar^* \defeq
  \argmin_{\optvar \in \optvarspace} \popfun(\optvar)
  = \argmin_{\optvar \in \optvarspace}
  \int_{\statsamplespace} f(\optvar; \statsample) d\statprob(\statsample),
\end{equation}
which we assume to be unique. In practice, the population distribution
$\statprob$ is unknown to us, but we have access to a collection
$\samples$ of samples from the distribution $\statprob$.  Empirical
risk minimization is based on estimating $\optvar^*$ by solving the
optimization problem
\begin{equation}
  \label{eqn:erm}
  \what{\optvar} \in \argmin_{\optvar \in \optvarspace} \MYMIN{
    \frac{1}{|\samples|} \sum_{\statsample \in \samples} f(\optvar;
    \statsample)}.
\end{equation}

Throughout the paper, we impose some regularity conditions on the
parameter space, the risk function $\popfun$, and the instantaneous
loss functions $f(\cdot; \statsample) : \optvarspace \rightarrow \R$.
These conditions are standard in classical statistical analysis of
$M$-estimators~\citep[e.g.][]{LehmannCa98,Keener10}.  Our first
assumption deals with the relationship of the parameter space to the
optimal parameter $\optvar^*$.
\begin{assumption}[Parameters]
  \label{assumption:parameter-space}
  The parameter space $\optvarspace \subset \R^d$ is a compact convex
  set, with \mbox{$\optvar^* \in \interior \optvarspace$} and
  $\ell_2$-radius $\radius = \max \limits_{\optvar \in \optvarspace}
  \ltwo{\optvar - \optvar^*}$.
\end{assumption}
\noindent
In addition, the risk function is required to have some amount of
curvature.  We formalize this notion in terms of the Hessian of
$\popfun$:
\begin{assumption}[Local strong convexity]
  \label{assumption:strong-convexity}
  The population risk is twice differentiable, and there exists a
  parameter $\strongparam > 0$ such that \mbox{$\nabla^2
    \popfun(\optvar^*) \succeq \strongparam I_{d \times d}$.}
\end{assumption}
\noindent
Here $\nabla^2 \popfun(\optvar)$ denotes the $\usedim \times \usedim$
Hessian matrix of the population objective $\popfun$ evaluated at
$\optvar$, and we use $\succeq$ to denote the positive semidefinite
ordering (i.e., $A \succeq B$ means that $A - B$ is positive
semidefinite.) This local condition is milder than a global strong
convexity condition and is required to hold only for the population
risk $\popfun$ evaluated at $\optvar^*$.  It is worth observing that
some type of curvature of the risk is required for any method to
consistently estimate the parameters $\optvar^*$.

\subsection{Averaging methods}

Consider a data set consisting of $\totalnumobs = \nummac \numobs$
samples, drawn i.i.d.\ according to the distribution
$\statprob$.  In the distributed setting, we divide this
$\totalnumobs$-sample data set evenly and uniformly at random among a
total of $\nummac$ processors.  (For simplicity, we have assumed the
total number of samples is a multiple of $\nummac$.)  For $i = 1,
\ldots, \nummac$, we let $\samples_{1,i}$ denote the data set assigned
to processor $i$; by construction, it is a collection of $\numobs$
samples drawn i.i.d.\ according to $\statprob$, and the samples in
subsets $\samples_{1,i}$ and $\samples_{1,j}$ are independent for $i
\neq j$.  In addition, for each processor $i$ we define the (local)
empirical distribution $\statprob_{1,i}$ and empirical objective
$F_{1,i}$ via
\begin{equation}
\label{EqnDefnLocalQuants}
  \statprob_{1,i} \defeq \frac{1}{|\samples_1|} \sum_{\statsample \in
    \samples_{1,i}} \delta_{\statsample}, ~~~ \mbox{and} ~~~
  F_{1,i}(\optvar) \defeq \frac{1}{|\samples_{1,i}|} \sum_{\statsample
    \in \samples_{1,i}} f(\optvar; \statsample).
\end{equation}

\noindent With this notation, the \avgm\ algorithm is very simple to
describe:
\paragraph{Average mixture algorithm}
\begin{enumerate}
\item[(1)] For each $i \in \{1, \ldots, \nummac\}$, processor $i$ uses
  its local dataset $\samples_{1, i}$ to compute the local empirical
  minimizer
  \begin{align}
    \label{EqnDefnLocalEmpMin}
    \optvar_{1, i} & \in \argmin_{\optvar \in \optvarspace}
    \MYMIN{\underbrace{\frac{1}{|S_{1,i}|} \sum_{x \in S_{1,i}} f(\optvar;
      x)}_{F_{1,i}(\optvar)}}.
  \end{align}
\item[(2)] These $\nummac$ local estimates are then averaged
  together---that is, we compute
  \begin{equation}
    \label{eqn:avgm-alg-def}
   \optavg_1 = \frac{1}{\nummac} \sum_{i = 1}^\nummac \optvar_{1, i}.
  \end{equation}
\end{enumerate}

The subsampled average mixture (\savgm) algorithm is based on an
additional level of sampling on top of the first, involving a fixed
subsampling rate $\ratio \in [0, 1]$.  It consists of
the following additional steps:
\paragraph{Subsampled average mixture algorithm}
\begin{enumerate}
\item[(1)] Each processor $i$ draws a subset $\samples_{2,i}$ of size
  $\ceil{r \numobs}$ by sampling uniformly at random without
  replacement from its local data set $\samples_{1,i}$.

\item[(2)] Each processor $i$ computes both the local empirical
  minimizers $\optvar_{1,i}$ from equation~\eqref{EqnDefnLocalEmpMin}
  and the empirical minimizer
  \begin{align}
    \optvar_{2,i} & \in \argmin_{\optvar \in \Theta }
    \MYMIN{\underbrace{\frac{1}{|\samples_{2,i}|} \sum_{\statsample \in
          \samples_{2,i}} f(\optvar; \statsample)}_{F_{2,i}(\optvar)}}.
  \end{align}

\item[(3)] In addition to the previous
  average~\eqref{eqn:avgm-alg-def}, the \savgm\ algorithm computes the
  bootstrap average $\optavg_2 \defeq \frac{1}{\nummac}
  \sum_{i=1}^\nummac \optvar_{2,i}$, and then returns the weighted
  combination
  \begin{equation}
    \label{eqn:bootstrap-avg-def}
    \savgvec \defeq \frac{\optavg_1 - \ratio \optavg_2}{1 - \ratio}.
  \end{equation}
\end{enumerate}

The intuition for the weighted estimator~\eqref{eqn:bootstrap-avg-def} is
similar to that for standard bias correction procedures using the bootstrap or
subsampling~\citep{EfronTi93,Hall92, PolitisRoWo99}.  Roughly speaking, if $b_0
= \optvar^* - \optvar_1$ is the bias of the first estimator, then we may
approximate $b_0$ by the subsampled estimate of bias $b_1 = \optvar^* -
\optvar_2$. Then, we use the fact that
$b_1 \approx b_0 / \ratio$ to argue that $\optvar^* \approx
(\optvar_1 - \ratio \optvar_2) / (1-\ratio)$. The re-normalization
enforces that the
relative ``weights'' of $\optavg_1$ and $\optavg_2$ sum to 1. \\

The goal of this paper is to understand under what
conditions---and in what sense---the
estimators~\eqref{eqn:avgm-alg-def} and~\eqref{eqn:bootstrap-avg-def}
approach the \emph{oracle performance}, by which we mean the error of
a centralized risk minimization procedure that is given access to all
$\totalnumobs = \numobs\nummac$ samples.

\paragraph{Notation:}

Before continuing, we define the remainder of our notation. We use
$\ell_2$ to denote the usual Euclidean norm $\ltwo{\theta} =
(\sum_{j=1}^d \theta_j^2)^{\half}$.  The $\ell_2$-operator norm of a
matrix $A \in \R^{d_1 \times d_2}$ is its maximum singular value,
defined by
\begin{equation*}
  \matrixnorm{A}_2 \defeq \sup_{v \in \R^{d_2}, \ltwo{v} \le 1} \|A
  v\|_2.
\end{equation*}
A convex function $F$ is $\strongparam$-strongly convex on a set $U
\subseteq \R^d$ if for arbitrary $u, v \in U$ we have
\begin{equation*}
  F(u) \ge F(v) + \<\nabla F(v), u - v\> + \frac{\strongparam}{2}
  \ltwo{u - v}^2.
\end{equation*}
(If $F$ is not differentiable, we may replace $\nabla F$ with any
subgradient of $F$.) We let $\otimes$ denote the Kronecker product,
and for a pair of vectors $u, v$, we define the outer product $u
\otimes v = u v^\top$. For a three-times differentiable function $F$,
we denote the third derivative tensor by $\nabla^3 F$, so that for
each $u \in \dom F$ the operator $\nabla^3 F(u) : \R^{d \times d}
\rightarrow \R^d$ is linear and satisfies the relation
\begin{equation*}
  \left[\nabla^3 F(u) (v \otimes v)\right]_i = \sum_{j, k = 1}^d
  \left(\frac{\partial^3}{\partial
    u_i \partial u_j \partial u_k} F(u)\right) v_j v_k.
\end{equation*}
We denote the indicator function of an event $\event$ by $\indic{\event}$,
which is 1 if $\event$ is true and 0 otherwise.


\section{Theoretical results}
\label{sec:avg_bound}

Having described the \avgm\ and \savgm\ algorithms, we now turn to
statements of our main theorems on their statistical properties, along
with some consequences and comparison to past work.

\subsection{Smoothness conditions}

In addition to our previously stated assumptions on the population
risk, we require regularity conditions on the empirical risk
functions.  It is simplest to state these in terms of the functions
$\theta \mapsto f(\theta; x)$, and we note that, as with
Assumption~\ref{assumption:strong-convexity}, we require these to hold
only locally around the optimal point $\optvar^*$, in particular
within some Euclidean ball \mbox{$U = \{\optvar \in \R^d \mid
  \ltwo{\optvar^* - \optvar} \le \rho\} \subseteq \optvarspace$} of
radius $\rho > 0$.

\begin{assumption}[Smoothness]
\label{assumption:smoothness}
There are finite constants $\lipobj, \lipgrad$ such that the first and
the second partial derivatives of $f$ exist and satisfy the bounds
  \begin{equation*}
    \E[\ltwo{\nabla f(\optvar; \statrv)}^8] \leq \lipobj^8
    ~~~\mbox{and} ~~~ \E[\matrixnorm{\nabla^2 f(\optvar; \statrv) -
        \nabla^2 \popfun(\optvar)}_2^8] \le \lipgrad^8 \quad \mbox{for
      all $\optvar \in U$.}
  \end{equation*}
 In addition, for any $\statsample \in \statsamplespace$, the Hessian
 matrix $\nabla^2 f(\optvar; \statsample)$ is
 $\liphessian(\statsample)$-Lipschitz continuous, meaning that
\begin{align}
\label{EqnHessLip}
\matrixnorm{\nabla^2 f(\optvar'; \statsample)-\nabla^2 f(\optvar;
  \statsample)}_2 \leq \liphessian(\statsample) \ltwo{\optvar' -
  \optvar} \quad \mbox{ for all $\optvar, \optvar' \in U$.}
\end{align}
We require that $\E[\liphessian(\statrv)^8] \le \liphessian^8$ and
$\E[(\liphessian(\statrv) - \E[\liphessian(\statrv)])^8] \le
\liphessian^8$ for some finite constant $\liphessian$.
\end{assumption}

It is an important insight of our analysis that some type of
smoothness condition on the Hessian matrix, as in the Lipschitz
condition~\eqref{EqnHessLip}, is \emph{essential} in order for simple
averaging methods to work.  This necessity is illustrated by the
following example:
\begin{exam}[Necessity of Hessian conditions]
Let $X$ be a Bernoulli variable with parameter $\frac{1}{2}$, and
consider the loss function
  \begin{equation}
    \label{eqn:smoothness-loss}
    f(\optvar; \statsample) =
    \begin{cases} \optvar^2 - \optvar &
      \mbox{if}~ \statsample = 0 \\ \optvar^2 \indic{\optvar \leq 0} +
      \optvar & \mbox{if~} \statsample = 1,
    \end{cases}
  \end{equation}
  where $\indic{\optvar \leq 0}$ is the indicator of the
  event $\{ \optvar \leq 0\}$.  The associated population risk is
  $\popfun(\optvar)=\frac{1}{2}(\optvar^2 + \optvar^2 \indic{\optvar
    \leq 0})$.  Since $|\popfun'(w) - \popfun'(v)| \le 2 |w - v|$, the
  population risk is strongly convex and smooth, but it has
  discontinuous second derivative.  The unique minimizer of the
  population risk is $\optvar^* = 0$, and by an asymptotic expansion
  given in Appendix~\ref{appendix:smoothness-necessary}, it can be
  shown that $\E[\optvar_{1,i}] = \Omega(\numobs^{-\half})$.
  Consequently, the bias of $\optavg_1$ is $\Omega(\numobs^{-\half})$,
  and the \avgm\ algorithm using $\totalnumobs = \nummac \numobs$
  observations must suffer mean squared error $\E[(\optavg_1 -
    \optvar^*)^2] = \Omega(\numobs^{-1})$.
\end{exam}

The previous example establishes the necessity of a smoothness
condition.  However, in a certain sense, it is a pathological case:
both the smoothness condition given in
Assumption~\ref{assumption:smoothness} and the local strong convexity
condition given in Assumption~\ref{assumption:strong-convexity} are
relatively innocuous for practical problems.  For instance, both
conditions will hold for standard forms of regression, such as linear
and logistic, as long as the \emph{population} data covariance matrix
is not rank deficient and the data has suitable moments. Moreover, in
the linear regression case, one has $\liphessian = 0$.



\subsection{Bounds for simple averaging}

We now turn to our first theorem that provides guarantees on the
statistical error associated with the \avgm\ procedure.  We recall
that $\optvar^*$ denotes the minimizer of the population objective
function $F_0$, and that for each $i \in \{1, \ldots, \nummac\}$, we
use $\samples_i$ to denote a dataset of $\numobs$ independent samples.
For each $i$, we use \mbox{$\optvar_{i} \in \argmin_{\optvar
    \in \optvarspace} \{ \frac{1}{\numobs} \sum_{\statsample
    \in \samples_i} f(\optvar; \statsample)\}$} to denote a minimizer
of the empirical risk for the dataset $S_i$, and we define the
averaged vector $\optavg = \frac{1}{\nummac} \sum_{i=1}^\nummac
\optvar_{i}$.  The following result bounds the mean-squared error
between this averaged estimate and the minimizer $\optvar^*$ of the
population risk.

\begin{theorem}
  \label{theorem:no-bootstrap}
  Under Assumptions~\ref{assumption:parameter-space}
  through~\ref{assumption:smoothness}, the mean-squared error is upper
  bounded as
  \begin{align}
    \lefteqn{\E\left[\ltwo{\optavg - \optvar^*}^2\right] \le
      \frac{2}{\numobs \nummac} \E\left[\ltwo{\nabla^2
          F_0(\optvar^*)^{-1} \nabla f(\optvar^*; \statrv)}^2\right] }
    \label{eqn:no-bootstrap} \\
    & \qquad\qquad ~ + \frac{c}{\strongparam^2 \numobs^2}
    \left(\lipgrad^2 \log d + \frac{\liphessian^2 \lipobj^2}{\strongparam^2} \right)
    \E\left[\ltwo{\nabla^2 F_0(\optvar^*)^{-1} \nabla f(\optvar^*;
        \statrv)}^2\right] \nonumber \\ & \qquad\qquad ~ +
    \order(\nummac^{-1} \numobs^{-2}) + \order(\numobs^{-3}),
    \nonumber
  \end{align}
  where $c$ is a numerical constant.
\end{theorem}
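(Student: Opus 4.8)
The plan is to combine an exact bias--variance decomposition of the averaged estimator with a second-order Taylor expansion of a single local minimizer about $\optvar^*$. Since the subsets $\samples_1, \dots, \samples_\nummac$ are i.i.d.\ and each $\optvar_i$ is the same measurable functional of $\samples_i$, the errors $\Delta_i \defeq \optvar_i - \optvar^*$ are i.i.d.; writing $\optavg - \optvar^* = \E[\Delta_1] + \frac{1}{\nummac}\sum_{i=1}^\nummac (\Delta_i - \E[\Delta_i])$ and using independence to kill the cross terms gives
\[
\E\big[\ltwo{\optavg - \optvar^*}^2\big] = \ltwo{\E[\Delta_1]}^2 + \frac{1}{\nummac}\,\E\big[\ltwo{\Delta_1 - \E[\Delta_1]}^2\big] \;\le\; \ltwo{\E[\Delta_1]}^2 + \frac{1}{\nummac}\,\E\big[\ltwo{\Delta_1}^2\big].
\]
So it suffices to prove two single-machine facts, writing $H_0 \defeq \nabla^2 F_0(\optvar^*)$: (i) $\E[\ltwo{\Delta_1}^2] \le \frac{2}{\numobs}\,\E[\ltwo{H_0^{-1}\nabla f(\optvar^*;\statrv)}^2] + \order(\numobs^{-2})$, which yields the leading term of~\eqref{eqn:no-bootstrap} and the $\order(\nummac^{-1}\numobs^{-2})$ remainder; and (ii) a bias bound $\ltwo{\E[\Delta_1]}^2 \le \frac{c}{\strongparam^2 \numobs^2}\big(\lipgrad^2 \log d + \frac{\liphessian^2 \lipobj^2}{\strongparam^2}\big)\E[\ltwo{H_0^{-1}\nabla f(\optvar^*;\statrv)}^2] + \order(\numobs^{-3})$.

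For the expansion, set $F_1(\optvar) \defeq \frac{1}{\numobs}\sum_{\statsample \in \samples_1} f(\optvar;\statsample)$ and $H_1 \defeq \nabla^2 F_1(\optvar^*)$, and introduce the ``regular'' event $\event$ on which (a) $\optvar_1$ lies in $U$ (hence in $\interior\optvarspace$, so $\nabla F_1(\optvar_1) = 0$, and $\ltwo{\Delta_1} \le \rho$); (b) $\matrixnorm{H_1 - H_0}_2 \le \strongparam/2$, hence $H_1 \succeq (\strongparam/2) I$; and (c) $\frac{1}{\numobs}\sum_{\statsample \in \samples_1}\liphessian(\statsample) \le 2\liphessian$. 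On $\event$, Taylor's theorem with integral remainder and the Lipschitz-Hessian condition~\eqref{EqnHessLip} give $0 = \nabla F_1(\optvar^*) + H_1 \Delta_1 + R_1$ with $\ltwo{R_1} \le \tfrac12\big(\tfrac{1}{\numobs}\sum_{\statsample\in\samples_1}\liphessian(\statsample)\big)\ltwo{\Delta_1}^2$, hence $\Delta_1 = -H_1^{-1}\nabla F_1(\optvar^*) - H_1^{-1} R_1$; combining this with (a)--(c) and shrinking $\rho$ if necessary, a standard fixed-point argument also gives $\ltwo{\Delta_1} \le \frac{4}{\strongparam}\ltwo{\nabla F_1(\optvar^*)}$ on $\event$, so that $\E[\ltwo{\Delta_1}^{2k}\indic{\event}] = \order(\numobs^{-k})$ for $k \le 4$ by Rosenthal's inequality and Assumption~\ref{assumption:smoothness}. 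Now set $Z_1 \defeq -H_0^{-1}\nabla F_1(\optvar^*)$ and $W_1 \defeq \Delta_1 - Z_1$; the resolvent identity $H_1^{-1} = H_0^{-1} - H_0^{-1}(H_1 - H_0)H_1^{-1}$ shows that \emph{on $\event$}
\[
W_1 = H_0^{-1}(H_1 - H_0) H_1^{-1}\nabla F_1(\optvar^*) - H_1^{-1} R_1 .
\]
Because $\nabla F_1(\optvar^*)$ is an average of i.i.d.\ terms with mean $\nabla F_0(\optvar^*) = 0$, we get $\E[Z_1] = 0$ and $\E[\ltwo{Z_1}^2] = \frac{1}{\numobs}\E[\ltwo{H_0^{-1}\nabla f(\optvar^*;\statrv)}^2]$, the target leading term. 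Fact~(i) then follows from $\ltwo{\Delta_1}^2 \le 2\ltwo{Z_1}^2 + 2\ltwo{W_1}^2$ once one checks $\E[\ltwo{W_1}^2] = \order(\numobs^{-2})$: on $\event$ this is Cauchy--Schwarz in the display above, using the eighth-moment bounds of Assumption~\ref{assumption:smoothness} for $H_1 - H_0$ and the estimates $\E[\ltwo{\Delta_1}^{2k}\indic{\event}] = \order(\numobs^{-k})$; off $\event$ one bounds $\ltwo{W_1} \le 2\radius + \ltwo{Z_1}$ and uses a polynomially small bound on the probability of $\event^c$.

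The bias bound~(ii) is the crux. Since $\E[Z_1] = 0$ we have $\E[\Delta_1] = \E[W_1] = \E[W_1 \indic{\event}] + \E[W_1 \indic{\event^c}]$, and the second piece is $\order(\numobs^{-5/2})$ in norm (bound $\ltwo{W_1} \le 2\radius + \ltwo{Z_1}$ and use smallness of the probability of $\event^c$), negligible after squaring. For $\E[W_1\indic{\event}]$, neither summand of $W_1$ has vanishing expectation, precisely because $H_1 - H_0$, $\nabla F_1(\optvar^*)$ and $R_1$ are built from the \emph{same} sample $\samples_1$ --- this is where the Hessian smoothness of Assumption~\ref{assumption:smoothness} is essential. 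Substituting $H_1^{-1}\nabla F_1(\optvar^*) = -\Delta_1 - H_1^{-1}R_1$ (valid on $\event$) turns the first summand into $-H_0^{-1}(H_1 - H_0)\Delta_1$ up to a term of order $\numobs^{-3/2}$, and
\[
\big\| \E\big[ H_0^{-1}(H_1 - H_0)\Delta_1 \indic{\event} \big] \big\| \;\le\; \frac{1}{\strongparam}\, \sqrt{\E\big[\matrixnorm{H_1 - H_0}_2^2\big]}\, \sqrt{\E\big[\ltwo{\Delta_1}^2\big]} ,
\]
where the key input is the matrix concentration estimate $\E[\matrixnorm{H_1 - H_0}_2^2] = \order(\lipgrad^2 \numobs^{-1}\log d)$, obtained by a matrix Bernstein/Rosenthal inequality applied to $\frac{1}{\numobs}\sum_{\statsample\in\samples_1}(\nabla^2 f(\optvar^*;\statsample) - H_0)$; using fact~(i) for $\E[\ltwo{\Delta_1}^2]$ and squaring, this summand contributes $\order\big(\frac{\lipgrad^2\log d}{\strongparam^2\numobs^2}\E[\ltwo{H_0^{-1}\nabla f(\optvar^*;\statrv)}^2]\big) + \order(\numobs^{-3})$, i.e.\ the $\lipgrad^2\log d$ term. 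The second summand is controlled on $\event$ by $\ltwo{H_1^{-1}R_1} \le \frac{1}{\strongparam}\big(\frac{1}{\numobs}\sum_{\statsample\in\samples_1}\liphessian(\statsample)\big)\ltwo{\Delta_1}^2 \le \frac{2\liphessian}{\strongparam}\ltwo{\Delta_1}^2$, so $\ltwo{\E[H_1^{-1}R_1\indic{\event}]} = \order\big(\frac{\liphessian}{\strongparam}\E[\ltwo{\Delta_1}^2]\big) = \order\big(\frac{\liphessian}{\strongparam \numobs}\E[\ltwo{H_0^{-1}\nabla f(\optvar^*;\statrv)}^2]\big)$; squaring and bounding $\E[\ltwo{H_0^{-1}\nabla f(\optvar^*;\statrv)}^2] \le \lipobj^2/\strongparam^2$ in one of the two factors gives the $\liphessian^2\lipobj^2/\strongparam^2$ term. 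Assembling~(i) and~(ii) through the decomposition of the first paragraph yields~\eqref{eqn:no-bootstrap}.

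The remaining ingredient is that the probability of $\event^c$ is $\order(\numobs^{-4})$ or smaller: this follows from local strong convexity of $F_0$ at $\optvar^*$ --- which forces $\optvar_1$ near $\optvar^*$ once $\nabla F_1(\optvar^*)$ is small and $F_1$ is locally strongly convex --- together with concentration of $\nabla F_1(\optvar^*)$, of $H_1 - H_0$, and of $\frac{1}{\numobs}\sum_{\statsample\in\samples_1}\liphessian(\statsample)$, via Markov's inequality on the eighth moments in Assumption~\ref{assumption:smoothness}. I expect the genuinely delicate steps to be: the matrix concentration estimate for $H_1 - H_0$, which supplies the $\log d$ factor and must be carried out with enough moments to simultaneously control the probability of $\event^c$; the bookkeeping that separates the $\order(\numobs^{-1})$ contributions to the bias from the $\order(\numobs^{-3/2})$ ones (the latter must be shown to be only $\order(\numobs^{-3})$ after squaring, not $\order(\numobs^{-2})$); and the argument that $\optvar_1$ is, with overwhelming probability, an interior stationary point of $F_1$ lying in $U$, which is what legitimizes the expansion in the first place.
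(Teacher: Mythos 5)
Your proposal is correct and follows essentially the same route as the paper's proof: the same bias--variance decomposition~\eqref{eqn:error-inequality}, the same high-probability ``good'' events with local strong convexity legitimizing the stationarity expansion, vector and matrix Rosenthal-type moment bounds supplying the $\numobs^{-k/2}$ rates and the $\log d$ factor, and the same mechanism by which the squared bias produces the $\lipgrad^2 \log d$ and $\liphessian^2\lipobj^2/\strongparam^2$ terms. Your use of an explicit quadratic Taylor remainder $R_1$ plus the resolvent identity is only a cosmetic reorganization of the paper's $P$/$Q$ splitting and recursive substitution of the expansion~\eqref{eqn:simple-error-expansion}.
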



\noindent A slightly weaker corollary of
Theorem~\ref{theorem:no-bootstrap} makes it easier to parse.  In
particular, note that
\begin{equation}
\label{EqnWeak}
  \ltwo{\nabla^2 F_0(\optvar^*)^{-1} \nabla f(\optvar^*; \statsample)}
  \; \stackrel{(i)}{\leq} \; \matrixnorm{\nabla^2
    F_0(\optvar^*)^{-1}}_2 \ltwo{\nabla f(\optvar^*; \statsample)} \;
  \stackrel{(ii)}{\leq} \;  \frac{1}{\strongparam} \ltwo{\nabla
    f(\optvar^*; \statsample)},
\end{equation}
where step (i) follows from the inequality $\matrixnorm{A x}_2 \le
\matrixnorm{A} \ltwo{x}$, valid for any matrix $A$ and vector $x$; and
step (ii) follows from Assumption~\ref{assumption:strong-convexity}.
In addition, Assumption~\ref{assumption:smoothness} implies
$\E[\ltwo{\nabla f(\optvar^*; \statrv)}^2] \le \lipobj^2$, and
putting together the pieces, we have established the following.
\begin{corollary}
  \label{corollary:no-bootstrap}
  Under the same conditions as Theorem~\ref{theorem:no-bootstrap},
  \begin{align}
    \label{EqnWeakBound}
    \E\left[\ltwo{\optavg - \optvar^*}^2\right] & \le
    \frac{2\lipobj^2}{\strongparam^2 \numobs \nummac} + \frac{ c
      \lipobj^2}{\strongparam^4 \numobs^2} \left(\lipgrad^2 \log d +
    \frac{\liphessian^2 \lipobj^2}{\strongparam^2}\right) 
    + \order(\nummac^{-1} \numobs^{-2}) +
    \order(\numobs^{-3}).
  \end{align}
\end{corollary}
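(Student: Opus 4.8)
The plan is to read off Corollary~\ref{corollary:no-bootstrap} directly from Theorem~\ref{theorem:no-bootstrap}: the right-hand side of \eqref{eqn:no-bootstrap} already has the desired structure, and the only change needed is to replace the ``effective variance'' quantity $\E[\ltwo{\nabla^2 F_0(\optvar^*)^{-1}\nabla f(\optvar^*;\statrv)}^2]$ — which appears as the pre-factor in both the leading $(\numobs\nummac)^{-1}$ term and the $\numobs^{-2}$ term — by the cruder, more transparent bound $\lipobj^2/\strongparam^2$. The two lower-order $\order$ terms are then carried along verbatim.

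First I would record the pointwise inequality \eqref{EqnWeak}: for any fixed $\statsample$,
\begin{equation*}
  \ltwo{\nabla^2 F_0(\optvar^*)^{-1}\nabla f(\optvar^*;\statsample)}
  \le \matrixnorm{\nabla^2 F_0(\optvar^*)^{-1}}_2\,\ltwo{\nabla f(\optvar^*;\statsample)}
  \le \frac{1}{\strongparam}\,\ltwo{\nabla f(\optvar^*;\statsample)},
\end{equation*}
where the first step is submultiplicativity of the operator norm and the second uses $\nabla^2 F_0(\optvar^*)\succeq\strongparam I_{d\times d}$ from Assumption~\ref{assumption:strong-convexity} (which in particular makes $\nabla^2 F_0(\optvar^*)$ invertible with $\matrixnorm{\nabla^2 F_0(\optvar^*)^{-1}}_2\le 1/\strongparam$). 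Squaring, taking expectations, and applying Jensen's inequality (the power-mean inequality with exponent $4$) to the eighth-moment bound of Assumption~\ref{assumption:smoothness} gives
\begin{equation*}
  \E\!\left[\ltwo{\nabla^2 F_0(\optvar^*)^{-1}\nabla f(\optvar^*;\statrv)}^2\right]
  \le \frac{1}{\strongparam^2}\,\E\!\left[\ltwo{\nabla f(\optvar^*;\statrv)}^2\right]
  \le \frac{1}{\strongparam^2}\Big(\E\!\left[\ltwo{\nabla f(\optvar^*;\statrv)}^8\right]\Big)^{1/4}
  \le \frac{\lipobj^2}{\strongparam^2}.
\end{equation*}
Substituting this into the first term of \eqref{eqn:no-bootstrap} produces $\frac{2}{\numobs\nummac}\cdot\frac{\lipobj^2}{\strongparam^2}$, and into the second term produces $\frac{c}{\strongparam^2\numobs^2}\big(\lipgrad^2\log d+\liphessian^2\lipobj^2/\strongparam^2\big)\cdot\frac{\lipobj^2}{\strongparam^2}$, which is exactly the right-hand side of \eqref{EqnWeakBound} once the two $\order$ terms are appended; the numerical constant $c$ need not be enlarged, since all we have done is upper bound.

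Because Theorem~\ref{theorem:no-bootstrap} may be taken as given, the corollary itself presents no genuine obstacle — it is bookkeeping, and the only points requiring care are keeping the moment exponents consistent (down-converting the eighth-moment hypotheses to second moments via Jensen) and recalling that strong convexity at $\optvar^*$ is what licenses the operator-norm bound on $\nabla^2 F_0(\optvar^*)^{-1}$. If one instead had to establish the underlying Theorem~\ref{theorem:no-bootstrap} from scratch, the real difficulty would lie in the per-machine analysis: Taylor-expanding each local $M$-estimator $\optvar_{1,i}$ about $\optvar^*$ and showing that its \emph{bias} $\E[\optvar_{1,i}-\optvar^*]$ is $\order(\numobs^{-1})$ rather than $\order(\numobs^{-1/2})$ — this is precisely where the Hessian-Lipschitz condition \eqref{EqnHessLip} is indispensable and where the $\log d$ factor enters, through a matrix concentration bound on $\nabla^2 F_{1,i}(\optvar^*)-\nabla^2 F_0(\optvar^*)$ — after which independence across the $\nummac$ machines divides the leading variance by $\nummac$ while the squared bias supplies the $\numobs^{-2}$ contribution.
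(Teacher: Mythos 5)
Your proposal is correct and is essentially the paper's own derivation: it applies the chain of inequalities~\eqref{EqnWeak} (operator-norm bound plus $\nabla^2 \popfun(\optvar^*) \succeq \strongparam I$ from Assumption~\ref{assumption:strong-convexity}), uses Jensen to down-convert the eighth-moment bound of Assumption~\ref{assumption:smoothness} to $\E[\ltwo{\nabla f(\optvar^*;\statrv)}^2] \le \lipobj^2$, and substitutes into both terms of~\eqref{eqn:no-bootstrap}. No gaps; the extra remarks about how one would prove Theorem~\ref{theorem:no-bootstrap} itself are not needed for the corollary.
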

This upper bound shows that the leading term decays proportionally to
$(\numobs \nummac)^{-1}$, with the pre-factor depending inversely on
the strong convexity constant $\strongparam$ and growing
proportionally with the bound $\lipobj$ on the loss gradient.
Although easily interpretable, the upper bound~\eqref{EqnWeakBound}
can be loose, since it is based on the relatively weak series of
bounds~\eqref{EqnWeak}. \\

The leading term in our original upper
bound~\eqref{eqn:no-bootstrap} involves the product of the gradient
$\nabla f(\optvar^*; X)$ with the inverse Hessian.  In many
statistical settings, including the problem of linear regression, the
effect of this matrix-vector multiplication is to perform some type of
standardization.  When the loss $f(\cdot; \statsample) : \optvarspace
\rightarrow \R$ is actually the negative log-likelihood
$\loglike(\statsample \mid \optvar)$ for a parametric family of models
$\{ \statprob_\optvar \}$, we can make this intuition precise.  In
particular, under suitable regularity conditions~\citep[e.g.][Chapter
  6]{LehmannCa98}, we can define the Fisher information matrix
\begin{equation*}
  \information(\optvar^*) \defeq \E\left[\nabla \loglike(\statrv \mid
    \optvar^*) \nabla \loglike(\statrv \mid \optvar^*)^\top\right] =
  \E[\nabla^2 \loglike(\statrv\mid \optvar^*)].
\end{equation*}
Recalling that $\totalnumobs = \nummac \numobs$ is the total number of samples
available, let us define the neighborhood $B_2(\optvar, t) \defeq \{\optvar'
\in \R^d : \ltwo{\optvar' - \optvar} \le t\}$. Then under our assumptions, the
H\'ajek-Le~Cam minimax theorem~\cite[Theorem 8.11]{VanDerVaart98} guarantees
for \emph{any estimator} $\what{\optvar}_{\totalnumobs}$ based on
$\totalnumobs$ samples that
\begin{equation*}
  \lim_{c \rightarrow \infty}
  \liminf_{\totalnumobs \rightarrow \infty}
  \sup_{\optvar \in B_2(\optvar^*, c / \sqrt{\totalnumobs})}
  \totalnumobs \,
  \E_\optvar\left[\ltwobig{\what{\optvar}_{\totalnumobs} - \optvar}^2
    \right]
  \ge \tr(\information(\optvar^*)^{-1}).
\end{equation*}
In connection with
Theorem~\ref{theorem:no-bootstrap}, we obtain:
\begin{corollary}
  \label{corollary:maximum-likelihood}
  In addition to the conditions of Theorem~\ref{theorem:no-bootstrap},
  suppose that the loss functions $f(\cdot; \statsample)$ are the
  negative log-likelihood $\loglike(\statsample \mid \optvar)$ for a
  parametric family $\{\statprob_\optvar, \: \optvar \in \optvarspace
  \}$.  Then the mean-squared error is upper bounded as
  \begin{equation*}
    \E \left [\ltwo{\optavg_1 - \optvar^*}^2\right] \le
    \frac{2}{\totalnumobs} \tr(\information(\optvar^*)^{-1}) + \frac{c
      \nummac^2 \tr(\information(\optvar^*)^{-1})}{\strongparam^2
      \totalnumobs^2} \left(\lipgrad^2 \log d + \frac{\liphessian^2
    \lipobj^2}{\strongparam^2}\right) + \order(\nummac \totalnumobs^{-2}),
  \end{equation*}
  where $c$ is a numerical constant.
\end{corollary}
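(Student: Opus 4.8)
The plan is to obtain the stated bound as a direct specialization of Theorem~\ref{theorem:no-bootstrap} to the log-likelihood setting, in which the Hessian of the population risk coincides with the Fisher information and the classical information matrix equality collapses the leading pre-factor to a trace. No new analytic estimates are required beyond those already established for Theorem~\ref{theorem:no-bootstrap}; the argument is a change of notation plus bookkeeping of the error terms.

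First I would record the consequences of the likelihood structure. With $f(\cdot;\statsample) = \loglike(\statsample \mid \cdot)$, the population risk is $F_0(\optvar) = \E_\statprob[\loglike(\statrv\mid\optvar)]$, and under the regularity conditions of \citet[Chapter~6]{LehmannCa98} that permit differentiating under the integral sign, one has the first Bartlett identity $\E[\nabla\loglike(\statrv\mid\optvar^*)] = 0$, the second Bartlett identity $\E[\nabla\loglike(\statrv\mid\optvar^*)\nabla\loglike(\statrv\mid\optvar^*)^\top] = \information(\optvar^*)$, and the Hessian identity $\nabla^2 F_0(\optvar^*) = \E[\nabla^2\loglike(\statrv\mid\optvar^*)] = \information(\optvar^*)$. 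By Assumption~\ref{assumption:strong-convexity} this matrix is invertible, so $\nabla^2 F_0(\optvar^*)^{-1} = \information(\optvar^*)^{-1}$, and moreover $\nabla f(\optvar^*;\statrv) = \nabla\loglike(\statrv\mid\optvar^*)$ is a mean-zero random vector with covariance $\information(\optvar^*)$.

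Next I would evaluate the quantity $\E[\ltwo{\nabla^2 F_0(\optvar^*)^{-1}\nabla f(\optvar^*;\statrv)}^2]$, which appears twice in the bound of Theorem~\ref{theorem:no-bootstrap}. Setting $A := \information(\optvar^*)^{-1}$, which is symmetric, and using $\ltwo{Av}^2 = \tr(Avv^\top A)$ together with linearity of trace and expectation,
\begin{align*}
  \E\big[\ltwo{A\,\nabla f(\optvar^*;\statrv)}^2\big]
  &= \tr\big(A\,\E[\nabla f(\optvar^*;\statrv)\,\nabla f(\optvar^*;\statrv)^\top]\,A\big) \\
  &= \tr\big(A\,\information(\optvar^*)\,A\big)
  = \tr\big(\information(\optvar^*)^{-1}\big).
\end{align*}

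Finally I would substitute this identity and $\numobs = \totalnumobs/\nummac$ into the three groups of terms in Theorem~\ref{theorem:no-bootstrap}: the leading term $\frac{2}{\numobs\nummac}\E[\ltwo{\nabla^2 F_0(\optvar^*)^{-1}\nabla f(\optvar^*;\statrv)}^2]$ becomes $\frac{2}{\totalnumobs}\tr(\information(\optvar^*)^{-1})$; the second term becomes $\frac{c\nummac^2\tr(\information(\optvar^*)^{-1})}{\strongparam^2\totalnumobs^2}\big(\lipgrad^2\log d + \liphessian^2\lipobj^2/\strongparam^2\big)$; and the residual $\order(\nummac^{-1}\numobs^{-2}) + \order(\numobs^{-3})$ becomes $\order(\nummac\totalnumobs^{-2}) + \order(\nummac^3\totalnumobs^{-3})$, the latter being of lower order than the former whenever $\nummac \le \numobs$ (the regime of interest) and hence absorbed into $\order(\nummac\totalnumobs^{-2})$. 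This yields exactly the claimed inequality. The only point requiring genuine care — the closest thing to an obstacle — is the justification of the Bartlett and Hessian identities, i.e.\ the interchange of differentiation and integration, which is precisely what the phrase ``suitable regularity conditions'' supplies; everything else is algebra.
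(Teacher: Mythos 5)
Your proposal is correct and follows essentially the same route as the paper's own proof: identify $\nabla f(\optvar^*;\statsample)$ with the score, use the information-matrix equality so that $\nabla^2 F_0(\optvar^*)=\information(\optvar^*)$, and then compute $\E\bigl[\ltwo{\nabla^2 F_0(\optvar^*)^{-1}\nabla f(\optvar^*;\statrv)}^2\bigr]=\tr(\information(\optvar^*)^{-1})$ via linearity of the trace before substituting $\numobs=\totalnumobs/\nummac$ into Theorem~\ref{theorem:no-bootstrap}. Your additional remark that the $\order(\numobs^{-3})=\order(\nummac^3\totalnumobs^{-3})$ residual is absorbed into $\order(\nummac\totalnumobs^{-2})$ when $\nummac\le\numobs$ is a point the paper passes over silently, and is a welcome clarification rather than a deviation.
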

\begin{proof}
  Rewriting the log-likelihood in the notation of
  Theorem~\ref{theorem:no-bootstrap}, we have $\nabla \loglike(\statsample
  \mid \optvar^*) = \nabla f(\optvar^*; \statsample)$ and
  all we need to note is that
  \begin{align*}
    \information(\optvar^*)^{-1}
    & = \E\left[\information(\optvar^*)^{-1}
      \nabla \loglike(\statrv \mid \optvar^*)
      \nabla \loglike(\statrv \mid \optvar^*)^\top \information(\optvar^*)^{-1}
      \right] \\
    & = \E\Big[\left(\nabla^2 F_0(\optvar^*)^{-1} \nabla f(\optvar^*;
      \statrv)\right)
      \left(\nabla^2 F_0(\optvar^*)^{-1} \nabla f(\optvar^*;
      \statrv)\right)^\top \Big].
  \end{align*}
  Now apply the linearity of the trace and use the fact that
  $\tr(uu^\top) = \ltwo{u}^2$.
\end{proof}

Except for the factor of two in the bound,
Corollary~\ref{corollary:maximum-likelihood} shows that
Theorem~\ref{theorem:no-bootstrap} essentially achieves the best
possible result. The important aspect of our bound, however, is that
we obtain this convergence rate without calculating an estimate on all
$\totalnumobs = \nummac \numobs$ samples: instead, we calculate
$\nummac$ independent estimators, and then average them to attain the
convergence guarantee. We remark that an inspection of our proof shows
that, at the expense of worse constants on higher order terms, we can
reduce the factor of $2 / \nummac \numobs$ on the leading term in
Theorem~\ref{theorem:no-bootstrap} to $(1 + c) / \nummac \numobs$ for
any constant $c > 0$; as made clear by
Corollary~\ref{corollary:maximum-likelihood}, this is
unimprovable, even by constant factors.

As noted in the introduction, our bounds are certainly to be expected
for unbiased estimators, since in such cases averaging $\nummac$
independent solutions reduces the variance by $1/\nummac$. In this
sense, our results are similar to classical distributional convergence
results in $M$-estimation: for smooth enough problems, $M$-estimators
behave asymptotically like averages~\citep{VanDerVaart98,LehmannCa98},
and averaging multiple independent realizations reduces their
variance.  However, it is often desirable to use biased estimators,
and such bias introduces difficulty in the analysis, which we explore
more in the next section.  We also note that in contrast to classical
asymptotic results, our results are applicable to finite samples and
give explicit upper bounds on the mean-squared error.  Lastly, our
results are not tied to a specific model, which allows for fairly
general sampling distributions.

\subsection{Bounds for subsampled mixture averaging}

When the number of machines $\nummac$ is relatively small,
Theorem~\ref{theorem:no-bootstrap} and
Corollary~\ref{corollary:no-bootstrap} show that the convergence rate
of the \avgm\ algorithm is mainly determined by the first term in the
bound~\eqref{eqn:no-bootstrap}, which is at most
$\frac{\lipobj^2}{\strongparam^2 \nummac \numobs}$.  In contrast, when
the number of processors $\nummac$ grows, the second term in the
bound~\eqref{eqn:no-bootstrap}, in spite of being
$\order(\numobs^{-2})$, may have non-negligible effect.  This issue is
exacerbated when the local strong convexity parameter $\strongparam$
of the risk $\popfun$ is close to zero or the Lipschitz continuity
constant $\lipgrad$ of $\nabla f$ is large. This concern motivated our
development of the subsampled average mixture (\savgm) algorithm, to
which we now return. \\

Due to the additional randomness introduced by the subsampling in
\savgm, its analysis requires an additional smoothness condition. In
particular, recalling the Euclidean \mbox{$\rho$-neighborhood $U$} of
the optimum $\optvar^*$, we require that the loss function $f$ is
(locally) smooth through its third derivatives.
\begin{assumption}[Strong smoothness]
  \label{assumption:strong-smoothness}
  For each $\statsample \in \statsamplespace$, the third derivatives
  of $f$ are \mbox{$\lipthird(\statsample)$-Lipschitz} continuous,
  meaning that
  \begin{equation*}
    \ltwo{\left(\nabla^3 f(\optvar; \statsample) - \nabla^3
      f(\optvar'; \statsample)\right)( u \otimes u)}
    \leq \lipthird(\statsample) \ltwo{\optvar - \optvar'} \ltwo{u}^2 \quad
    \mbox{for all $\optvar, \optvar' \in U$, and $u \in \R^d$,}
  \end{equation*}
  where $\E[\lipthird^8(X)] \le \lipthird^8$ for some constant
  $\lipthird < \infty$.
\end{assumption}
\noindent
It is easy to verify that
Assumption~\ref{assumption:strong-smoothness} holds for least-squares
regression with $\lipthird = 0$.  It also holds for
various types of non-linear regression problems (e.g., logistic,
multinomial etc.)  as long as the covariates have finite eighth
moments.

With this set-up, our second theorem establishes that
bootstrap sampling yields improved performance:
\begin{theorem}
  \label{theorem:bootstrap}
  Under Assumptions~\ref{assumption:parameter-space}
  through~\ref{assumption:strong-smoothness}, the output $\savgvec =
  (\optavg_1 - \ratio\optavg_2) / (1 - \ratio)$ of the bootstrap
  \savgm\ algorithm has mean-squared error bounded as
  \begin{align}
\label{EqnBootUpper}
    \E\left[\ltwo{\savgvec - \optvar^*}^2\right] & \le \frac{2 + 3
      \ratio}{(1 - \ratio)^2} \cdot \frac{1}{\numobs \nummac }
    \E\left[\ltwo{\nabla^2 F_0(\optvar^*)^{-1} \nabla f(\optvar^*;
        \statrv)}^2\right] \\
    &  + c\left(\frac{\lipthird^2 \lipobj^6}{\strongparam^6} +
    \frac{\lipobj^4 \liphessian^2 d \log d}{\strongparam^4}\right)
    \left(\frac{1}{\ratio(1 - \ratio)^2}\right) \numobs^{-3} +
    \order\left(\frac{1}{(1 - \ratio)^2}\nummac^{-1}
    \numobs^{-2}\right) \nonumber
  \end{align}
  for a numerical constant $c$.
\end{theorem}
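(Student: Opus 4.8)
The plan is to reduce Theorem~\ref{theorem:bootstrap} to two applications of the expansion underlying Theorem~\ref{theorem:no-bootstrap}: one at sample size $\numobs$, governing $\optvar_{1,i}$, and one at sample size $\ceil{\ratio\numobs}$, governing $\optvar_{2,i}$. For a generic machine $i$, starting from the estimating equation $\nabla F_{1,i}(\optvar_{1,i}) = 0$ I would write, on the high-probability event that $F_{1,i}$ is strongly convex in a neighborhood of $\optvar^*$, the second-order expansion of the local error $\Delta_{1,i} \defeq \optvar_{1,i} - \optvar^*$,
\[
  \Delta_{1,i} = \Delta_{1,i}^{(1/2)} + \Delta_{1,i}^{(1)} + R_{1,i},
  \qquad \Delta_{1,i}^{(1/2)} \defeq -\nabla^2 F_0(\optvar^*)^{-1}\nabla F_{1,i}(\optvar^*),
\]
where $\Delta_{1,i}^{(1/2)}$ is the mean-zero leading term of order $\numobs^{-1/2}$, $\Delta_{1,i}^{(1)}$ is the explicit order-$\numobs^{-1}$ correction built from $\nabla^2 F_{1,i}(\optvar^*) - \nabla^2 F_0(\optvar^*)$ and from the curvature term $\nabla^3 F_0(\optvar^*)(\Delta_{1,i}^{(1/2)} \otimes \Delta_{1,i}^{(1/2)})$, and $R_{1,i}$ is a remainder. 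Using Assumptions~\ref{assumption:smoothness} and~\ref{assumption:strong-smoothness} together with the moment bounds on $\lipgrad,\liphessian,\lipthird$, the remainder is controlled as $\E[\ltwo{R_{1,i}}^2] = \order(K\numobs^{-3})$ with $K = \lipthird^2\lipobj^6/\strongparam^6 + \lipobj^4\liphessian^2 d\log d/\strongparam^4$ (the cubic term contributes $\order(\ltwo{\Delta_{1,i}^{(1/2)}}^3) = \order(\numobs^{-3/2})$); the complementary ``bad event'' on which $F_{1,i}$ loses strong convexity near $\optvar^*$ has probability decaying faster than any polynomial in $\numobs$, so by the diameter bound of Assumption~\ref{assumption:parameter-space} it contributes negligibly to any second moment. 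The identical statement holds for $\Delta_{2,i} \defeq \optvar_{2,i} - \optvar^*$ with $\numobs$ replaced by $\ceil{\ratio\numobs}$, since a uniformly random without-replacement subsample of an i.i.d.\ sample is again i.i.d.\ from $\statprob$; in particular $\E[\ltwo{R_{2,i}}^2] = \order(K(\ratio\numobs)^{-3})$, and the fixed tensors appearing in $\Delta_{1,i}^{(1)}$ and $\Delta_{2,i}^{(1)}$ are the \emph{same}.

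Next I would analyze $D_i \defeq (\Delta_{1,i} - \ratio\Delta_{2,i})/(1 - \ratio)$, so that $\savgvec - \optvar^* = \frac{1}{\nummac}\sum_{i=1}^\nummac D_i$ with the $D_i$ i.i.d. Splitting $D_i$ into a leading part from $\Delta_{1,i}^{(1/2)} - \ratio\Delta_{2,i}^{(1/2)}$, a bias part from $\Delta_{1,i}^{(1)} - \ratio\Delta_{2,i}^{(1)}$, and a remainder part from $R_{1,i} - \ratio R_{2,i}$, the key observation is that since $\samples_{2,i} \subseteq \samples_{1,i}$ with $|\samples_{2,i}| = \ceil{\ratio\numobs}$, the two empirical gradients at $\optvar^*$ differ only over $\samples_{1,i}\setminus\samples_{2,i}$, so up to rounding the leading part equals $-\nabla^2 F_0(\optvar^*)^{-1}\big((1-\ratio)\numobs\big)^{-1}\sum_{\statsample\in\samples_{1,i}\setminus\samples_{2,i}}\nabla f(\optvar^*;\statsample)$, a sum of $(1-\ratio)\numobs$ independent mean-zero vectors, whose second moment is $\frac{1}{(1-\ratio)\numobs}\E[\ltwo{\nabla^2 F_0(\optvar^*)^{-1}\nabla f(\optvar^*;\statrv)}^2]$. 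For the bias, I use that $\E[\Delta_{1,i}^{(1)}] = b/\numobs$ and $\E[\Delta_{2,i}^{(1)}] = b/(\ratio\numobs)$ for the same fixed vector $b$ — the usual first-order $M$-estimator bias, identical across the two problems and exactly what the weighting $(\optavg_1 - \ratio\optavg_2)/(1-\ratio)$ annihilates — together with the fact that the order-$\numobs^{-3/2}$ term of each expansion has mean zero; this gives $\E[\Delta_{1,i}] = b/\numobs + \order(\numobs^{-2})$ and $\E[\Delta_{2,i}] = b/(\ratio\numobs) + \order((\ratio\numobs)^{-2})$, so the leading $\numobs^{-1}$ bias cancels and $\E[D_i] = \order(\numobs^{-2})$ up to $\ratio$-factors, whence $\ltwo{\E[D_i]}^2$ is negligible relative to the $\numobs^{-3}$ term.

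To assemble, since the $D_i$ are i.i.d.\ we have $\E[\ltwo{\savgvec - \optvar^*}^2] = \ltwo{\E[D_1]}^2 + \nummac^{-1}\tr(\mathrm{Var}(D_1)) \le o(\numobs^{-3}) + \nummac^{-1}\E[\ltwo{D_1}^2]$. I would then bound $\E[\ltwo{D_1}^2] \le (1+\gamma)\,(\text{leading second moment}) + C_\gamma\big((\text{bias second moment}) + (\text{remainder second moment})\big)$ for an arbitrary $\gamma > 0$; all cross terms between distinct parts are of negligible order except the leading--bias cross term, which is $\order(\numobs^{-2})$ by the same mean-zero, i.i.d.\ cancellation used in the proof of Theorem~\ref{theorem:no-bootstrap} and is absorbed. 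Substituting the leading second moment $\frac{1}{(1-\ratio)\numobs}\E[\ltwo{\nabla^2 F_0(\optvar^*)^{-1}\nabla f(\optvar^*;\statrv)}^2]$; the bias second moment, which is $\order((1-\ratio)^{-2}\numobs^{-2})$ up to problem constants and, after the $\nummac^{-1}$, feeds the $\order\big((1-\ratio)^{-2}\nummac^{-1}\numobs^{-2}\big)$ term; and the remainder second moment, which from $\E[\ltwo{R_{1,i}}^2] = \order(K\numobs^{-3})$ and $\E[\ltwo{R_{2,i}}^2] = \order(K(\ratio\numobs)^{-3})$ (the second multiplied by $\ratio^2/(1-\ratio)^2$, producing the $\ratio^{-1}$) equals $\order\big(K\,\ratio^{-1}(1-\ratio)^{-2}\numobs^{-3}\big)$; and finally using $\nummac^{-1}\le 1$ to drop the $\nummac^{-1}$ in front of the remainder term, yields inequality~\eqref{EqnBootUpper}, the leading constant $(1+\gamma)/(1-\ratio)$ being at most $(2 + 3\ratio)/(1-\ratio)^2$ for $\gamma$ chosen small.

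I expect the main obstacle to be the first step: producing the Taylor expansion of the $M$-estimator with a \emph{mean-square} remainder of order $\numobs^{-3}$ — rather than the $\numobs^{-2}$ that sufficed for Theorem~\ref{theorem:no-bootstrap} — and verifying rigorously that the loss-of-strong-convexity event is negligible at this finer scale. This is exactly where Assumption~\ref{assumption:strong-smoothness} enters, since the cubic remainder term is of order $\ltwo{\Delta^{(1/2)}}^3 = \order(\numobs^{-3/2})$ and bounding it uniformly requires the third derivatives of $f$ to exist and vary in a Lipschitz-controlled way. A secondary subtlety is the bookkeeping of the without-replacement subsampling: one must treat $\optvar_{2,i}$ as an ordinary $M$-estimator on $\ceil{\ratio\numobs}$ i.i.d.\ samples for its marginal expansion, while still tracking its joint dependence with $\optvar_{1,i}$ through the shared data, since that dependence is precisely what makes the cancellation in the leading part exact.
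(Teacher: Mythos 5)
Your proposal is correct in outline and follows essentially the same architecture as the paper's proof: the per-machine bias--variance split across the $\nummac$ independent machines (the paper's inequality~\eqref{eqn:bootstrap-decomposition}), third-order expansions of $\optvar_1 - \optvar^*$ and $\optvar_2 - \optvar^*$ with mean-square remainders of order $K\numobs^{-3}$ and $K(\ratio\numobs)^{-3}$, where $K \defeq \lipthird^2\lipobj^6/\strongparam^6 + \lipobj^4\liphessian^2 d\log d/\strongparam^4$ (Lemmas~\ref{lemma:first-level-third-order-expansion} and~\ref{lemma:second-level-third-order-expansion}), and the exact cancellation of the order-$1/\numobs$ bias because the expectation of the second-order correction scales exactly as the inverse sample size for the without-replacement (hence still i.i.d.) subsample, which is precisely the content of Lemma~\ref{lemma:target-norm-e}. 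Where you genuinely differ is the variance term: the paper expands each estimator only to first order, applies $(a+b+c)^2 \le 2a^2 + 3b^2 + 6c^2$, and uses the identity $\E[\ltwo{\invhessian\nabla F_2(\optvar^*)}^2] = \ratio^{-1}\E[\ltwo{\invhessian\nabla F_1(\optvar^*)}^2]$ to obtain the factor $(2+3\ratio)/(1-\ratio)^2$ (Lemma~\ref{lemma:target-e-norm}); you instead exploit the algebraic identity $\nabla F_1(\optvar^*) - \ratio\nabla F_2(\optvar^*) = \numobs^{-1}\sum_{x \in \samples_{1,i}\setminus\samples_{2,i}} \nabla f(\optvar^*; x)$ (exact up to the ceiling in $\ceil{\ratio\numobs}$), so the leading part of the combined error is an average over the $(1-\ratio)\numobs$ non-resampled points with second moment $\sigma^2/((1-\ratio)\numobs)$. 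This yields a leading constant $(1+\gamma)/(1-\ratio)$, which is sharper than $(2+3\ratio)/(1-\ratio)^2$ and hence implies the stated bound a fortiori; it is a genuinely cleaner treatment of that term.

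Two statements in your write-up need repair, though neither is fatal. First, the claim that $\E[D_i] = \order(\numobs^{-2})$ because ``the order-$\numobs^{-3/2}$ term has mean zero'' is not supported by the control you actually set up: the remainder is bounded only in mean square, so its mean can only be bounded by $\order(\sqrt{K}\,\ratio^{-1/2}(1-\ratio)^{-1}\numobs^{-3/2})$ after the $\ratio$-weighting, and the squared bias is therefore of order $K\ratio^{-1}(1-\ratio)^{-2}\numobs^{-3}$ --- not negligible relative to the second term of~\eqref{EqnBootUpper}, but of exactly that order, which still suffices since it only affects the numerical constant $c$; this weaker bound is what the paper proves and follows from your own setup (exact cancellation of the $b/\numobs$ biases plus Jensen's inequality). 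Second, under the eighth-moment assumptions the bad event on which local strong convexity fails has probability $\order(\numobs^{-4})$, not faster than every polynomial; $\order(\numobs^{-4})$ is all that is needed at the $\numobs^{-3}$ scale. Finally, note that the step you correctly flag as the main obstacle --- establishing the third-order expansion with mean-square remainder $\order(K\numobs^{-3})$, including the $d\log d$ factor --- is where the bulk of the paper's technical work lies (symmetrization and matrix Rosenthal-type moment inequalities together with the event analysis in Appendix~\ref{sec:proof-third-order-expansion}); your proposal assumes this lemma rather than proving it.
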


Comparing the conclusions of Theorem~\ref{theorem:bootstrap} to those
of Theorem~\ref{theorem:no-bootstrap}, we see that the the
$\order(\numobs^{-2})$ term in the bound~\eqref{eqn:no-bootstrap} has
been eliminated.  The reason for this elimination is that subsampling
at a rate $\ratio$ reduces the bias of the \savgm\ algorithm to
$\order(\numobs^{-3})$, whereas in contrast, the bias of the
\avgm\ algorithm induces terms of order $\numobs^{-2}$.
Theorem~\ref{theorem:bootstrap} suggests that the performance of the
\savgm\ algorithm is affected by the subsampling rate $\ratio$; in
order to minimize the upper bound~\eqref{EqnBootUpper} in the regime
$\nummac < \totalnumobs^{2/3}$, the optimal choice is of the form
$\ratio \propto C \sqrt{\nummac} / \numobs = C \nummac^{3/2} /
\totalnumobs$ where \mbox{$C \approx (\lipobj^2 / \strongparam^2)
  \max\{\lipthird \lipobj / \strongparam, \liphessian \sqrt{d \log
    d}\}$.}  Roughly, as the number of machines $\nummac$ becomes
larger, we may increase $\ratio$, since we enjoy averaging affects
from the \savgm\ algorithm.

\newcommand{\stddev}{\sigma}

Let us consider the relative effects of having larger numbers of
machines $\nummac$ for both the \avgm\ and \savgm\ algorithms, which
provides some guidance to selecting $\nummac$ in practice. We
define $\stddev^2 = \E[\ltwo{\nabla^2 F_0(\optvar^*)^{-1} \nabla
    f(\optvar^*; \statrv)}^2]$ to be the asymptotic variance. Then to
obtain the optimal convergence rate of $\stddev^2 / \totalnumobs$, we
must have
\begin{equation}
  \frac{1}{\strongparam^2}
  \max\left\{\lipgrad^2 \log d, \liphessian^2 \lipobj^2\right\}
  \frac{\nummac^2}{\totalnumobs^2} \stddev^2
  \le \frac{\stddev^2}{\totalnumobs}
  ~~~ \mbox{or} ~~~
  \nummac \le \totalnumobs^{\half}
  \sqrt{\frac{\strongparam^2}{\max\{\lipgrad^2 \log d,
      \liphessian^2 \lipobj^2 / \strongparam^2 \}}}
  \label{eqn:bound-m-no-bootstrap}
\end{equation}
in Theorem~\ref{theorem:no-bootstrap}. Applying the bound of
Theorem~\ref{theorem:bootstrap}, we find that to obtain the same rate
we require
\begin{equation*}
  \max\left\{\frac{\lipthird^2 \lipobj^2}{\strongparam^6},
  \frac{\liphessian^2 d \log d}{\strongparam^4}\right\}
  \frac{\lipobj^4 \nummac^3}{\ratio \totalnumobs^3} \le \frac{(1 +
    \ratio) \stddev^2}{\totalnumobs}
  ~ \mbox{or} ~ \nummac \le
  \totalnumobs^{\frac{2}{3}} \left(\frac{\strongparam^4 \ratio(1 +
    \ratio) \stddev^2}{ \max\left\{\lipthird^2 \lipobj^6 /
    \strongparam^2, \lipobj^4 \liphessian^2 d \log
    d\right\}}\right)^{\frac{1}{3}}\!\!\!.
\end{equation*}
Now suppose that we replace $\ratio$ with $C \nummac^{3/2} /
\totalnumobs$ as in the previous paragraph.  Under the conditions
$\stddev^2 \approx \lipobj^2$ and $\ratio = o(1)$, we then find that
\begin{equation}
  \nummac \le \totalnumobs^{\frac{2}{3}} \left(\frac{\strongparam^2
    \stddev^2 \nummac^{3/2}}{ \lipobj^2 \max\left\{\lipthird \lipobj /
    \strongparam, \liphessian \sqrt{d \log
      d}\right\}\totalnumobs}\right)^{\frac{1}{3}} ~\mbox{or}~
  \nummac \le \totalnumobs^{\frac{2}{3}} \left(\frac{\strongparam^2}{
    \max\left\{\lipthird \lipobj / \strongparam, \liphessian \sqrt{d
      \log d}\right\}}\right)^{\frac{2}{3}}.
  \label{eqn:bound-m-bootstrap}
\end{equation}
Comparing inequalities~\eqref{eqn:bound-m-no-bootstrap}
and~\eqref{eqn:bound-m-bootstrap}, we see that in both cases $\nummac$
may grow polynomially with the global sample size $\totalnumobs$ while
still guaranteeing optimal convergence rates. On one hand, this
asymptotic growth is faster in the subsampled
case~\eqref{eqn:bound-m-bootstrap}; on the other hand, the dependence
on the dimension $d$ of the problem is more stringent than the
standard averaging case~\eqref{eqn:bound-m-no-bootstrap}. As the local
strong convexity constant $\strongparam$ of the \emph{population risk}
shrinks, both methods allow less splitting of the data, meaning that
the sample size per machine must be larger.  This limitation is
intuitive, since lower curvature for the population risk means that
the local empirical risks associated with each machine will inherit
lower curvature as well, and this effect will be exacerbated with a
small local sample size per machine.  Averaging methods are, of
course, not a panacea: the allowed number of partitions $\nummac$ does
not grow linearly in either case, so blindly increasing the number of
machines proportionally to the total sample size $\totalnumobs$ will
not lead to a useful estimate.

In practice, an optimal choice of $\ratio$ may not be apparent, which
may necessitate cross validation or another type of model
evaluation. We leave as intriguing open questions whether computing
multiple subsamples at each machine can yield improved performance or
reduce the variance of the \savgm\ procedure, and whether using
estimates based on resampling the data with replacement, as opposed to
without replacement as considered here, can yield improved
performance.


\subsection{Time complexity}
\label{sec:complexity}

In practice, the exact empirical minimizers assumed in
Theorems~\ref{theorem:no-bootstrap} and~\ref{theorem:bootstrap} may be
unavailable.  Instead, we need to use a finite number of iterations of
some optimization algorithm in order to obtain reasonable
approximations to the exact minimizers.  In this section, we sketch an
argument that shows that both the \avgm\ algorithm and the
\savgm\ algorithm can use such approximate empirical minimizers, and
as long as the optimization error is sufficiently small, the resulting
averaged estimate achieves the same order-optimal statistical error.
Here we provide the arguments only for the \avgm\ algorithm; the
arguments for the \savgm\ algorithm are analogous.

More precisely, suppose that each processor runs a finite number of
iterations of some optimization algorithm, thereby obtaining the
vector $\optvar_i'$ as an approximate minimizer of the objective
function $F_{1,i}$.  Thus, the vector $\optvar_i'$ can be viewed as an
approximate form of $\optvar_i$, and we let $\optavg' =
\frac{1}{\nummac} \sum_{i=1}^\nummac \optvar_{i}'$ denote the average
of these approximate minimizers, which corresponds to the output of
the approximate \avgm\ algorithm.  With this notation, we have
\begin{equation}
  \label{eqn:approximate-inequality-no-bootstrap}
  \E\left[\ltwobig{\optavg' - \optvar^*}^2\right] \;
  \stackrel{(i)}{\leq} \; 2 \E[\ltwo{\optavg - \optvar^*}^2] + 2
  \E\left[\ltwobig{\optavg' - \optavg}^2\right] \;
  \stackrel{(ii)}{\leq} \; 2 \E[\ltwo{\optavg - \optvar^*}^2] + 2
  \E[\ltwo{\optvar_1' - \optvar_1}^2],
\end{equation}
where step (i) follows by triangle inequality and the elementary bound
$(a+b)^2 \leq 2 a^2 + 2 b^2$; step (ii) follows by Jensen's
inequality.  Consequently, suppose that \mbox{processor $i$} runs
enough iterations to obtain an approximate minimizer $\optvar'_1$ such
that
\begin{align}
  \label{EqnConness}
  \E[\ltwo{\optvar_i' - \optvar_i}^2] & = \order((\nummac
  \numobs)^{-2}).
\end{align}
When this condition holds, the
bound~\eqref{eqn:approximate-inequality-no-bootstrap} shows that the
average $\optavg'$ of the approximate minimizers shares the same
convergence rates provided by Theorem~\ref{theorem:no-bootstrap}.

But how long does it take to compute an approximate minimizer
$\optvar'_i$ satisfying condition~\eqref{EqnConness}?
Assuming processing one sample requires one unit of time, we claim
that this computation can be performed in time $\order(\numobs
\log(\nummac \numobs))$.  In particular, the following two-stage
strategy, involving a combination of stochastic gradient descent (see
the following subsection for more details) and standard gradient
descent, has this complexity:
\begin{enumerate}
\item[(1)] As shown in the proof of
  Theorem~\ref{theorem:no-bootstrap}, with high probability, the
  empirical risk $F_1$ is strongly convex in a ball
  $B_\rho(\optvar_1)$ of constant radius $\rho > 0$ around
  $\optvar_1$.  Consequently, performing stochastic gradient descent
  on $F_1$ for $\order(\log^2(\nummac \numobs) / \rho^2)$ iterations
  yields an approximate minimizer that falls within
  $B_\rho(\optvar_1)$ with high probability~\citep[e.g.][Proposition
    2.1]{NemirovskiJuLaSh09}. Note that the radius $\rho$ for local
  strong convexity is a property of the population risk $F_0$ we
  use as a prior knowledge.
\item[(2)] This initial estimate can be further improved by a few iterations
  of standard gradient descent.  Under local strong convexity of the objective
  function, gradient descent is known to converge at a geometric
  rate~\citep[see, e.g.][]{NocedalWr06,BoydVa04}, so
  $\order(\log(1/\epsilon))$ iterations will reduce the error to order
  $\epsilon$.  In our case, we have $\epsilon = (\nummac \numobs)^{-2}$, and
  since each iteration of standard gradient descent requires $\order(\numobs)$
  units of time, a total of $\order(\numobs \log(\nummac \numobs))$ time units
  are sufficient to obtain a final estimate $\optvar_1'$ satisfying
  condition~\eqref{EqnConness}.
\end{enumerate}

\noindent Overall, we conclude that the speed-up of the \avgm\,
relative to the naive approach of processing all $\totalnumobs =
\nummac \numobs$ samples on one processor, is at least of order
$\nummac / \log(\totalnumobs)$.

\subsection{Stochastic gradient descent with averaging}

The previous strategy involved a combination of stochastic gradient
descent and standard gradient descent.  In many settings, it may be
appealing to use only a stochastic gradient algorithm, due to their
ease of their implementation and limited computational requirements.
In this section, we describe an extension of
Theorem~\ref{theorem:no-bootstrap} to the case in which each machine
computes an approximate minimizer using only stochastic gradient descent.

Stochastic gradient algorithms have a lengthy history in
statistics, optimization, and machine learning~\citep{RobbinsMo51, PolyakJu92,
  NemirovskiJuLaSh09, RakhlinShSr12}.  Let us begin by briefly
reviewing the basic form of stochastic gradient descent (SGD).
Stochastic gradient descent algorithms iteratively update a parameter
vector $\optvar^t$ over time based on randomly sampled gradient
information. Specifically, at iteration $t$, a sample $\statrv_t$ is
drawn at random from the distribution $\statprob$ (or, in the case of
a finite set of data $\{\statrv_1, \ldots, \statrv_\numobs\}$, a
sample $\statrv_t$ is chosen from the data set). The method then
performs the following two steps:
\begin{equation}
  \optvar^{t + \half} = \optvar^t - \stepsize_t \nabla f(\optvar^t;
  \statrv_t)
  ~~~ \mbox{and} ~~~
  \optvar^{t + 1} = \argmin_{\optvar \in \optvarspace}
  \left\{ \ltwobig{\optvar - \optvar^{t + \half}}^2 \right\}.
  \label{eqn:sgd-iteration}
\end{equation}
Here $\stepsize_t > 0$ is a stepsize, and the first update
in~\eqref{eqn:sgd-iteration} is a gradient descent step with respect to the
random gradient $\nabla f(\optvar^t; \statrv_t)$. The method then projects the
intermediate point $\optvar^{t + \half}$ back onto the constraint set
$\optvarspace$ (if there is a constraint set). The convergence of SGD methods
of the form~\eqref{eqn:sgd-iteration} has been well-studied, and we refer the
reader to the papers by~\citet{PolyakJu92}, \citet{NemirovskiJuLaSh09}, and
\citet{RakhlinShSr12} for deeper investigations.

To prove convergence of our stochastic gradient-based averaging
algorithms, we require the following smoothness and strong convexity
condition, which is an alternative to the
Assumptions~\ref{assumption:strong-convexity}
and~\ref{assumption:smoothness} used previously.
\begin{assumption}[Smoothness and Strong Convexity II]
  \label{assumption:smoothness-sgd}
  There exists a function $\liphessian : \statsamplespace \rightarrow
  \R_+$ such that
  \begin{equation*}
    \matrixnorm{\nabla^2 f(\optvar; \statsample) - \nabla^2
      f(\optvar^*; \statsample)}_2 \le \liphessian(\statsample)
    \ltwo{\optvar - \optvar^*}
    ~~~ \mbox{for~all}~\statsample \in \statsamplespace,
  \end{equation*}
  and $\E[\liphessian^2(\statrv)] \le \liphessian^2 < \infty$.
  There are finite constants $\lipobj$ and $\lipgrad$ such
  that
  \begin{equation*}
    \E[\ltwo{\nabla f(\optvar; \statrv)}^4] \le \lipobj^4, \quad
    \mbox{and} \quad \E[\matrixnorm{\nabla^2 f(\optvar^*; \statrv)}_2^4]
    \le \lipgrad^4 \quad \mbox{for each fixed} ~ \optvar \in \optvarspace.
  \end{equation*}
  In addition, the population function $F_0$ is $\strongparam$-strongly
  convex over the space $\optvarspace$, meaning that
  \begin{align}
    \nabla^2 \popfun(\optvar) \succeq \strongparam I_{d \times d} \quad
    \mbox{for all} ~ \optvar \in \optvarspace.
  \end{align}
\end{assumption}
\noindent
Assumption~\ref{assumption:smoothness-sgd} does not require as many
moments as does Assumption~\ref{assumption:smoothness}, but it does
require each moment bound to hold globally, that is, over the entire
space $\optvarspace$, rather than only in a neighborhood of the
optimal point $\optvar^*$. Similarly, the necessary curvature---in the
form of the lower bound on the Hessian matrix $\nabla^2 \popfun$---is
also required to hold globally, rather than only locally. Nonetheless,
Assumption~\ref{assumption:smoothness-sgd} holds for many common
problems; for instance, it holds for any linear regression problem in
which the covariates have finite fourth moments and the domain
$\optvarspace$ is compact. \\

The averaged stochastic gradient algorithm
(\sgdavgm) is based on the following two steps:
\begin{enumerate}
\item[(1)] Given some constant $c > 1$, each machine performs
  $\numobs$ iterations of stochastic gradient
  descent~\eqref{eqn:sgd-iteration} on its local dataset of $\numobs$
  samples using the stepsize $\stepsize_t = \frac{c}{\strongparam t}$,
  then outputs the resulting local parameter $\optvar'_i$.
\item[(2)] The algorithm computes the average $\optavg^\numobs =
  \frac{1}{\nummac} \sum_{i=1}^\nummac \optvar'_i$.
\end{enumerate}
The following result characterizes the mean-squared error of this
procedure in terms of the constants
\begin{equation*}
  \alpha \defeq 4c^2 \quad
  \mbox{and} \quad
  \beta \defeq \max\left\{\ceil{\frac{c
      \lipgrad}{\strongparam}}, \frac{c \alpha^{3/4}
    \lipobj^{3/2}}{(c - 1)\strongparam^{5/2}} \left(
  \frac{\alpha^{1/4} \liphessian \lipobj^{1/2}}{\strongparam^{1/2}}
  + \frac{4 \lipobj + \lipgrad \radius}{\rho^{3/2}} \right)\right\}.
\end{equation*}
\begin{theorem}
  \label{theorem:sgd}
  Under Assumptions~\ref{assumption:parameter-space}
  and~\ref{assumption:smoothness-sgd}, the output $\optavg^\numobs$ of
  the \savgm\ algorithm has mean-squared error upper bounded as
  \begin{equation}
    \E\left[\ltwobig{\optavg^\numobs - \optvar^*}^2\right]
    \leq
    \frac{\alpha \lipobj^2}{\strongparam^2 \nummac \numobs} +
    \frac{\beta^2}{\numobs^{3/2}}.
    \label{eqn:sgd-bound}
  \end{equation}
\end{theorem}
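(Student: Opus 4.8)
The plan is to reduce the analysis of the averaged estimator $\optavg^\numobs$ to bounds on (a) the variance of the individual SGD outputs $\optvar_i'$ and (b) the squared norm of their common bias $\E[\optvar_i'] - \optvar^*$. Since the datasets on the $\nummac$ machines are i.i.d.\ and the algorithm run on each is identical, the $\optvar_i'$ are i.i.d.; hence
\begin{equation*}
  \E\big[\ltwobig{\optavg^\numobs - \optvar^*}^2\big]
  = \frac{1}{\nummac}\E\big[\ltwobig{\optvar_1' - \E[\optvar_1']}^2\big]
  + \ltwobig{\E[\optvar_1'] - \optvar^*}^2.
\end{equation*}
So it suffices to show the per-machine variance is $\order(\lipobj^2/(\strongparam^2 \numobs))$ with the right constant $\alpha = 4c^2$, and that the per-machine bias satisfies $\ltwo{\E[\optvar_1'] - \optvar^*}^2 = \order(\beta^2 \numobs^{-3/2})$ — note the bias does \emph{not} shrink with $\nummac$, which is exactly why the second term in~\eqref{eqn:sgd-bound} has no $\nummac$ in it.

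First I would establish the variance (mean-square) bound for a single machine running $\numobs$ steps of the projected SGD recursion~\eqref{eqn:sgd-iteration} with stepsize $\stepsize_t = c/(\strongparam t)$. This is a classical strongly convex SGD analysis: using the $\strongparam$-strong convexity of $F_0$ from Assumption~\ref{assumption:smoothness-sgd}, the projection nonexpansiveness, and the gradient second-moment bound $\E[\ltwo{\nabla f(\optvar;\statrv)}^2] \le \lipobj^2$ (via Cauchy-Schwarz on the fourth-moment hypothesis), one derives a recursion of the form $\E[\ltwo{\optvar^{t+1} - \optvar^*}^2] \le (1 - 2c/t + \ldots)\,\E[\ltwo{\optvar^t - \optvar^*}^2] + c^2 \lipobj^2/(\strongparam^2 t^2)$, whose solution for $c > 1$ gives $\E[\ltwo{\optvar^\numobs - \optvar^*}^2] \le \alpha \lipobj^2/(\strongparam^2 \numobs)$ with $\alpha = 4c^2$ after handling the $1/t^2$ sum. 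Since variance is dominated by mean-squared error, this bounds the first term.

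The hard part is the bias bound, and here is where the constant $\beta$ and all the higher moments enter. The idea is to linearize: write $\optvar^{t+1} - \optvar^* \approx (I - \stepsize_t \nabla^2 F_0(\optvar^*))(\optvar^t - \optvar^*) - \stepsize_t \nabla f(\optvar^*; \statrv_t) - \stepsize_t E_t$, where the error term $E_t = (\nabla f(\optvar^t;\statrv_t) - \nabla f(\optvar^*;\statrv_t)) - \nabla^2 F_0(\optvar^*)(\optvar^t - \optvar^*)$ collects the Hessian-mismatch and remainder terms; Assumption~\ref{assumption:smoothness-sgd}'s Lipschitz-Hessian bound gives $\E[\ltwo{E_t}] \lesssim \liphessian\,\E[\ltwo{\optvar^t - \optvar^*}^2] \lesssim \liphessian \lipobj^2/(\strongparam^2 t)$, using the variance bound just proved. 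Taking expectations kills the $\nabla f(\optvar^*;\statrv_t)$ term (it is mean zero), so the bias $d_t \defeq \E[\optvar^t - \optvar^*]$ obeys $\ltwo{d_{t+1}} \le \ltwo{(I - \stepsize_t \nabla^2 F_0(\optvar^*)) d_t} + \stepsize_t \E[\ltwo{E_t}] \le (1 - \strongparam \stepsize_t)\ltwo{d_t} + \order(\stepsize_t \cdot \liphessian \lipobj^2/(\strongparam^2 t))$. Because $\stepsize_t \asymp 1/t$, the driving term is $\order(t^{-2})$ and unrolling this contraction yields $\ltwo{d_\numobs} = \order(\numobs^{-3/2}$-type decay$)$; more carefully, summing $\sum_{t} t^{-2}$ against the $\prod(1 - c/s)$ factors produces exactly $\numobs^{-3/2}$ up to the constants packaged into $\beta$. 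The delicate bookkeeping is (i) justifying the linearization uniformly — this is where the radius $\rho$ of local strong convexity and the domain radius $\radius$ appear in $\beta$, since one must argue (via a high-probability or moment argument on the SGD path staying near $\optvar^*$, or splitting on the event $\{\optvar^t \in B_\rho(\optvar^*)\}$) that the third-order/remainder terms are controlled; and (ii) tracking the constant $c/(c-1)$ that comes from the stepsize $c/(\strongparam t)$ with $c > 1$ when solving the recursion. Combining the $\order(\alpha \lipobj^2/(\strongparam^2 \nummac\numobs))$ variance contribution with the $\order(\beta^2 \numobs^{-3/2})$ squared-bias contribution gives~\eqref{eqn:sgd-bound}; I expect step (i) — controlling the nonlinear remainder of the SGD recursion globally enough to get a clean $\numobs^{-3/2}$ bias — to be the main obstacle, with the rest being careful but routine recursion-solving.
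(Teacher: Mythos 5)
Your high-level architecture coincides with the paper's: the per-machine bias--variance decomposition, the classical strongly convex SGD mean-squared-error bound with stepsize $\stepsize_t = c/(\strongparam t)$ for the variance term (the paper simply invokes the Rakhlin--Shamir--Sridharan lemma to get $\E[\ltwo{\optvar^\numobs - \optvar^*}^2] \le \alpha\lipobj^2/(\strongparam^2\numobs)$ with $\alpha = 4c^2$), and a linearization of the SGD recursion around $\optvar^*$ followed by an inductive contraction argument for the bias $\ltwo{\E[\optvar^t - \optvar^*]}$.

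The genuine gap is in your account of where the exponent $3/2$ comes from. If the only driving term in the bias recursion were the linearization error $\stepsize_t\,\E[\ltwo{E_t}] = \order(\liphessian\lipobj^2/(\strongparam^2 t^2))$, then unrolling $a_{t+1} \le (1 - c/t)a_t + b/t^2$ with $c > 1$ gives $a_\numobs = \order(1/\numobs)$, i.e.\ squared bias $\order(\numobs^{-2})$; your claim that summing $\sum_t t^{-2}$ against the $\prod_s (1 - c/s)$ factors ``produces exactly $\numobs^{-3/2}$'' is arithmetically incorrect (conservatively so, but it means your sketch never actually produces the stated bound or the stated $\beta$). In the paper the $\numobs^{-3/2}$ term is generated by two \emph{other} driving terms, both of order $t^{-7/4}$: (a) the projection correction $\stepsize_t\,\E[\ltwo{\gradient_t}\indic{\optvar^{t+1} \notin U_\rho}]$, needed because the unprojected iterate $\optvar^t - \stepsize_t \gradient_t$ may leave the region on which the projection is the identity, and (b) the contribution of the Taylor remainder $r_t$ on the event $\{\optvar^t \notin U_\rho\}$. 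Both are controlled by H\"older's inequality with exponents $(4, 4/3)$ combined with Markov's inequality via $\P(\optvar^t \notin U_\rho) \le \alpha\lipobj^2/(\strongparam^2 \rho^2 t)$; this is exactly where the fourth-moment hypotheses of Assumption~\ref{assumption:smoothness-sgd} and the factors $\rho^{-3/2}$ and $4\lipobj + \lipgrad\radius$ inside $\beta$ enter. The resulting recursion $a_{t+1} \le (1 - c/t)a_t + b\,t^{-7/4}$, valid for $t \ge \tau_0 = \ceil{c\lipgrad/\strongparam}$ (whence the $\tau_0\radius$ term in $\beta$), is then shown by induction to satisfy $a_t \le \beta/t^{3/4}$ with $\beta = \max\{\tau_0\radius,\, b/(c-1)\}$, giving the $\beta^2/\numobs^{3/2}$ term. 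You do flag splitting on the event $\{\optvar^t \in B_\rho(\optvar^*)\}$ as the delicate step, which is the right instinct, but the quantitative content of the theorem---both the exponent $3/2$ and the form of $\beta$---lives entirely in that truncation argument rather than in the $t^{-2}$ term you propose to unroll; indeed the paper's remark that assuming $k$th moments improves the term to $\order(\numobs^{(2-2k)/k})$ confirms that the moment/Markov truncation, not the Hessian-Lipschitz remainder, sets the rate.
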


Theorem~\ref{theorem:sgd} shows that the averaged stochastic gradient descent
procedure attains the optimal convergence rate $\order(\totalnumobs^{-1})$
as a function of the total
number of observations $\totalnumobs = \nummac \numobs$. The constant and
problem-dependent factors
are somewhat worse than those in the earlier results we presented in
Theorems~\ref{theorem:no-bootstrap} and~\ref{theorem:bootstrap}, but the
practical implementability of such a procedure may in some circumstances
outweigh those differences. We also note that the second term of order
$\order(\numobs^{-3/2})$ may be reduced to $\order(\numobs^{(2-2k)/k})$ for
any $k \ge 4$ by assuming the existence of $k$th moments in
Assumption~\ref{assumption:smoothness-sgd}; we show this in passing after our
proof of the theorem in Appendix~\ref{sec:proof-sgd}.  It is not clear whether
a bootstrap correction is possible for the stochastic-gradient based
estimator; such a correction could be significant, because the term $\beta^2 /
\numobs^{3/2}$ arising from the bias in the stochastic gradient estimator may
be non-trivial. We leave this question to future work.


\section{Performance on synthetic data}
\label{sec:simulations}

In this section, we report the results of simulation studies comparing the
\avgm, \savgm, and \sgdavgm\ methods, as well as a trivial method using only a
fraction of the data available on a single machine. For each of our simulated
experiments, we use a fixed total number of samples $\totalnumobs =
100,\!000$, but we vary the number of parallel splits $\nummac$ of the data
(and consequently, the local dataset sizes $\numobs = \totalnumobs / \nummac$)
and the dimensionality $d$ of the problem solved.

For our experiments, we simulate data from one of three regression models:
\begin{subequations}
  \begin{align}
    y & = \<u, x\> + \varepsilon, \label{eqn:standard-regression} \\
    y & = \<u, x\> + \sum_{j = 1}^d v_j x_j^3
    + \varepsilon, ~~~~ \mbox{or} ~
    \label{eqn:under-specified-regression} \\
    y & = \<u, x\> + h(x) |\varepsilon|,
    \label{eqn:heteroskedastic-regression}
  \end{align}
\end{subequations}
where $\varepsilon \sim N(0,1)$, and $h$ is a function to be
specified.  Specifically, the data generation procedure is as follows.
For each individual simulation, we choose fixed vector $u \in \R^d$
with entries $u_i$ distributed uniformly in $[0, 1]$ (and similarly
for $v$), and we set \mbox{$h(x) = \sum_{j=1}^d (x_j/2)^3$.} The
models~\eqref{eqn:standard-regression}
through~\eqref{eqn:heteroskedastic-regression} provide points on a
curve from correctly-specified to grossly mis-specified models, so
models~\eqref{eqn:under-specified-regression}
and~\eqref{eqn:heteroskedastic-regression} help us understand the
effects of subsampling in the \savgm\ algorithm.  (In contrast, the
standard least-squares estimator is unbiased for
model~\eqref{eqn:standard-regression}.)  The noise variable
$\varepsilon$ is always chosen as a standard Gaussian variate
$\normal(0, 1)$, independent from sample to sample.

In our simulation experiments we use the least-squares loss
\begin{equation*}
  f(\optvar; (x, y)) \defeq \half(\<\optvar, x\> - y)^2.
\end{equation*}
The goal in each experiment is to
estimate the vector $\optvar^*$ minimizing $\popfun(\optvar)
\defeq \E[f(\optvar; (X, Y))]$. For each simulation, we generate
$\totalnumobs$ samples according to either the
model~\eqref{eqn:standard-regression}
or~\eqref{eqn:heteroskedastic-regression}. For each $\nummac \in \{2,
4, 8, 16, 32, 64, 128\}$, we estimate $\optvar^* = \arg \min_\optvar
F_0(\optvar)$ using a parallel method with data split into $\nummac$
independent sets of size $\numobs = \totalnumobs / \nummac$,
specifically
\begin{enumerate}[(i)]
\item The \avgm\ method
  \vspace{-.2cm}
\item The \savgm\ method with several settings of the subsampling
  ratio $\ratio$
  \vspace{-.2cm}
\item The \sgdavgm\ method with stepsize $\stepsize_t
  = d / (10(d + t))$, which gave good performance.
\end{enumerate}
In addition to (i)--(iii), we also estimate $\optvar^*$ with
\begin{enumerate}[(i)]
  \setcounter{enumi}{3}
\item The empirical minimizer of a single split of the data of size
  $\numobs = \totalnumobs / \nummac$
  \vspace{-.2cm}
\item The empirical minimizer on the full dataset (the oracle solution).
\end{enumerate}

\begin{figure}[t!]
  \begin{center}
    \begin{tabular}{cc}
      \includegraphics[width=.45\columnwidth]{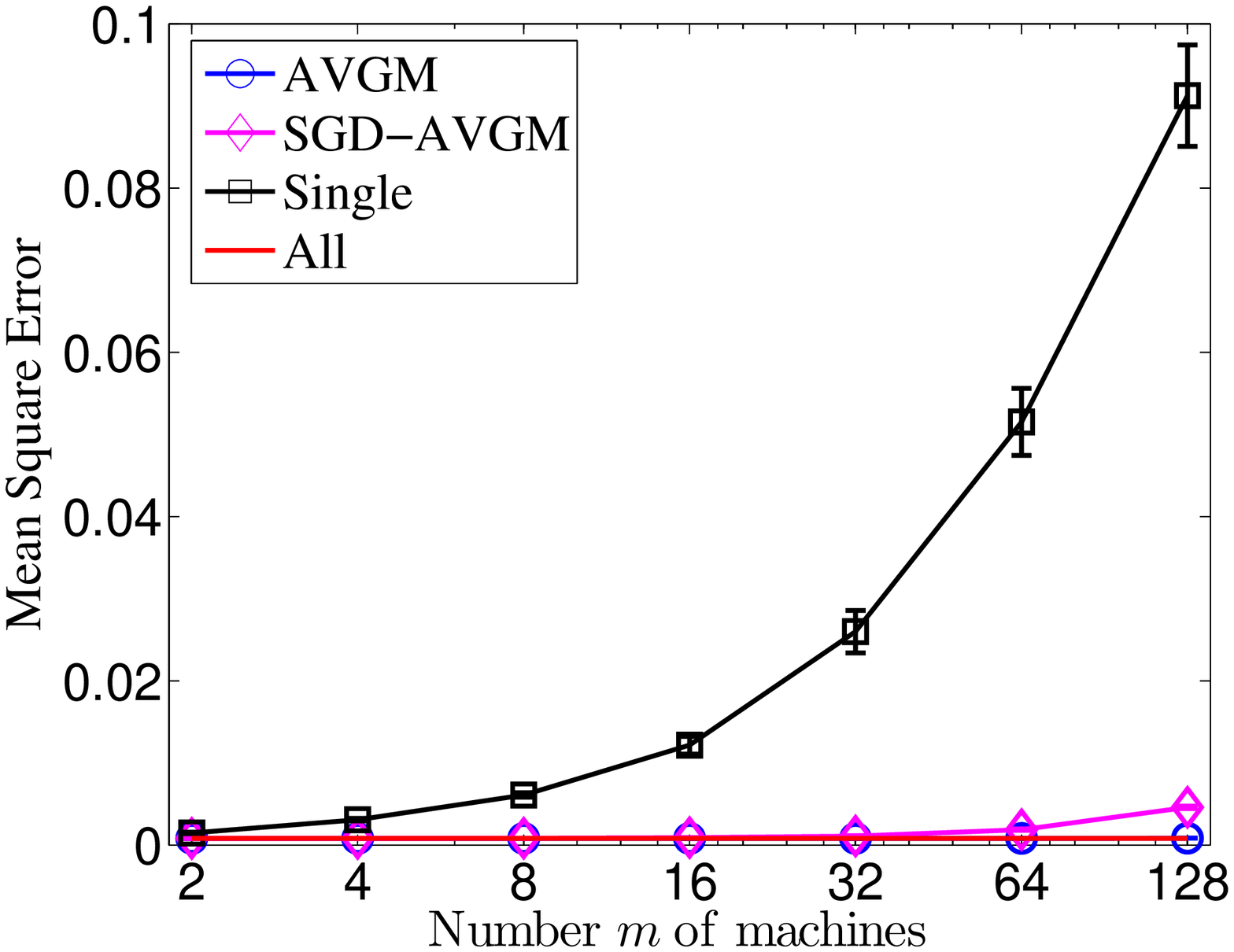} &
      \includegraphics[width=.45\columnwidth]{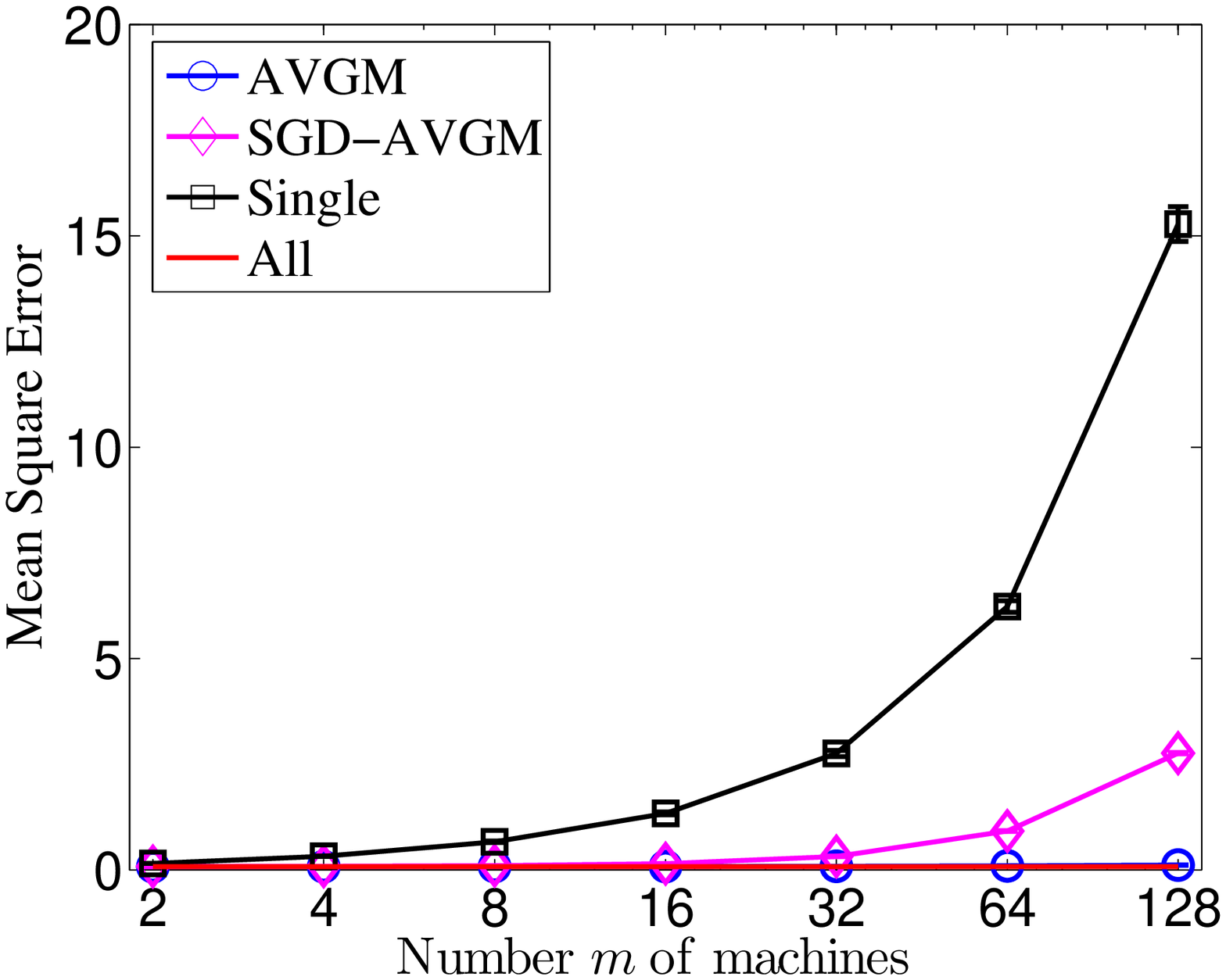} \\
      (a) $d = 20$ & (b) $d = 200$
    \end{tabular}
  \caption{\label{fig:avgm} The error $\ltwos{\what{\optvar} - \optvar^*}^2$
    versus number of machines, with standard errors across twenty simulations,
    for solving least squares with data generated according to the normal
    model~\eqref{eqn:standard-regression}. The oracle
    least-squares estimate using all $\totalnumobs$ samples is given by the
    line ``All,'' while the line ``Single'' gives the performance of the naive
    estimator using only $\numobs = \totalnumobs / \nummac$ samples.}
  \end{center}
\end{figure}

\begin{figure}[t!]
  \begin{center}
    \begin{tabular}{cc}
      \includegraphics[width=.45\columnwidth]{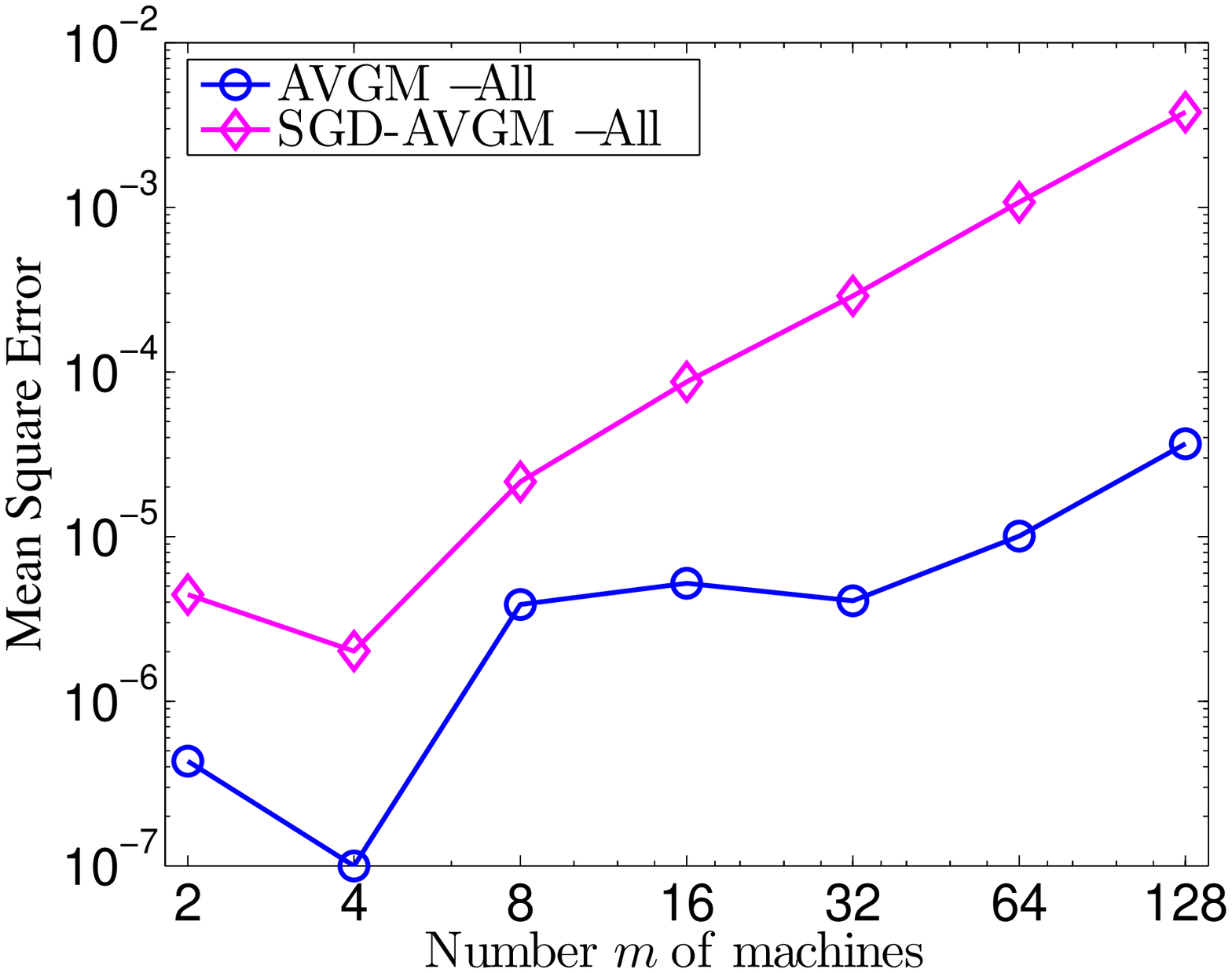} &
      \includegraphics[width=.45\columnwidth]{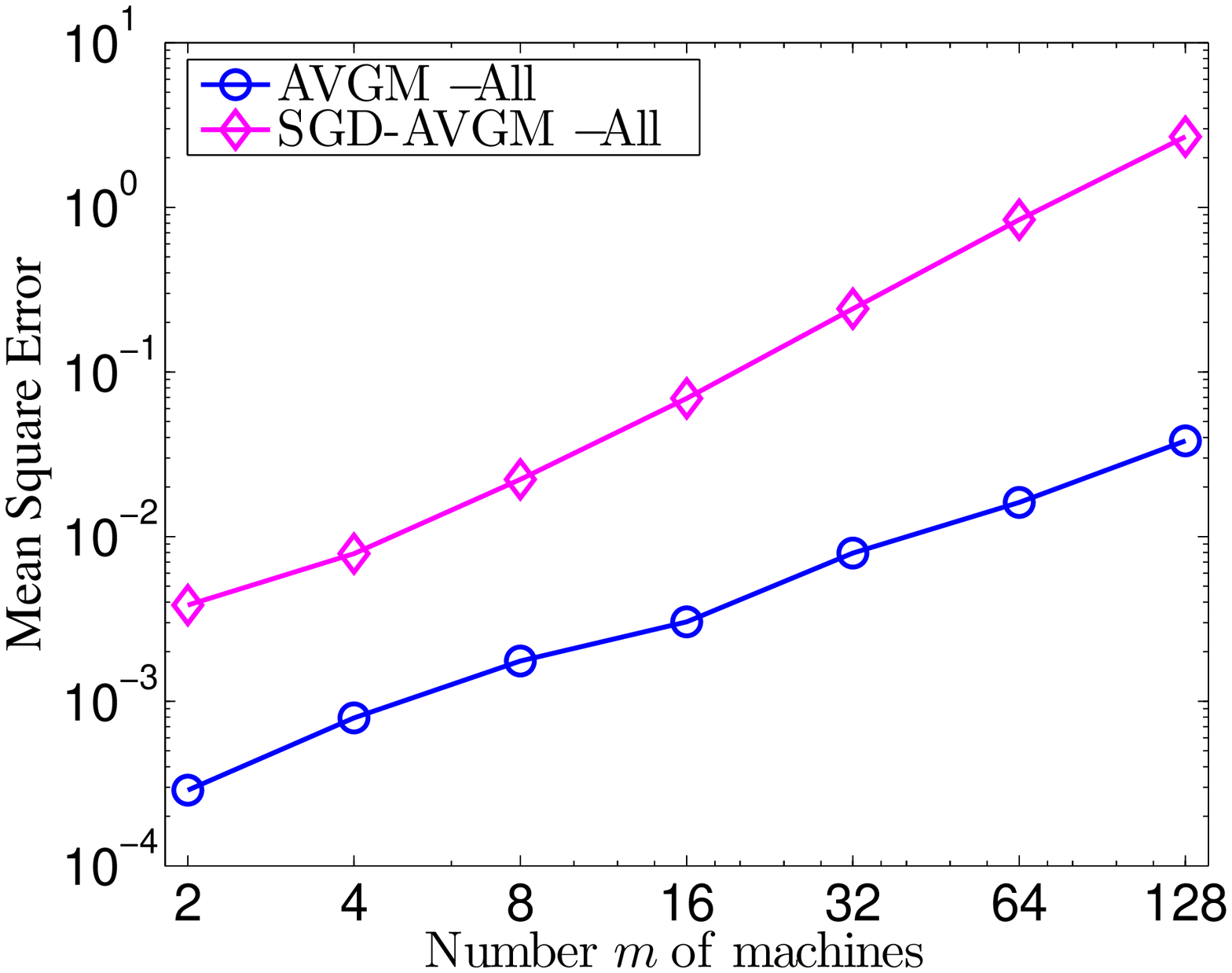} \\
      (a) $d = 20$ & (b) $d = 200$
    \end{tabular}
    \caption{\label{fig:avgm_withsgdonly} Comparison of \avgm\ and
      \sgdavgm\ methods as in Figure~\ref{fig:avgm} plotted on logarithmic
      scale.  The plot shows $\ltwos{\what{\optvar} - \optvar^*}^2 -
      \ltwo{\optvar_\totalnumobs - \optvar^*}^2$, where $\optvar_\totalnumobs$
      is the oracle least-squares estimator using all $\totalnumobs$ data
      samples.}
  \end{center}
\end{figure}

\subsection{Averaging methods}

For our first set of experiments, we study the performance of the averaging
methods (\avgm\ and \savgm), showing their scaling as the number of splits of
data---the number of machines $\nummac$---grows for fixed $\totalnumobs$ and
dimensions $d = 20$ and $d = 200$.  We use the standard regression
model~\eqref{eqn:standard-regression} to generate the data, and throughout we
let $\what{\optvar}$ denote the estimate returned by the method under
consideration (so in the \avgm\ case, for example, this is the vector
$\what{\optvar} \defeq \optavg_1$). The data samples consist of pairs $(x,
y)$, where $x \in \R^d$ and $y \in \R$ is the target value.  To sample each
$x$ vector, we choose five distinct indices in $\{1, \ldots, d\}$ uniformly at
random, and the entries of $x$ at those indices are distributed as $\normal(0,
1)$. For the model~\eqref{eqn:standard-regression}, the population optimal
vector $\optvar^*$ is $u$.

In Figure~\ref{fig:avgm}, we plot the error $\ltwos{\what{\optvar} -
  \optvar^*}^2$ of the inferred parameter vector $\what{\optvar}$ for
the true parameters $\optvar^*$ versus the number of splits $\nummac$,
or equivalently, the number of separate machines available for use.  We
also plot standard errors (across twenty experiments) for each
curve. As a baseline in each plot, we plot as a red line the
squared error $\ltwos{\what{\optvar}_\totalnumobs - \optvar^*}^2$ of
the centralized ``gold standard,'' obtained by applying a batch method
to all $\totalnumobs$ samples.

From the plots in Figure~\ref{fig:avgm}, we can make a few observations.  The
\avgm\ algorithm enjoys excellent performance, as predicted by our theoretical
results, especially compared to the naive solution using only a fraction
$1/\nummac$ of the data. In particular, if $\what{\optvar}$ is obtained by the
batch method, then \avgm\ is almost as good as the full-batch baseline even
for $\nummac$ as large as $128$, though there is some evident degradation in
solution quality. The \sgdavgm\ (stochastic-gradient with averaging) solution
also yields much higher accuracy than the naive solution, but its performance
degrades more quickly than the \avgm\ method's as $\nummac$ grows.  In higher
dimensions, both the \avgm\ and \sgdavgm\ procedures have somewhat worse
performance; again, this is not unexpected since in high dimensions the strong
convexity condition is satisfied with lower probability in local datasets.

We present a comparison between the \avgm\ method and the
\sgdavgm\ method with somewhat more distinguishing power in
Figure~\ref{fig:avgm_withsgdonly}. For these plots, we compute the gap
between the \avgm\ mean-squared-error and the unparallel baseline MSE,
which is the accuracy lost due to parallelization or distributing the
inference procedure across multiple machines.
Figure~\ref{fig:avgm_withsgdonly} shows that the mean-squared error
grows polynomially with the number of machines $\nummac$, which is
consistent with our theoretical
results. From Corollary~\ref{corollary:maximum-likelihood}, we
expect the \avgm\ method to suffer (lower-order) penalties
proportional to $\nummac^2$ as $\nummac$ grows, while
Theorem~\ref{theorem:sgd} suggests the somewhat faster growth we see
for the \sgdavgm\ method in Figure~\ref{fig:avgm_withsgdonly}.  Thus,
we see that the improved run-time performance of the
\sgdavgm\ method---requiring only a single pass through the data on
each machine, touching each datum only once---comes at the expense of
some loss of accuracy, as measured by mean-squared error.

\subsection{Subsampling correction}

\begin{figure}[t]
  \begin{center}
  \begin{tabular}{cc}
    \includegraphics[width=.45\columnwidth]{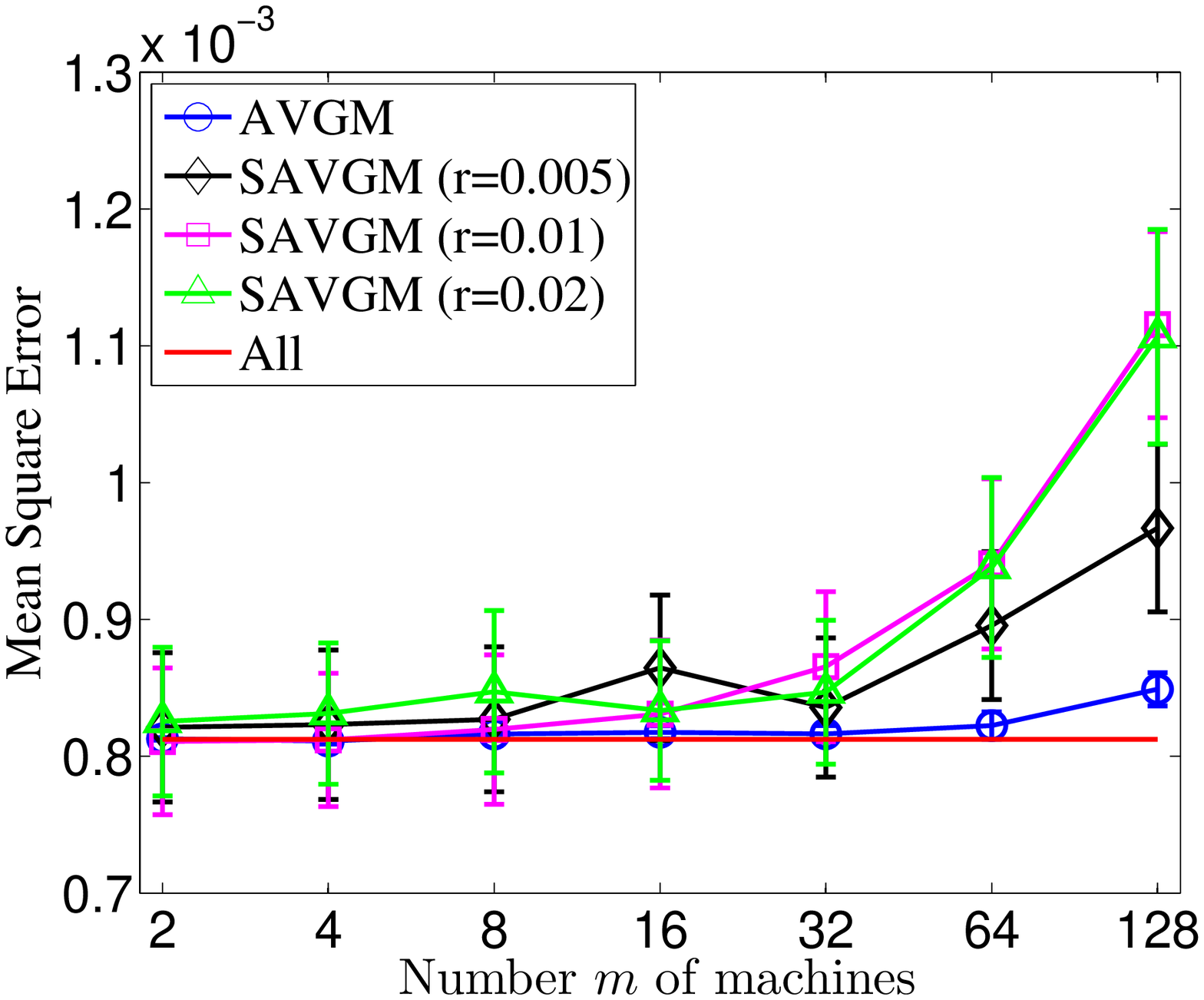} &
    \includegraphics[width=.45\columnwidth]{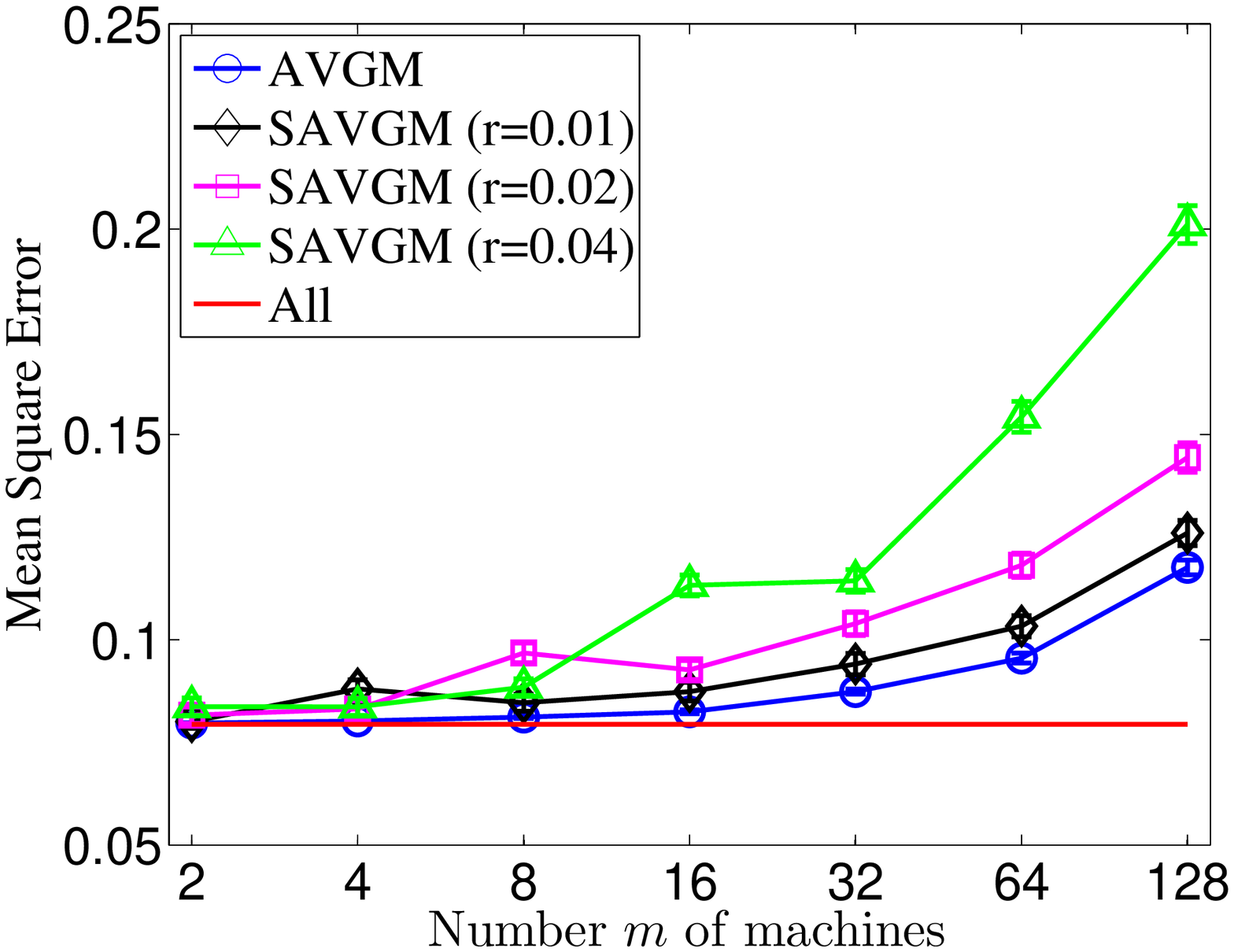} \\
    (a) $d = 20$ & (b) $d = 200$
  \end{tabular}
  \caption{\label{fig:bavgm-ind} The error $\ltwos{\what{\optvar} -
      \optvar^*}^2$ plotted against the number of machines $\nummac$ for the
    \avgm\ and \savgm\ methods, with standard errors across twenty
    simulations, using the normal regression
    model~\eqref{eqn:standard-regression}. The oracle estimator is
    denoted by the line ``All.''}
  \end{center}
  \vspace{-.5cm}
\end{figure}

\begin{figure}[t]
  \begin{center}
  \begin{tabular}{cc}
    \includegraphics[width=.45\columnwidth]{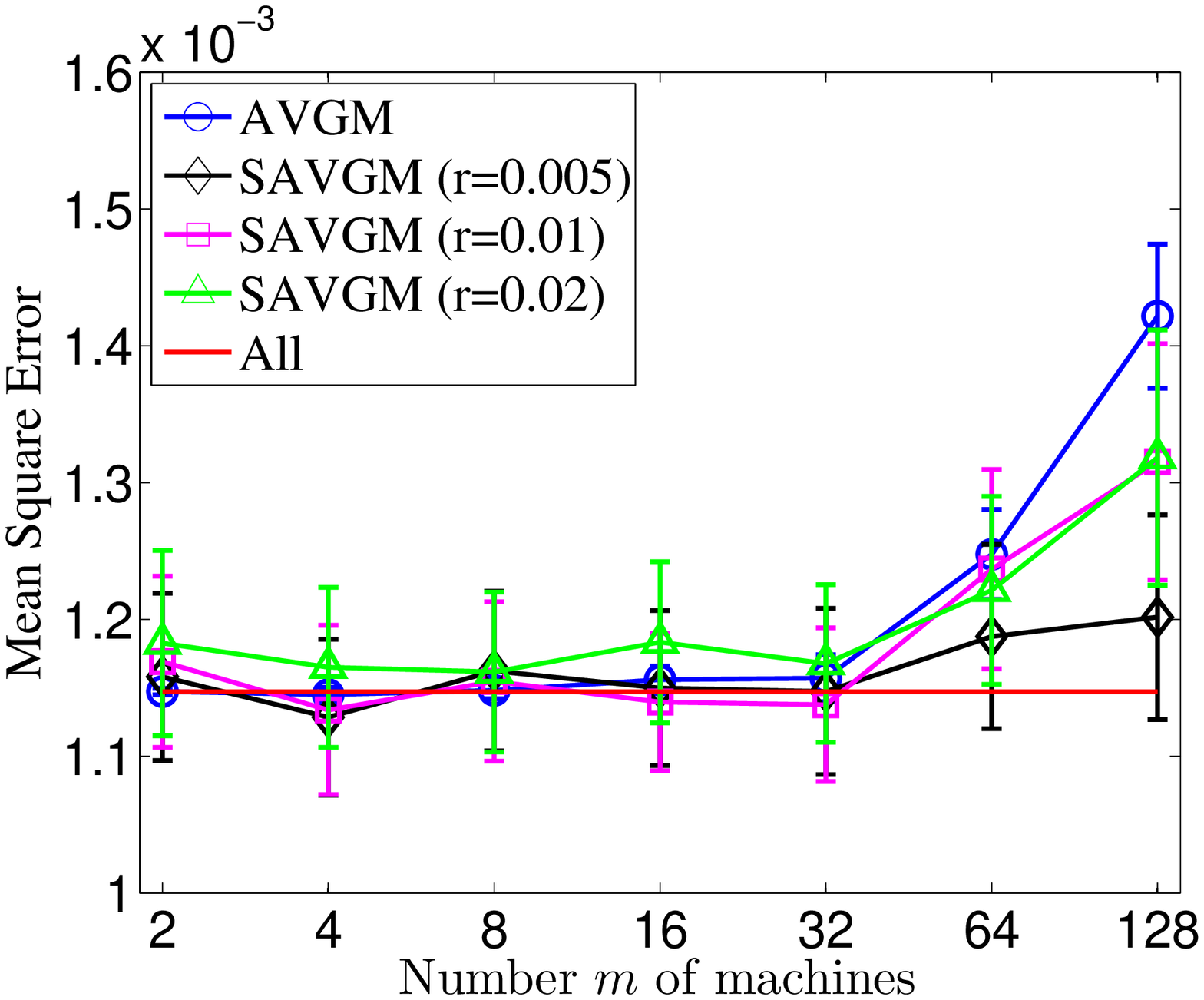} &
    \includegraphics[width=.45\columnwidth]{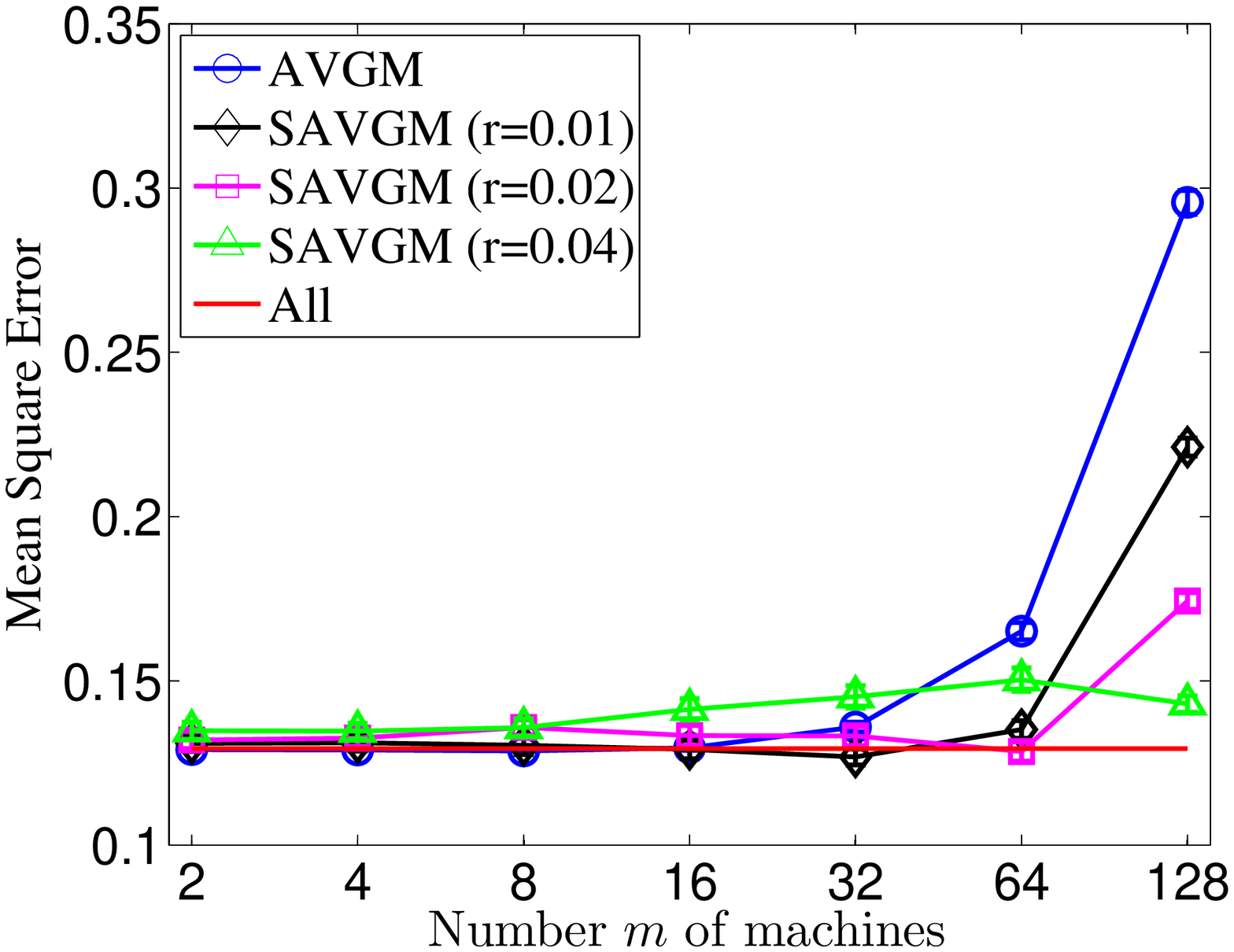} \\
    (a) $d = 20$ & (b) $d = 200$
  \end{tabular}
  \caption{\label{fig:bavgm-cor}  The error $\ltwos{\what{\optvar} -
      \optvar^*}^2$ plotted against the number of machines $\nummac$ for the
    \avgm\ and \savgm\ methods, with standard errors across twenty
    simulations, using the non-normal regression
    model~\eqref{eqn:heteroskedastic-regression}. The oracle estimator is
    denoted by the line ``All.''}
  \end{center}
  \vspace{-.5cm}
\end{figure}

We now turn to developing an understanding of the \savgm\ algorithm in
comparison to the standard average mixture algorithm, developing
intuition for the benefits and drawbacks of the method.  Before
describing the results, we remark that for the standard regression
model~\eqref{eqn:standard-regression}, the least-squares solution is
unbiased for $\optvar^*$, so we expect subsampled averaging to yield
little (if any) improvement. The \savgm\ method is essentially aimed
at correcting the bias of the estimator $\optavg_1$, and de-biasing an
unbiased estimator only increases its variance. However, for the
mis-specified models~\eqref{eqn:under-specified-regression}
and~\eqref{eqn:heteroskedastic-regression} we expect to see some
performance gains. In our experiments, we use multiple sub-sampling
rates to study their effects, choosing
$\ratio\in\{0.005,0.01,0.02,0.04\}$, where we recall that the output
of the \savgm\ algorithm is the vector \mbox{$\what{\optvar} \defeq
  (\optavg_1 - \ratio\optavg_2) / (1 - \ratio)$.}

We begin with experiments in which the data is generated as in the
previous section. That is, to generate a feature vector $x \in
\real^d$, choose five distinct indices in $\{1, \ldots, d\}$ uniformly
at random, and the entries of $x$ at those indices are distributed as
$\normal(0, 1)$.  In Figure~\ref{fig:bavgm-ind}, we plot the results
of simulations comparing \avgm\ and \savgm\ with data generated from
the normal regression model~\eqref{eqn:standard-regression}. Both
algorithms have have low error rates, but the \avgm\ method is
slightly better than the \savgm\ method for both values of the
dimension $d$ and all and sub-sampling rates $\ratio$.  As expected,
in this case the \savgm\ method does not offer improvement over \avgm,
since the estimators are unbiased. (In Figure~\ref{fig:bavgm-ind}(a),
we note that the standard error is in fact very small, since the
mean-squared error is only of order $10^{-3}$.)

To understand settings in which subsampling for bias correction helps, in
Figure~\ref{fig:bavgm-cor}, we plot mean-square error curves for the
least-squares regression problem when the vector $y$ is sampled according to
the non-normal regression model~\eqref{eqn:heteroskedastic-regression}.  In
this case, the least-squares estimator is biased for $\optvar^*$ (which, as
before, we estimate by solving a larger regression problem using $10
\totalnumobs$ data samples).  Figure~\ref{fig:bavgm-cor} shows that both the
\avgm\ and \savgm\ method still enjoy good performance; in some cases, the
\savgm\ method even beats the oracle least-squares estimator for $\optvar^*$
that uses all $\totalnumobs$ samples. Since the \avgm\ estimate is biased in
this case, its error curve increases roughly quadratically with $\nummac$,
which agrees with our theoretical predictions in
Theorem~\ref{theorem:no-bootstrap}.  In contrast, we see that the
\savgm\ algorithm enjoys somewhat more stable performance, with increasing
benefit as the number of machines $\nummac$ increases. For example, in case of
$d=200$, if we choose $\ratio = 0.01$ for $\nummac \leq 32$, choose $\ratio =
0.02$ for $\nummac = 64$ and $\ratio = 0.04$ for $\nummac = 128$, then
\savgm\ has performance comparable with the oracle method that uses all
$\totalnumobs$ samples. Moreover, we see that all the values of $\ratio$---at
least for the reasonably small values we use in the experiment---provide
performance improvements over a non-subsampled distributed estimator.

For our final simulation, we plot results comparing \savgm\ with \avgm\ in
model~\eqref{eqn:under-specified-regression}, which is mis-specified but still
a normal model. We use a simpler data generating mechanism, specifically, we
draw $x \sim \normal(0, I_{d\times d})$ from a standard $d$-dimensional
normal, and $v$ is chosen uniformly in $[0, 1]$; in this case, the population
minimizer has the closed form $\optvar^* = u + 3v$.
Figure~\ref{fig:simple-mis-specified} shows the results for dimensions $d =
20$ and $d = 40$ performed over $100$ experiments (the standard errors are too
small to see). Since the model~\eqref{eqn:under-specified-regression} is not
that badly mis-specified, the performance of the \savgm\ method improves
upon that of the \avgm\ method only for relatively large values of $\nummac$,
however, the performance of the \savgm\ is always at least as good as
that of \avgm.

\begin{figure}
  \begin{center}
    \begin{tabular}{cc}
      \psfrag{Avgm}{\small{\avgm}}
      \psfrag{SAvgm (r=(d/n) 2/3)--}{\small{\savgm\
          $(\ratio = (d/\numobs)^{2/3})$}}
      \includegraphics[width=.45\columnwidth]{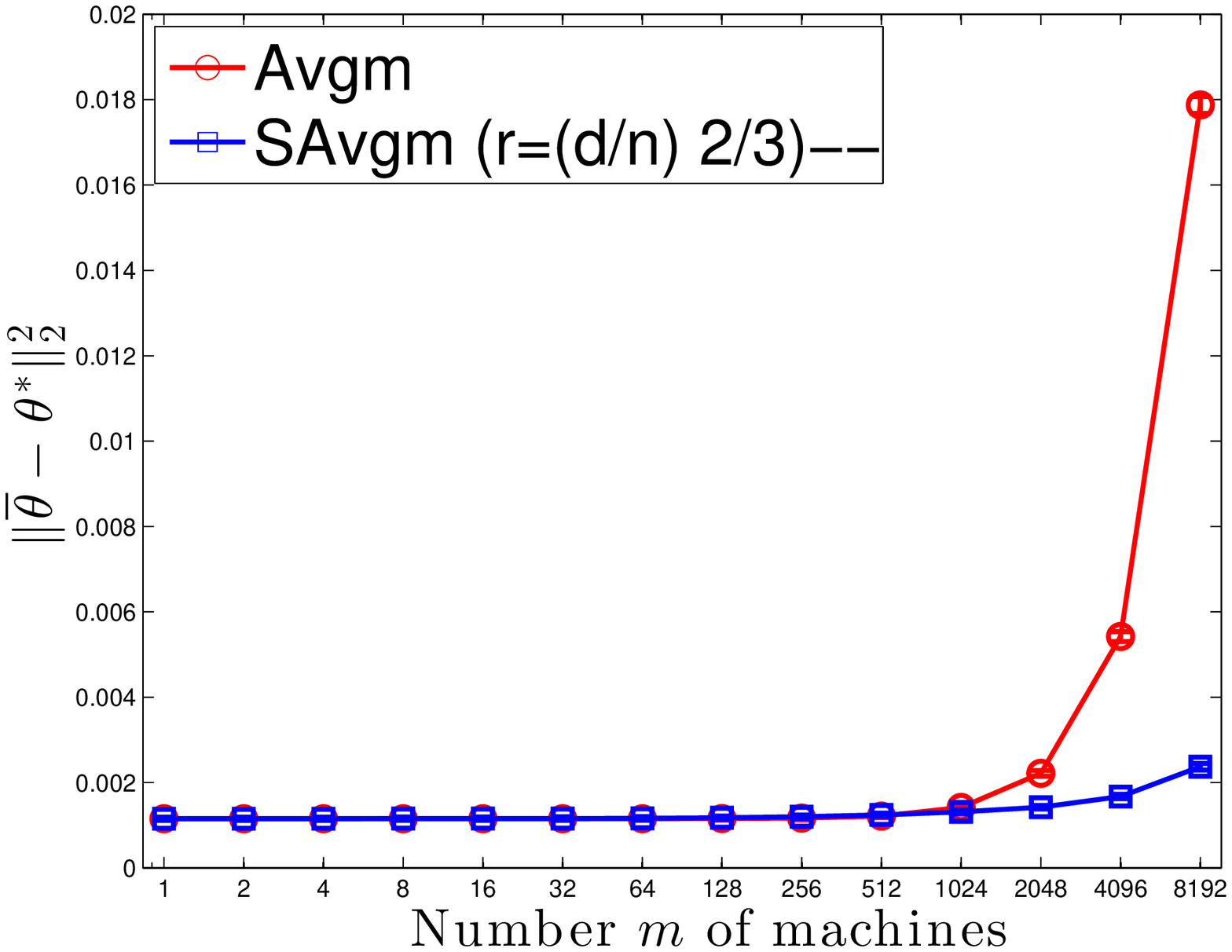} &
      \psfrag{Avgm}{\small{\avgm}}
      \psfrag{SAvgm (r=(d/n) 2/3)--}{\small{\savgm\
          $(\ratio = (d/\numobs)^{2/3})$}}
      \includegraphics[width=.45\columnwidth]{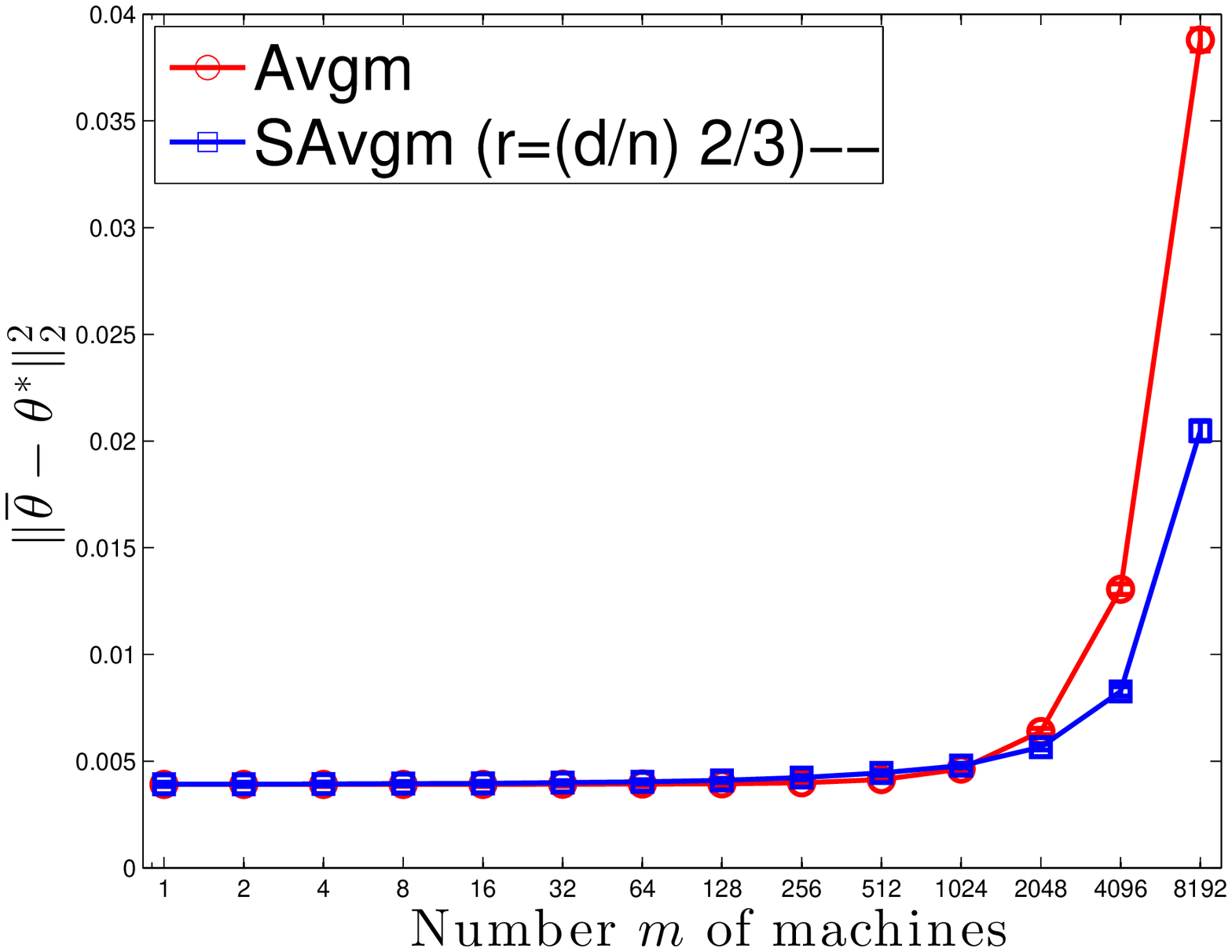} \\
      (a) $d = 20$ & (b) $d = 40$
    \end{tabular}
    \caption{\label{fig:simple-mis-specified} The error $\ltwos{\what{\optvar}
        - \optvar^*}^2$ plotted against the number of machines $\nummac$ for
      the \avgm\ and \savgm\ methods using regression
      model~\eqref{eqn:under-specified-regression}.}
  \end{center}
  \vspace{-1cm}
\end{figure}


\section{Experiments with advertising data}
\label{sec:real-experiment}

\begin{table*}[t]
  \centering
  \caption{Features used in online advertisement prediction problem.}
  \label{tab:feature}
  \begin{tabular}{|l|r|l|}\hline
    Feature Name & Dimension & Description \\
    \hline
    Query & 20000 & Word tokens appearing in the query. \\
    \hline
    Gender & 3 & Gender of the user\\\hline
    Keyword & 20000 & Word tokens appearing in the purchase keywords.\\\hline
    Title & 20000 & Word tokens appearing in the ad title.\\\hline
    Advertiser & 39191 & Advertiser's ID\\\hline
    AdID & 641707 & Advertisement's ID.\\\hline
    Age & 6 & Age of the user\\\hline
    UserFreq & 25 & Number of appearances of the same user.\\\hline
    Position & 3 & Position of advertisement on search page.\\\hline
    Depth & 3 & Number of ads in the session.\\\hline
    QueryFreq & 25 & Number of occurrences of the same query.\\\hline
    AdFreq & 25 &Number of occurrences of the same ad.\\\hline
    QueryLength & 20 & Number of words in the query.\\\hline
    TitleLength & 30 & Number of words in the ad title.\\\hline
    DespLength & 50 & Number of words in the ad description.\\\hline
    QueryCtr & 150 & Average click-through-rate for query.\\\hline
    UserCtr & 150 & Average click-through-rate for user.\\\hline
    AdvrCtr & 150 &Average click-through-rate for advertiser.\\\hline
    WordCtr & 150 & Average click-through-rate for keyword advertised.\\\hline
    UserAdFreq & 20 & Number of times this user sees an ad.\\\hline
    UserQueryFreq & 20 & Number of times this user performs a search.\\\hline
  \end{tabular}
\end{table*}

Predicting whether a user of a search engine will click on an
advertisement presented to him or her is of central importance to the
business of several internet companies, and in this section, we
present experiments studying the performance of the \avgm\ and
\savgm\ methods for this task.  We use a large dataset from the
Tencent search engine, {\tt soso.com}~\citep{Kdd12}, which contains
641,707 distinct advertisement items with $\totalnumobs =
235,\!582,\!879$ data samples.

Each sample consists of a so-called \emph{impression}, which in the
terminology of the information retrieval literature (e.g., see the
book by~\citet{ManningRaSc08}), is a list containing a user-issued
search, the advertisement presented to the user in response to the
search, and a label $y \in \{+1, -1\}$ indicating whether the user
clicked on the advertisement.  The ads in our dataset were presented
to 23,669,283 distinct users.

Transforming an impression into a useable set of regressors $x$ is
non-trivial, but the Tencent dataset provides a standard encoding. We
list the features present in the data in Table~\ref{tab:feature},
along with some description of their meaning. Each text-based
feature---that is, those made up of words, which are Query, Keyword,
and Title---is given a ``bag-of-words''
encoding~\citep{ManningRaSc08}. This encoding assigns each of 20,000
possible words an index, and if the word appears in the query (or
Keyword or Title feature), the corresponding index in the vector $x$
is set to 1. Words that do not appear are encoded with a zero.
Real-valued features, corresponding to the bottom fifteen features in
Table~\ref{tab:feature} beginning with ``Age'', are binned into a
fixed number of intervals $[-\infty, a_1], \openleft{a_1}{a_2},
\ldots, \openleft{a_k}{\infty}$, each of which is assigned an index in
$x$. (Note that the intervals and number thereof vary per feature, and
the dimension of the features listed in Table~\ref{tab:feature}
corresponds to the number of intervals). When a feature falls into a
particular bin, the corresponding entry of $x$ is assigned a 1, and
otherwise the entries of $x$ corresponding to the feature are 0. Each
feature has one additional value for ``unknown.'' The remaining
categorical features---gender, advertiser, and advertisement ID
(AdID)---are also given $\{0, 1\}$ encodings, where only one index of
$x$ corresponding to the feature may be non-zero (which indicates the
particular gender, advertiser, or AdID). This combination of encodings
yields a binary-valued covariate vector $x \in \{0, 1\}^d$ with $d =
741,\!725$ dimensions. Note also that the features incorporate
information about the user, advertisement, and query issued, encoding
information about their interactions into the model.

Our goal is to predict the probability of a user clicking a given
advertisement as a function of the covariates in
Table~\ref{tab:feature}.  To do so, we use a logistic
regression model to estimate the probability of a click response
\begin{equation*}
  P(y=1 \mid x ; \optvar) \defeq \frac{1}{1+\exp(-\<\optvar, x\>)},
\end{equation*}
where $\optvar \in \R^d$ is the unknown regression vector.  We use
the negative logarithm of $P$ as the loss, incorporating a ridge
regularization penalty.  This combination yields instantaneous loss
\begin{equation}
  \label{eqn:logistic-regression}
  f(\optvar; (x,y)) = \log\left(1+\exp(-y \<\optvar, x\>)\right) +
  \frac{\strongparam}{2} \ltwo{\optvar}^2.
\end{equation}
In all our experiments, we assume that the population negative log-likelihood
risk has local strong convexity as suggested by
Assumption~\ref{assumption:strong-convexity}.  In practice, we use a small
regularization parameter $\strongparam = 10^{-6}$ to ensure fast
convergence for the local sub-problems.

\begin{figure}[t]
  \begin{center}
  \begin{tabular}{cc}
    \psfrag{AVGM}{\small{\avgm}}
    \psfrag{BAVGM \(r=0.1\)}{\small{\savgm\ ($\ratio = .1$)}}
    \psfrag{BAVGM \(r=0.25\)}{\small{\savgm\ ($\ratio = .25$)}}
    \includegraphics[width=.45\columnwidth]{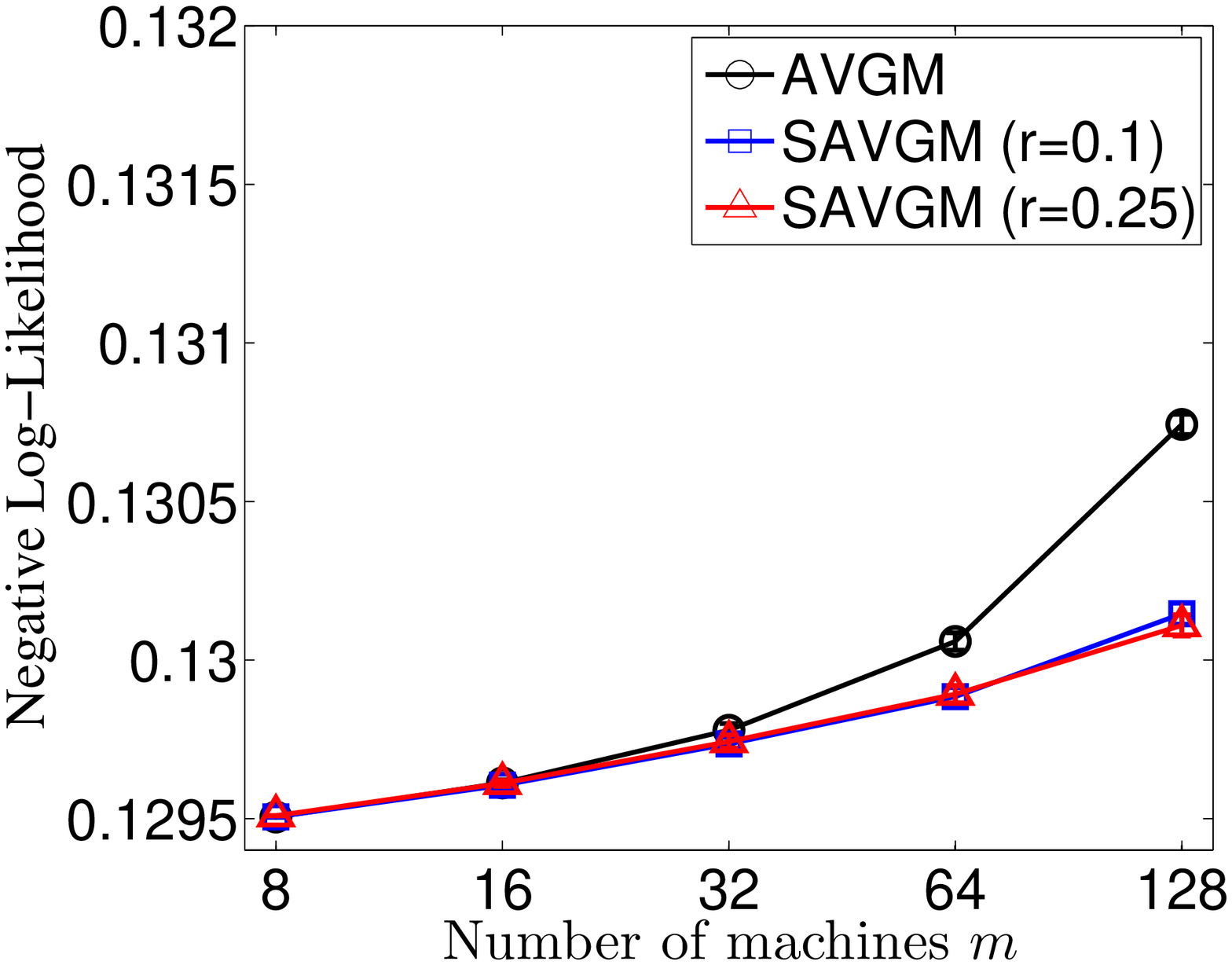} &
    \includegraphics[width=.45\columnwidth]{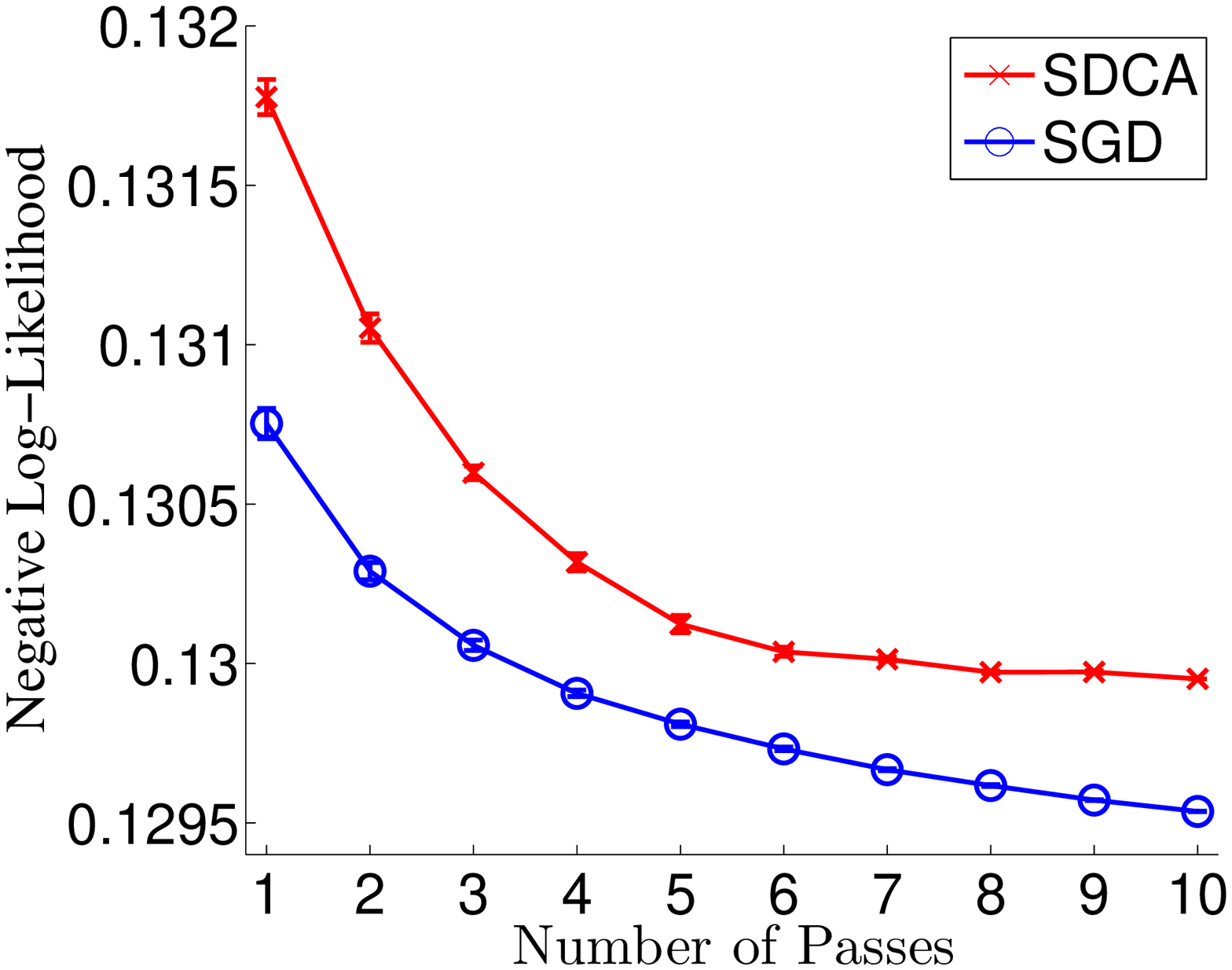} \\
    (a) & (b)\\
  \end{tabular}
  \caption{\label{fig:results} The negative log-likelihood of the output of
    the \avgm, \savgm, and stochastic methods on the held-out dataset for the
    click-through prediction task. (a)~Performance of the \avgm\ and
    \savgm\ methods versus the number of splits $\nummac$ of the
    data. (b)~Performance of SDCA and SGD baselines as a function of number of
    passes through the entire dataset.}
  \end{center}
  \vspace{-.5cm}
\end{figure}

For this problem, we cannot evaluate the mean-squared error
$\ltwos{\what{\optvar} - \optvar^*}^2$, as we do not know the true optimal
parameter $\optvar^*$.  Consequently, we evaluate the performance of an
estimate $\what{\optvar}$ using log-loss on a held-out dataset. Specifically,
we perform a five-fold validation experiment, where we shuffle the data and
partition it into five equal-sized subsets. For each of our five experiments,
we hold out one partition to use as the test set, using the remaining data as
the training set for inference. When studying the \avgm\ or
\savgm\ method, we compute the local estimate $\optvar_i$ via a trust-region
Newton-based method~\citep{NocedalWr06} implemented by LIBSVM~\citep{chang2011libsvm}.

The dataset is too large to fit in the memory of most computers: in total,
four splits of the data require 55~gigabytes. Consequently, it is difficult
to provide an oracle training comparison using the full $\totalnumobs$
samples.  Instead, for each experiment, we perform 10 passes of stochastic dual
coordinate ascent (SDCA)~\citep{shalev2012stochastic} and 10 passes of stochastic gradient descent (SGD) through the
dataset to get two rough baselines of the
performance attained by the empirical minimizer for the entire training
dataset. Figure~\ref{fig:results}(b) shows the hold-out set log-loss after
each of the sequential passes through the training data finishes. Note that
although the SDCA enjoys faster convergence rate on the regularized empirical risk~\citep{shalev2012stochastic},
the plot shows that the SGD has better generalization performance.

In Figure~\ref{fig:results}(a), we show the average hold-out set log-loss
(with standard errors) of the estimator $\optavg_1$ provided by the
\avgm\ method versus number of splits of the data $\nummac$, and we also plot
the log-loss of the \savgm\ method using subsampling ratios of $\ratio \in
\{.1, .25\}$. The plot shows that for small $\nummac$, both \avgm\ and
\savgm\ enjoy good performance, comparable to or better than (our proxy for)
the oracle solution using all $\totalnumobs$ samples. As the number of
machines $\nummac$ grows, however, the de-biasing provided by the subsampled
bootstrap method yields substantial improvements over the standard
\avgm\ method. In addition, even with $\nummac = 128$ splits of the dataset,
the \savgm\ method gives better hold-out set performance than performing two
passes of stochastic gradient on the entire dataset of $\nummac$ samples; with
$\nummac = 64$, \savgm\ enjoys performance as strong as looping through the
data four times with stochastic gradient descent.  This is striking, since
doing even one pass through the data with stochastic gradient descent gives
minimax optimal convergence rates~\citep{PolyakJu92, AgarwalBaRaWa12}.  In
ranking applications, rather than measuring negative log-likelihood, one may
wish to use a direct measure of prediction error; to that end,
Figure~\ref{fig:ads-auc} shows plots of the area-under-the-curve (AUC)
measure for the
\avgm\ and \savgm\ methods; AUC is a well-known measure of prediction error
for bipartite ranking~\citep{ManningRaSc08}. Broadly, this plot shows
a similar story to that in Figure~\ref{fig:results}.

\begin{figure}[t]
  \begin{center}
    \psfrag{AVGM}{\avgm}
    \psfrag{BAVGM \(r=0.1\)}{\savgm\ $(r = .1)$}
    \psfrag{BAVGM \(r=0.25\)}{\savgm\ $(r = .25)$}
    \includegraphics[width=.5\columnwidth]{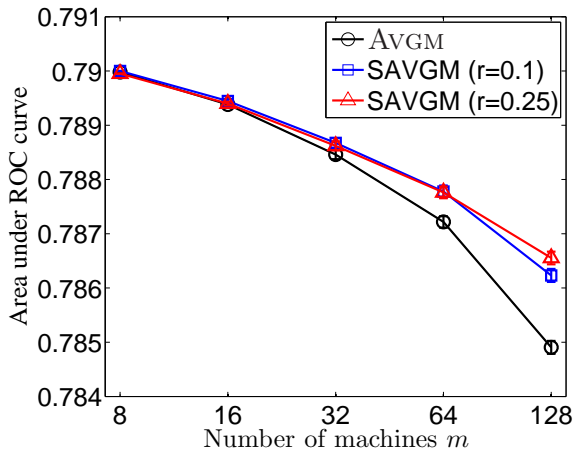}
    \caption{\label{fig:ads-auc} The area-under-the-curve (AUC) measure of
      ranking error for the output of the \avgm\ and \savgm\ methods for the
      click-through prediction task.}
  \end{center}
\end{figure}

\begin{figure}[t]
  \begin{center}
    \psfrag{BAVGM \(m=128\)}{\small{\savgm\ $(\nummac=128)$}}
    \includegraphics[width=.45\columnwidth]{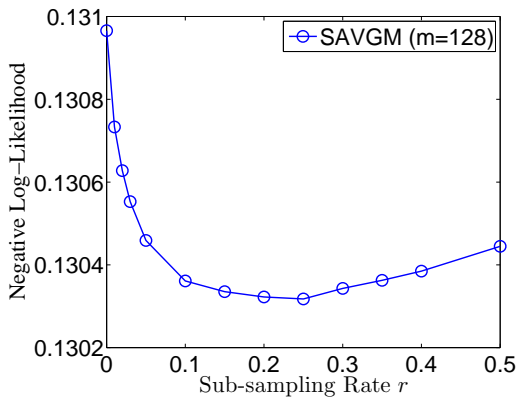}
    \caption{\label{fig:compare_r}
      The log-loss on held-out data for the \savgm\ method applied with
      $\nummac = 128$ parallel splits of the data, plotted versus the
      sub-sampling rate $\ratio$.}
  \end{center}
  \vspace{-.5cm}
\end{figure}

It is instructive and important to understand the sensitivity of the
\savgm\ method to the value of the
resampling parameter $\ratio$. We explore this question
in Figure~\ref{fig:compare_r} using $\nummac = 128$ splits, where we plot
the log-loss of the \savgm\ estimator on the held-out data set versus the
subsampling ratio $\ratio$. We choose $\nummac = 128$ because more data splits
provide more variable performance in $\ratio$. For the {\tt soso.com} ad
prediction data set, the choice $\ratio = .25$ achieves the best performance,
but Figure~\ref{fig:compare_r} suggests that mis-specifying the ratio is not
terribly detrimental. Indeed, while the performance of $\savgm$ degrades
to that of the $\avgm$ method, a wide range of settings
of $\ratio$ give
improved performance, and there does not appear to be a phase transition
to poor performance.


\section{Discussion}
\label{sec:discussion}

Large scale statistical inference problems are challenging, and the
difficulty of solving them will only grow as data becomes more
abundant: the amount of data we collect is growing much faster than
the speed or storage capabilities of our computers. Our \avgm, \savgm,
and \sgdavgm\ methods provide strategies for efficiently solving such
large-scale risk minimization problems, enjoying performance
comparable to an oracle method that is able to access the entire large
dataset. We believe there are several interesting questions that
remain open after this work. First, nonparametric estimation problems,
which often suffer superlinear scaling in the size of the data, may
provide an interesting avenue for further study of decomposition-based
methods.  Our own recent work has addressed aspects of this challenge
in the context of kernel methods for non-parametric
regression~\citep{ZhaDucWai_COLT13}.  More generally, an understanding
of the interplay between statistical efficiency and communication
could provide an avenue for further research, and it may also be
interesting to study the effects of subsampled or bootstrap-based
estimators in other distributed environments.

\subsection*{Acknowledgments}

We thank Joel Tropp for some informative discussions on and references for
matrix concentration and moment inequalities. We also thank Ohad Shamir for
pointing out a mistake in the statements of results related to Theorem 1, and
the editor and reviewers for their helpful comments and feedback.  JCD was
supported by the Department of Defence under the NDSEG Fellowship Program and
by a Facebook PhD fellowship.  This work was partially funded by Office of
Naval Research MURI grant N00014-11-1-0688 to MJW.


\appendix

\section{The necessity of smoothness}
\label{appendix:smoothness-necessary}
\newcommand{\floor}[1]{\left\lfloor{#1}\right\rfloor}

Here we show that some version of the smoothness conditions presented
in Assumption~\ref{assumption:smoothness} are necessary for averaging
methods to attain better mean-squared error than using only the
$\numobs$ samples on a single processor.  Given the loss
function~\eqref{eqn:smoothness-loss}, let $\numobs_0 =
\sum_{i=1}^\numobs \indic{\statrv_i = 0}$ to be the count of 0
samples. Using $\optvar_1$ as shorthand for $\optvar_{1,i}$, we see by
inspection that the empirical minimizer $\optvar_1$ is
\begin{equation*}
  \optvar_1 = 
  \begin{cases}
  \frac{\numobs_0}{\numobs} - \half & \mbox{when} ~ \numobs_0 \le \numobs / 2
  \\
  1 - \frac{\numobs}{2 \numobs_0} & \mbox{otherwise.}
  \end{cases}
\end{equation*}
For simplicity, we may assume that $\numobs$ is odd. In this case, we
obtain that
\begin{align*}
  \E[\optvar_1] & = \frac{1}{4} + \E\left[
    \frac{\numobs_0}{\numobs} \indic{\numobs_0 < \numobs/2} \right] -
  \E\left[\frac{\numobs}{2 \numobs_0} \indic{\numobs_0 > \numobs / 2}\right] \\
  & = \frac{1}{4} + \frac{1}{2^\numobs}
  \sum_{i=0}^{\floor{\numobs/2}}
  \binom{\numobs}{i} \frac{i}{n}
  - \frac{1}{2^\numobs}
  \sum_{i = \ceil{\numobs/2}}^{\numobs}
  \binom{\numobs}{i} \frac{n}{2i}
  = \frac{1}{4} + \frac{1}{2^\numobs}
  \sum_{i=0}^{\floor{\numobs/2}}
  \binom{\numobs}{i} \left[\frac{i}{n}
    - \frac{\numobs}{2(\numobs - i)}\right]
\end{align*}
by the symmetry of the binomial. Adding and subtracting $\half$ from the term
within the braces, noting that $\statprob(\numobs_0 < \numobs/2) = 1/2$, we
have the equality
\begin{equation*}
  \E[\optvar_1]
  = \frac{1}{2^\numobs} \sum_{i=0}^{\floor{\numobs/2}}
  \binom{\numobs}{i} \left[\frac{i}{\numobs}
    - \frac{\numobs}{2(\numobs - i)}
    + \half\right]
  = \frac{1}{2^\numobs}
  \sum_{i=0}^{\floor{\numobs/2}}
  \binom{\numobs}{i} \frac{i (\numobs - 2i)}{2 \numobs(\numobs - i)}.
\end{equation*}
If $Z$ is distributed normally with mean $1/2$ and variance
$1/(4\numobs)$, then an asymptotic expansion of the binomial
distribution yields
\begin{align*}
  \left(\half\right)^\numobs
  \sum_{i=0}^{\floor{\numobs/2}}
  \binom{\numobs}{i} \frac{i (\numobs - 2i)}{2 \numobs(\numobs - i)}
  & = \E\left[\frac{Z(1 - 2 Z)}{2 - 2 Z} \mid 0 \le Z \le \half\right]
  + o(\numobs^{-1/2}) \\
  & \ge \half \E\left[Z - 2Z^2 \mid 0 \le Z \le \half\right]
  + o(\numobs^{-1/2})
  = \Omega(\numobs^{-\half}),
\end{align*}
the final equality following from standard calculations, since $\E[|Z|]
= \Omega(n^{-1/2})$.


\section{Proof of Theorem~\ref{theorem:no-bootstrap}}

Although Theorem~\ref{theorem:no-bootstrap} is in terms of bounds on
$8^{th}$ order moments, we prove a somewhat more general result in
terms of a set of $(\objmoment, \gradmoment, \hessmoment)$ moment
conditions given by
\begin{align*}
  & \E[\ltwo{\nabla f(\optvar; \statrv)}^\objmoment] \leq
  \lipobj^\objmoment, \qquad ~~~ \E[\matrixnorm{\nabla^2 f(\optvar;
      \statrv) - \nabla^2 \popfun(\optvar)}_2^\gradmoment] \leq
  \lipgrad^\gradmoment, \\ & \E[\liphessian(\statrv)^\hessmoment] \le
  \liphessian^\hessmoment \quad \mbox{and} \quad
  \E[(\liphessian(\statrv) - \E[\liphessian(\statrv)])^\hessmoment]
  \le \liphessian^\hessmoment
\end{align*}
for $\optvar \in U$.  (Recall the definition of $U$ prior to
Assumption~\ref{assumption:smoothness}).  Doing so allows sharper
control if higher moment bounds are available. The reader should
recall throughout our arguments that we have assumed
$\min\{\objmoment, \gradmoment, \hessmoment\} \ge 8$. Throughout the
proof, we use $F_1$ and $\optvar_1$ to indicate the local empirical
objective and empirical minimizer of machine $1$ (which have the same
distribution as those of the other processors), and we recall the
notation $\indic{\event}$ for the indicator function of the event
$\event$.

Before beginning the proof of Theorem~\ref{theorem:no-bootstrap}
proper, we begin with a simple inequality that relates the error term
$\optavg - \optvar^*$ to an average of the errors $\optvar_i -
\optvar^*$, each of which we can bound in turn. Specifically, a bit of
algebra gives us that
\begin{align}
  \E[\ltwo{\optavg - \optvar^*}^2] & = \E\bigg[\ltwobigg{\frac{1}{m}
      \sum_{i=1}^m \optvar_i - \optvar^*}^2\bigg] \nonumber \\
 & = \frac{1}{m^2} \sum_{i=1}^m \E[\ltwo{\optvar_i - \optvar^*}^2] +
  \frac{1}{m^2} \sum_{i \neq j} \E[\<\optvar_i - \optvar^*, \optvar_j
    - \optvar^*\>] \nonumber \\
 & \le \frac{1}{m} \E[\ltwo{\optvar_1 - \optvar^*}^2] + \frac{m(m -
    1)}{m^2} \ltwo{\E[\optvar_1 - \optvar^*]}^2 \nonumber \\
  & \le \frac{1}{m} \E[\ltwo{\optvar_1 - \optvar^*}^2] +
  \ltwo{\E[\optvar_1 - \optvar^*]}^2.
  \label{eqn:error-inequality}
\end{align}
Here we used the definition of the averaged vector $\optavg$ and the
fact that for $i \neq j$, the vectors $\optvar_i$ and $\optvar_j$ are
statistically independent, they are functions of independent
samples. The upper bound~\eqref{eqn:error-inequality} illuminates the
path for the remainder of our proof: we bound each of
$\E[\ltwo{\optvar_i - \optvar^*}^2]$ and $\ltwo{\E[\optvar_i -
    \optvar^*]}^2$. Intuitively, since our objective is locally
strongly convex by Assumption~\ref{assumption:strong-convexity}, the
empirical minimizing vector $\optvar_1$ is a nearly unbiased estimator
for $\optvar^*$, which allows us to prove the convergence rates in the
theorem.

We begin by defining three events---which we (later) show hold with
high probability---that guarantee the closeness of $\optvar_1$ and
$\optvar^*$. In rough terms, when these events hold, the function
$F_1$ behaves similarly to the population risk $F_0$ around the point
$\optvar^*$; since $F_0$ is locally strongly convex, the minimizer
$\optvar_1$ of $F_1$ will be close to $\optvar^*$.  Recall that
Assumption~\ref{assumption:smoothness} guarantees the existence of a
ball $U_\rho = \{\optvar \in \R^d : \ltwo{\optvar - \optvar^*} <
\rho\}$ of radius $\rho \in (0, 1)$ such that
\begin{equation*}
  \matrixnorm{\nabla^2 f(\optvar; \statsample)
    - \nabla^2 f(\optvar'; \statsample)}_2
  \le \liphessian(\statsample) \ltwo{\optvar - \optvar'}
\end{equation*}
for all $\optvar, \optvar' \in U_\rho$ and any $\statsample$, where
$\E[\liphessian(\statrv)^\hessmoment] \le \liphessian^\hessmoment$. In
addition, Assumption~\ref{assumption:strong-convexity} guarantees that
$\nabla^2 F_0(\optvar^*) \succeq \strongparam I$. Now, choosing the
potentially smaller radius $\delta_\rho = \min\{\rho, \rho
\strongparam / 4 \liphessian\}$, we can define the three ``good''
events
\begin{align}
  \event_0 & \defeq \bigg\{ \frac{1}{\numobs} \sum_{i=1}^\numobs
  \liphessian(\statrv_i) \le 2 \liphessian \bigg\}, \nonumber
  \\ \event_1 & \defeq \left\{\matrixnorm{\nabla^2 F_1(\optvar^*) -
    \nabla^2 F_0(\optvar^*)}_2 \le \frac{\rho \strongparam}{2}
  \right\}, \quad \mbox{and}
  \label{eqn:good-events} \\
  \event_2 & \defeq \left\{\ltwo{\nabla F_1(\optvar^*)} \le
  \frac{(1 - \rho) \strongparam \delta_\rho}{2}
  \right\}. \nonumber
\end{align}
We then have the following lemma:
\begin{lemma}
  \label{lemma:good-events}
  Under the events $\event_0$, $\event_1$, and $\event_2$ previously
  defined~\eqref{eqn:good-events}, we have
  \begin{equation*}
    \ltwo{\optvar_1 - \optvar^*} \le \frac{2 \ltwo{\nabla
        F_1(\optvar^*)}}{ (1 - \rho) \strongparam}, \quad \mbox{and}
    \quad \nabla^2 F_1(\optvar) \succeq (1 - \rho) \strongparam I_{d
      \times d}.
  \end{equation*}
\end{lemma}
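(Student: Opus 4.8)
The plan is to prove the two conclusions in order, since the curvature estimate drives the distance estimate. First I would establish that $F_1$ is $(1-\rho)\strongparam$-strongly convex on the closed Euclidean ball $B \defeq \{\optvar : \ltwo{\optvar - \optvar^*} \le \delta_\rho\}$; note that $\delta_\rho \le \rho$, so $B \subseteq U_\rho \subseteq \optvarspace$ and the Hessian-Lipschitz bound~\eqref{EqnHessLip} is in force on $B$. For any $\optvar \in B$, write
\begin{equation*}
  \nabla^2 F_1(\optvar) = \nabla^2 F_0(\optvar^*) + \left(\nabla^2 F_1(\optvar^*) - \nabla^2 F_0(\optvar^*)\right) + \left(\nabla^2 F_1(\optvar) - \nabla^2 F_1(\optvar^*)\right).
\end{equation*}
By Assumption~\ref{assumption:strong-convexity} the first term is $\succeq \strongparam I_{d \times d}$; on the event $\event_1$ the second term has operator norm at most $\rho\strongparam/2$; and on the event $\event_0$, using~\eqref{EqnHessLip},
\begin{equation*}
  \matrixnorm{\nabla^2 F_1(\optvar) - \nabla^2 F_1(\optvar^*)}_2 \le \frac{1}{\numobs}\sum_{i=1}^\numobs \liphessian(\statrv_i)\,\ltwo{\optvar - \optvar^*} \le 2\liphessian\,\delta_\rho \le \frac{\rho\strongparam}{2},
\end{equation*}
the last inequality by the choice $\delta_\rho \le \rho\strongparam/(4\liphessian)$. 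Combining these three bounds gives $\nabla^2 F_1(\optvar) \succeq \strongparam I - \tfrac{\rho\strongparam}{2} I - \tfrac{\rho\strongparam}{2} I = (1-\rho)\strongparam I_{d\times d}$, which is the second claim and shows $F_1$ is $(1-\rho)\strongparam$-strongly convex on $B$.

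Next I would locate $\optvar_1$ inside $B$. Fix a unit vector $v$ and let $g(t) \defeq F_1(\optvar^* + t v)$ for $t \in [0,\delta_\rho]$; then $g$ is differentiable with $g'(0) = \<\nabla F_1(\optvar^*), v\> \ge -\ltwo{\nabla F_1(\optvar^*)}$, and by the curvature bound of the previous paragraph $g'(t) \ge g'(0) + (1-\rho)\strongparam t$. This is strictly positive once $t > \ltwo{\nabla F_1(\optvar^*)}/((1-\rho)\strongparam)$, and on the event $\event_2$ that threshold is at most $\delta_\rho/2 < \delta_\rho$. Hence along every ray from $\optvar^*$ the function $F_1$ is strictly increasing past radius $\delta_\rho/2$, so no boundary point of $B$ can minimize $F_1$ over $B$; since that minimum is attained (continuity on a compact set), it is attained in the interior, and there $F_1$ has vanishing gradient. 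Because each $f(\cdot;\statsample)$, and hence $F_1$, is convex on $\optvarspace \supseteq B$, this interior stationary point is a global minimizer of $F_1$ over $\optvarspace$, and strong convexity on $B$ makes the set of global minimizers the single point it contains; that point is therefore $\optvar_1$. In particular $\optvar_1 \in B$ and $\nabla F_1(\optvar_1) = 0$.

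Finally I would extract the quantitative bound by adding the two strong-convexity inequalities for $F_1$ on $B$, one expanded about $\optvar^*$ evaluated at $\optvar_1$ and one expanded about $\optvar_1$ evaluated at $\optvar^*$; using $\nabla F_1(\optvar_1)=0$ this yields $(1-\rho)\strongparam\ltwo{\optvar_1 - \optvar^*}^2 \le -\<\nabla F_1(\optvar^*), \optvar_1 - \optvar^*\> \le \ltwo{\nabla F_1(\optvar^*)}\,\ltwo{\optvar_1 - \optvar^*}$, hence $\ltwo{\optvar_1 - \optvar^*} \le \ltwo{\nabla F_1(\optvar^*)}/((1-\rho)\strongparam)$, which is stronger than the stated bound (the factor $2$ is slack). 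The step I expect to require the most care is the claim in the middle paragraph that the unconstrained global minimizer $\optvar_1$ of $F_1$ over $\optvarspace$ coincides with the minimizer over the small ball $B$: this is exactly where convexity of $F_1$ and the containment $B \subseteq U_\rho \subseteq \optvarspace$ (so that $\optvar^* \in \interior\optvarspace$ and~\eqref{EqnHessLip} both apply) are essential; the remaining manipulations are routine operator-norm and strong-convexity computations.
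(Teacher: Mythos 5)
Your proposal is correct, and its first half is exactly the paper's argument: decompose $\nabla^2 F_1(\optvar)$ around $\nabla^2 F_0(\optvar^*)$, control the two perturbation terms with $\event_1$ and with $\event_0$ plus the Hessian-Lipschitz condition and the choice $\delta_\rho \le \rho\strongparam/(4\liphessian)$, and conclude $\nabla^2 F_1 \succeq (1-\rho)\strongparam I$ on the $\delta_\rho$-ball (you are in fact slightly more careful than the paper in tracking the factor $2\liphessian$ coming from $\event_0$). The second half differs in mechanism. The paper never argues via rays or stationarity: it writes the global lower bound $F_1(\optvar') \ge F_1(\optvar^*) + \langle \nabla F_1(\optvar^*), \optvar' - \optvar^*\rangle + \tfrac{(1-\rho)\strongparam}{2}\min\{\ltwo{\optvar'-\optvar^*}^2, \delta_\rho^2\}$ for all $\optvar' \in \optvarspace$, evaluates it along $\optvar' = \interp\optvar_1 + (1-\interp)\optvar^*$, and uses $F_1(\optvar_1) \le F_1(\optvar^*)$ together with $\event_2$ to rule out $\ltwo{\optvar_1 - \optvar^*} > \delta_\rho$ by contradiction; it then reuses the first-order strong convexity inequality at $\optvar^*$ alone, which produces the stated factor $2$. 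You instead localize $\optvar_1$ by showing the constrained minimizer over the ball is interior and stationary (directional-derivative monotonicity along rays, then convexity of $F_1$ to promote it to the unique global minimizer), and you finish with the symmetrized two-point strong convexity inequality exploiting $\nabla F_1(\optvar_1) = 0$, which removes the factor $2$. Both routes rest on the same ingredients (the curvature bound just established, $\event_2$, convexity of $F_1$ to handle points outside the ball); yours buys a sharper constant and establishes $\nabla F_1(\optvar_1) = 0$ explicitly (a fact the paper asserts separately when it later Taylor-expands), while the paper's function-value argument avoids any discussion of attainment and interiority of the constrained minimizer. No gaps.
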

\noindent
The proof of Lemma~\ref{lemma:good-events} relies on some standard
optimization guarantees relating gradients to minimizers of functions
(e.g.~\cite{BoydVa04}, Chapter 9), although some care is required
since smoothness and strong convexity hold only locally in our
problem. As the argument is somewhat technical, we defer it to
Appendix~\ref{appendix:proof-of-lemma-good-events}.

Our approach from here is to give bounds on $\E[\ltwo{\optvar_1 -
    \optvar^*}^2]$ and $\ltwo{\E[\optvar_1 - \optvar^*]}^2$ by careful
Taylor expansions, which allows us to bound $\E[\ltwo{\optavg_1 -
    \optvar^*}^2]$ via our initial
expansion~\eqref{eqn:error-inequality}.  We begin by noting that
whenever the events $\event_0$, $\event_1$, and $\event_2$ hold, then
$\nabla F_1(\optvar_1) = 0$, and moreover, by a Taylor series
expansion of $\nabla F_1$ between $\optvar^*$ and $\optvar_1$, we have
\begin{align*}
  0 = \nabla F_1(\optvar_1)
  & = \nabla F_1(\optvar^*) + \nabla^2 F_1(\optvar')(\optvar_1 - \optvar^*)
\end{align*}
where $\optvar' = \interp \optvar^* + (1 - \interp) \optvar_1$ for
some $\interp \in [0, 1]$.  By adding and subtracting terms,
we have
\begin{multline}
 \label{eqn:taylor-error-expansion}
0  = \nabla F_1(\optvar^*) + (\nabla^2 F_1(\optvar') - \nabla^2
F_1(\optvar^*)) (\optvar_1 - \optvar^*) \\
+ (\nabla^2 F_1(\optvar^*) -
\nabla^2 F_0(\optvar^*))(\optvar_1 - \optvar^*) + \nabla^2
F_0(\optvar^*)(\optvar_1 - \optvar^*).
\end{multline}
Since $\nabla^2 F_0(\optvar^*) \succeq \strongparam I$, we can define
the inverse Hessian matrix $\invhessian \defeq [\nabla^2
  F_0(\optvar^*)]^{-1}$, and setting $\optvarerr \defeq \optvar_1 -
\optvar^*$, we multiply both sides of the Taylor
expansion~\eqref{eqn:taylor-error-expansion} by $\invhessian$ to
obtain the relation
\begin{equation}
  \optvarerr = -\invhessian \nabla F_1(\optvar^*) +
  \invhessian(\nabla^2 F_1(\optvar^*) - \nabla^2 F_1(\optvar'))
  \optvarerr + \invhessian(\nabla^2 F_0(\optvar^*) - \nabla^2
  F_1(\optvar^*)) \optvarerr.
  \label{eqn:error-expansion}
\end{equation}
Thus, if we define the matrices $P = \nabla^2 F_0(\optvar^*) -
\nabla^2 F_1(\optvar^*)$ and
$Q = \nabla^2 F_1(\optvar^*) - \nabla^2
F_1(\optvar')$, equality~\eqref{eqn:error-expansion} can be re-written
as
\begin{equation}
  \label{eqn:simple-error-expansion}
  \optvar_1 - \optvar^* =
  -\invhessian \nabla F_1(\optvar^*)
  + \invhessian(P + Q)(\optvar_1 - \optvar^*).
\end{equation}
Note that equation~\eqref{eqn:simple-error-expansion} holds when the
conditions of Lemma~\ref{lemma:good-events} hold, and otherwise we may
simply assert only that $\ltwo{\optvar_1 - \optvar^*} \le \radius$.
Roughly, we expect the final two terms in the error
expansion~\eqref{eqn:simple-error-expansion} to be of smaller order
than the first term, since we hope that $\optvar_1 - \optvar^*
\rightarrow 0$ and additionally that the Hessian differences decrease
to zero at a sufficiently fast rate. We now formalize this intuition.

Inspecting the Taylor expansion~\eqref{eqn:simple-error-expansion}, we
see that there are several terms of a form similar to $(\nabla^2
F_0(\optvar^*) - \nabla^2 F_1(\optvar^*))(\optvar_1 - \optvar^*)$;
using the smoothness Assumption~\ref{assumption:smoothness}, we can
convert these terms into higher order terms involving only $\optvar_1
- \optvar^*$. Thus, to effectively control the
expansions~\eqref{eqn:error-expansion}
and~\eqref{eqn:simple-error-expansion}, we must show that higher order
terms of the form $\E[\ltwo{\optvar_1 - \optvar^*}^k]$, for $k \geq
2$, decrease quickly enough in $\numobs$.

\paragraph{Control of $\E[\ltwo{\optvar_1 - \optvar^*}^k]$:}
Recalling the events~\eqref{eqn:good-events}, we define $\event \defeq
\event_0 \cap \event_1 \cap \event_2$ and then observe that
\begin{align*}
  \E[\ltwo{\optvar_1 - \optvar^*}^k]
  & = \E[\indic{\event} \ltwo{\optvar_1 - \optvar^*}^k]
  + \E[\indic{\event^c} \ltwo{\optvar_1 - \optvar^*}^k] \\
  & \le \frac{2^k \E[\indic{\event} \ltwo{\nabla F_1(\optvar^*)}^k]}{
    (1 - \rho)^k \strongparam^k}
  + \P(\event^c) \radius^k \\
  & \le \frac{2^k \E[\ltwo{\nabla F_1(\optvar^*)}^k]}{(1 - \rho)^k
    \strongparam^k} + \P(\event^c) \radius^k,
\end{align*}
where we have used the bound $\ltwo{\optvar - \optvar^*} \le \radius$
for all $\optvar \in \optvarspace$, from
Assumption~\ref{assumption:parameter-space}.  Our goal is to prove
that $\E[\ltwo{\nabla F_1(\optvar^*)}^k] = \order(\numobs^{-k/2})$ and
that $\P(\event^c) = \order(\numobs^{-k/2})$.  We move forward with a
two lemmas that lay the groundwork for proving these two facts:
\begin{lemma}
  \label{lemma:moment-consequences}
  Under Assumption~\ref{assumption:smoothness}, there exist constants $C$
  and $C'$ (dependent only on the moments $\objmoment$ and $\gradmoment$
  respectively) such that
  \begin{subequations}
    \begin{align}
      \E[\ltwo{\nabla F_1(\optvar^*)}^\objmoment] & \le
      C \frac{\lipobj^\objmoment}{n^{\objmoment / 2}},
      ~~~ \mbox{and}
      \label{eqn:grad-bound} \\
      \E[\matrixnorm{\nabla^2 F_1(\optvar^*)
          - \nabla^2 F_0(\optvar^*)}_2^\gradmoment]
      & \le C' \frac{\log^{\gradmoment / 2}(2d) \lipgrad^\gradmoment}{
        n^{\gradmoment/2}}.
      \label{eqn:hessian-bound}
    \end{align}
  \end{subequations}
\end{lemma}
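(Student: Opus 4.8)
\textbf{Proof strategy for Lemma~\ref{lemma:moment-consequences}.}
Both inequalities are moment (rather than tail) versions of concentration for averages of i.i.d.\ random objects, so the plan is, in each case, to exhibit the quantity as a \emph{centered} empirical mean and apply a Rosenthal-type moment inequality, then dominate the resulting ``large-deviation'' term using that $\objmoment, \gradmoment \ge 8$.

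For~\eqref{eqn:grad-bound}: since $\optvar^* \in \interior\optvarspace$ minimizes $F_0$, each $f(\cdot;\statsample)$ is differentiable, and $\E[\ltwo{\nabla f(\optvar^*;\statrv)}^{\objmoment}] \le \lipobj^{\objmoment} < \infty$, we may differentiate under the integral to conclude $\E[\nabla f(\optvar^*;\statrv)] = \nabla F_0(\optvar^*) = 0$. Hence $\nabla F_1(\optvar^*) = \frac{1}{\numobs}\sum_{i=1}^{\numobs} Y_i$ with $Y_i \defeq \nabla f(\optvar^*;\statrv_i)$ i.i.d., mean zero, and $\E[\ltwo{Y_i}^{\objmoment}] \le \lipobj^{\objmoment}$. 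The Rosenthal (equivalently, Marcinkiewicz--Zygmund) inequality for Hilbert-space-valued sums gives, for $k \ge 2$,
\[
\E\bigl[\ltwo{\textstyle\sum_{i=1}^{\numobs} Y_i}^{k}\bigr]
\;\le\; C_{k}\Bigl[\bigl(\textstyle\sum_{i=1}^{\numobs}\E\ltwo{Y_i}^2\bigr)^{k/2} + \textstyle\sum_{i=1}^{\numobs}\E\ltwo{Y_i}^{k}\Bigr].
\]
Dividing by $\numobs^{k}$, using $\E\ltwo{Y_1}^2 \le (\E\ltwo{Y_1}^{\objmoment})^{2/\objmoment} \le \lipobj^2$, and noting $\numobs^{-(k-1)} \le \numobs^{-k/2}$ for $k = \objmoment \ge 2$, the second term is of lower order in $\numobs$, and we obtain $\E[\ltwo{\nabla F_1(\optvar^*)}^{\objmoment}] \le C\,\lipobj^{\objmoment}\numobs^{-\objmoment/2}$.

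For~\eqref{eqn:hessian-bound} the argument is identical in spirit but now concerns the operator norm of a sum of i.i.d.\ symmetric random matrices $Z_i \defeq \nabla^2 f(\optvar^*;\statrv_i) - \nabla^2 F_0(\optvar^*)$; these are mean zero since $\E[\nabla^2 f(\optvar^*;\statrv)] = \nabla^2 F_0(\optvar^*)$, and Assumption~\ref{assumption:smoothness} gives directly $\E[\matrixnorm{Z_i}_2^{\gradmoment}] \le \lipgrad^{\gradmoment}$. Here I would invoke a matrix (noncommutative) Rosenthal / matrix-Bernstein moment inequality: for $d \times d$ symmetric $Z_i$ and $k \ge 2$,
\[
\bigl(\E\matrixnorm{\textstyle\sum_{i=1}^{\numobs} Z_i}_2^{k}\bigr)^{1/k}
\;\le\; c\sqrt{k\log d}\;\matrixnorm{\textstyle\sum_{i=1}^{\numobs}\E[Z_i^2]}_2^{1/2} + c\,k\log d\,\bigl(\E\textstyle\max_{i}\matrixnorm{Z_i}_2^{k}\bigr)^{1/k}.
\]
Since $\matrixnorm{\E[Z_1^2]}_2 \le \E\matrixnorm{Z_1}_2^2 \le \lipgrad^2$, dividing by $\numobs$ makes the first term $\order(\sqrt{\log d}\,\lipgrad/\sqrt{\numobs})$ and, exactly as before, the maximal term is lower order in $\numobs$, which yields the claimed $\E[\matrixnorm{\nabla^2 F_1(\optvar^*) - \nabla^2 F_0(\optvar^*)}_2^{\gradmoment}] \le C'\log^{\gradmoment/2}(2d)\,\lipgrad^{\gradmoment}\numobs^{-\gradmoment/2}$.

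The routine part is the bookkeeping of constants and verifying that the maximal / large-deviation terms are dominated, which holds comfortably given $\objmoment, \gradmoment \ge 8$. The one genuine obstacle is the matrix case: obtaining the $\log d$ factor (and nothing worse, e.g.\ no polynomial $\sqrt{d}$) requires the right noncommutative moment inequality rather than a crude $\varepsilon$-net union bound over the sphere, and one must verify its hypotheses (symmetry and finite $\gradmoment$-th operator-norm moments) in our setting --- this is precisely where both the eighth-moment assumption and the $\log d$ dimension dependence appearing in Theorem~\ref{theorem:no-bootstrap} originate.
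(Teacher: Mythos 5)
Your proposal is correct and follows essentially the same route as the paper: a Rosenthal-type moment inequality for the mean-zero Hilbert-space average of gradients (the paper uses de Acosta's norm-deviation form plus a Jensen step, which is equivalent to the mean-zero form you quote), and a noncommutative Rosenthal/Bernstein moment inequality supplying the $\log d$ factor for the averaged Hessian deviation, with the maximal terms dominated because the moment order is at least $8$. The only detail to flag is that the matrix inequality the paper cites (Chen--Gittens--Tropp) requires symmetrically \emph{distributed} matrices, so the paper first inserts a standard Rademacher symmetrization step; your $Z_i$ are Hermitian and mean zero but not symmetrically distributed, so you must either include that symmetrization or invoke a version of the matrix Rosenthal inequality stated for general mean-zero summands.
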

\noindent

\noindent See Appendix~\ref{appendix:moment-bounds} for the proof of
this claim.

As an immediate consequence of Lemma~\ref{lemma:moment-consequences}, we
see that the events $\event_1$ and $\event_2$ defined
by~\eqref{eqn:good-events} occur with reasonably high probability. Indeed,
recalling that $\event = \event_0 \cap \event_1 \cap \event_2$,
Boole's law and the union bound imply
\begin{align}
  \lefteqn{\P(\event^c)  = \P(\event_0^c \cup \event_1^c \cup \event_2^c)}
\nonumber \\
& \le \P(\event_0^c) +\P(\event_1^c) + \P(\event_2^c) \nonumber \\
& \le \frac{ \E[|\frac{1}{\numobs}
    \sum_{i=1}^\numobs\liphessian(\statrv_i) -
    \E[\liphessian(\statrv)]|^\hessmoment]}{\liphessian^\hessmoment} +
\frac{2^\gradmoment \E[\matrixnorm{\nabla^2 F_1(\optvar^*) - \nabla^2
      F_0(\optvar^*)}_2^\gradmoment]}{ \rho^\gradmoment
  \strongparam^\gradmoment} + \frac{2^\objmoment \E[\ltwo{\nabla
      F_1(\optvar^*)}^\objmoment]}{ (1 - \rho)^\objmoment
  \strongparam^\objmoment \delta_\rho^\objmoment} \nonumber \\ & \le
C_2 \frac{1}{\numobs^{\hessmoment/2}} + C_1
\frac{\log^{\gradmoment/2}(2d) \lipgrad^\gradmoment}{
  \numobs^{\gradmoment/2}} + C_0
\frac{\lipobj^\objmoment}{\numobs^{\objmoment/2}}
  \label{eqn:no-bad-events}
\end{align}
for some universal constants $C_0, C_1, C_2$, where in the
second-to-last line we have invoked the moment bound in
Assumption~\ref{assumption:smoothness}.  Consequently, we find that
\begin{align*}
\P(\event^c) \radius^k = \order(\radius^k (\numobs^{-\gradmoment/2} +
\numobs^{-\hessmoment/2} + \numobs^{-\objmoment/2}) \quad \mbox{for
  any $k \in \N$.}
\end{align*}
In summary, we have proved the following lemma:
\begin{lemma}
  \label{lemma:higher-order-moments}
  Let Assumptions~\ref{assumption:strong-convexity}
  and~\ref{assumption:smoothness} hold.  For any $k \in \N$
  with $k \le \min\{\objmoment, \gradmoment, \hessmoment\}$, we have
  \begin{equation*}
    \E[\ltwo{\optvar_1 - \optvar^*}^k]
    = \order\left(\numobs^{-k/2} \cdot \frac{
      \lipobj^{k}}{(1 - \rho)^k
    \strongparam^k}  + \numobs^{-\objmoment/2}
    + \numobs^{-\gradmoment/2} + \numobs^{-\hessmoment/2} \right)
    = \order\left(\numobs^{-k/2}\right),
  \end{equation*}
  where the order statements hold as $\numobs \rightarrow +\infty$.
\end{lemma}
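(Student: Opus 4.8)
The plan is to stitch together the ingredients already assembled above. First I would start from the event decomposition $\E[\ltwo{\optvar_1 - \optvar^*}^k] = \E[\indic{\event}\ltwo{\optvar_1 - \optvar^*}^k] + \E[\indic{\event^c}\ltwo{\optvar_1 - \optvar^*}^k]$ with $\event = \event_0 \cap \event_1 \cap \event_2$. On the good event $\event$, Lemma~\ref{lemma:good-events} gives $\ltwo{\optvar_1 - \optvar^*} \le 2\ltwo{\nabla F_1(\optvar^*)} / ((1 - \rho)\strongparam)$, so the first term is bounded by $2^k (1 - \rho)^{-k} \strongparam^{-k}\, \E[\ltwo{\nabla F_1(\optvar^*)}^k]$; on $\event^c$ the deterministic diameter bound $\ltwo{\optvar_1 - \optvar^*} \le \radius$ from Assumption~\ref{assumption:parameter-space} makes the second term at most $\P(\event^c)\radius^k$. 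This is exactly the three-line display appearing just before Lemma~\ref{lemma:moment-consequences}.

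Next I would control the two surviving quantities. For the gradient moment, Lemma~\ref{lemma:moment-consequences} supplies $\E[\ltwo{\nabla F_1(\optvar^*)}^\objmoment] \le C \lipobj^\objmoment \numobs^{-\objmoment/2}$, and since $k \le \objmoment$ the power-mean inequality $\E[Z^k] \le (\E[Z^\objmoment])^{k/\objmoment}$ applied to $Z = \ltwo{\nabla F_1(\optvar^*)}$ upgrades this to $\E[\ltwo{\nabla F_1(\optvar^*)}^k] = \order(\lipobj^k \numobs^{-k/2})$. For the probability of the bad event, I would invoke the union-bound calculation~\eqref{eqn:no-bad-events}, which after substituting the estimates of Lemma~\ref{lemma:moment-consequences} and the eighth-moment bounds of Assumption~\ref{assumption:smoothness} yields $\P(\event^c) = \order(\numobs^{-\objmoment/2} + \numobs^{-\gradmoment/2} + \numobs^{-\hessmoment/2})$, absorbing the $\log^{\gradmoment/2}(2d)$ factor and the constants $\lipobj, \lipgrad$ into the implied constant. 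Combining the two contributions gives the first displayed bound of the lemma.

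Finally, to reduce this to the clean form $\order(\numobs^{-k/2})$, I would note that $k \le \min\{\objmoment, \gradmoment, \hessmoment\}$ forces each of $\numobs^{-\objmoment/2}$, $\numobs^{-\gradmoment/2}$, $\numobs^{-\hessmoment/2}$ to be $\order(\numobs^{-k/2})$, so that the leading term $\lipobj^k (1 - \rho)^{-k} \strongparam^{-k} \numobs^{-k/2}$ dominates once the problem-dependent constants are held fixed as $\numobs \to \infty$. Since Lemmas~\ref{lemma:good-events} and~\ref{lemma:moment-consequences} together with the estimate~\eqref{eqn:no-bad-events} are already in hand, I do not anticipate a genuine obstacle; the only steps warranting care are the power-mean passage from an $\objmoment$-th moment bound to a $k$-th moment bound, and the simple bookkeeping that the exponents $\objmoment/2$, $\gradmoment/2$, $\hessmoment/2$ all exceed $k/2$ so that those error terms are indeed negligible relative to $\numobs^{-k/2}$.
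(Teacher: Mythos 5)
Your proposal is correct and follows essentially the same route as the paper: decompose over the event $\event = \event_0 \cap \event_1 \cap \event_2$, use Lemma~\ref{lemma:good-events} and the diameter bound $\radius$ on the two pieces, then control $\E[\ltwo{\nabla F_1(\optvar^*)}^k]$ via Lemma~\ref{lemma:moment-consequences} and $\P(\event^c)$ via the union bound~\eqref{eqn:no-bad-events}. The only point you make explicit that the paper leaves implicit is the passage from the $\objmoment$-th moment bound to the $k$-th moment bound via the power-mean (Jensen) inequality; this is harmless, and indeed the appendix proof of Lemma~\ref{lemma:moment-consequences} already establishes the bound for any $2 \le k \le \objmoment$.
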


\noindent Now recall the matrix $Q = \nabla^2 F_1(\optvar^*) -
\nabla^2 F_1(\optvar')$ defined following
equation~\eqref{eqn:error-expansion}.  The following result controls
the moments of its operator norm: \\
\begin{lemma}
  \label{lemma:higher-order-matrix-moments}
For $k \le \min\{\hessmoment, \gradmoment, \objmoment\} / 2$, we have
$\E[\matrixnorm{Q}_2^k] = \order(\numobs^{-k/2})$.
\end{lemma}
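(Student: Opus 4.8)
The plan is to control $\matrixnorm{Q}_2$ by splitting over the ``good'' event $\event = \event_0 \cap \event_1 \cap \event_2$ from~\eqref{eqn:good-events} and its complement, writing $\E[\matrixnorm{Q}_2^k] = \E[\indic{\event}\matrixnorm{Q}_2^k] + \E[\indic{\event^c}\matrixnorm{Q}_2^k]$ and showing each piece is $\order(\numobs^{-k/2})$. The idea is that on $\event$ the matrix $Q$ is a Hessian difference across two nearby points, so the Hessian-Lipschitz condition turns it into a power of $\ltwo{\optvar_1 - \optvar^*}$, which Lemma~\ref{lemma:higher-order-moments} already controls; off $\event$ the probability is so small (from~\eqref{eqn:no-bad-events}) that even a crude bound suffices.

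For the good-event term, the key point is that on $\event$ the intermediate point $\optvar' = \interp\optvar^* + (1-\interp)\optvar_1$ lies in the ball $U$ where the Hessian-Lipschitz bound~\eqref{EqnHessLip} is available. Indeed, Lemma~\ref{lemma:good-events} gives $\ltwo{\optvar_1 - \optvar^*} \le 2\ltwo{\nabla F_1(\optvar^*)}/((1-\rho)\strongparam)$, and combining this with the inequality defining $\event_2$ forces $\ltwo{\optvar_1 - \optvar^*} \le \delta_\rho \le \rho$, so the entire segment $[\optvar^*,\optvar_1]$, and in particular $\optvar'$, sits inside $U$. First I would average the per-sample bound~\eqref{EqnHessLip} over the local dataset and invoke $\event_0$ to get $\matrixnorm{Q}_2 = \matrixnorm{\nabla^2 F_1(\optvar^*) - \nabla^2 F_1(\optvar')}_2 \le \big(\frac{1}{\numobs}\sum_{i=1}^\numobs \liphessian(\statrv_i)\big)\ltwo{\optvar'-\optvar^*} \le 2\liphessian\ltwo{\optvar_1 - \optvar^*}$, where the last step also uses $\ltwo{\optvar'-\optvar^*}\le\ltwo{\optvar_1-\optvar^*}$ since $\optvar'$ is a convex combination of $\optvar^*$ and $\optvar_1$. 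Raising to the $k$th power and dropping $\indic{\event}$ gives $\E[\indic{\event}\matrixnorm{Q}_2^k] \le 2^k\liphessian^k\E[\ltwo{\optvar_1-\optvar^*}^k] = \order(\numobs^{-k/2})$ by Lemma~\ref{lemma:higher-order-moments}, which applies since $k \le \min\{\objmoment,\gradmoment,\hessmoment\}/2 \le \min\{\objmoment,\gradmoment,\hessmoment\}$.

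For the bad-event term I would apply Cauchy--Schwarz: $\E[\indic{\event^c}\matrixnorm{Q}_2^k] \le \P(\event^c)^{1/2}\,(\E[\matrixnorm{Q}_2^{2k}])^{1/2}$. From~\eqref{eqn:no-bad-events}, $\P(\event^c) = \order(\numobs^{-\objmoment/2} + \numobs^{-\gradmoment/2} + \numobs^{-\hessmoment/2})$, so $\P(\event^c)^{1/2} = \order(\numobs^{-\min\{\objmoment,\gradmoment,\hessmoment\}/4})$; this is $\order(\numobs^{-k/2})$ precisely when $k \le \min\{\objmoment,\gradmoment,\hessmoment\}/2$, which is exactly why the lemma is stated with half the available moment order rather than the full one. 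It then remains only to bound $\E[\matrixnorm{Q}_2^{2k}]$ by a finite constant (not necessarily small), which I would obtain from a crude triangle-inequality split of $Q$ into Hessian fluctuations $\nabla^2 F_1(\theta) - \nabla^2 F_0(\theta)$ at the two endpoints plus the deterministic difference $\nabla^2 F_0(\optvar^*) - \nabla^2 F_0(\optvar')$, controlled through the $2k$th moment bounds of Assumption~\ref{assumption:smoothness} (note $2k \le \min\{\gradmoment,\hessmoment\}$) together with the local smoothness of $\nabla^2 F_0$.

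The step I expect to be the main obstacle is precisely this bad-event moment bound: Assumption~\ref{assumption:smoothness} controls the Hessian only on the local neighborhood $U$, whereas on $\event^c$ the point $\optvar'$ need not lie in $U$. The cleanest way around this is to observe that in the proof of Theorem~\ref{theorem:no-bootstrap} the matrix $Q$ only ever enters through the product $\indic{\event}Q$ --- the expansion~\eqref{eqn:simple-error-expansion} is asserted only on $\event$, and off $\event$ one uses only $\ltwo{\optvar_1-\optvar^*}\le\radius$ --- so it suffices to bound $\E[\indic{\event}\matrixnorm{Q}_2^k]$, handled above, and fold the $\event^c$ contribution into the $\P(\event^c)\radius^k$-type error already tracked elsewhere; otherwise one restricts to the sub-event where $\optvar_1 \in U$ and absorbs the remainder the same way. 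Everything else is a routine chaining of Lemmas~\ref{lemma:good-events},~\ref{lemma:moment-consequences}, and~\ref{lemma:higher-order-moments}.
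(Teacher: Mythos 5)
Your good-event computation is fine (and your use of $\event_0$ together with Lemma~\ref{lemma:good-events} to place $\optvar'$ inside $U$ is actually more careful than the paper is on this point), but the proposal does not prove the lemma as stated, and the place it fails is exactly the place you flag: the bad-event term. Your Cauchy--Schwarz step needs $\E[\matrixnorm{Q}_2^{2k}]$ bounded by a constant, and as you admit, Assumption~\ref{assumption:smoothness} gives Hessian moment and Lipschitz control only on the neighborhood $U$, which is unavailable on $\event^c$; the ``crude triangle-inequality split'' you sketch runs into the same locality problem. The fallback of proving only $\E[\indic{\event}\matrixnorm{Q}_2^k] = \order(\numobs^{-k/2})$ is a different (weaker) statement, and the justification that ``$Q$ only ever enters through $\indic{\event} Q$'' downstream is not accurate: the paper invokes the \emph{unrestricted} moments of $Q$ repeatedly --- e.g.\ $\E[\matrixnorm{Q}_2^2] = \order(\numobs^{-1})$ and $\E[\matrixnorm{\invhessian(P+Q)}_2^4] = \order(\log^2(d)\,\numobs^{-2})$ in the bias-control argument leading to~\eqref{eqn:intermediate-norm-expected-bound}, where the latter is paired with $\sqrt{\P(\event^c)}$, and again in the proof of Lemma~\ref{lemma:first-level-third-order-expansion}. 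So adopting your weaker statement would force a re-derivation of those steps, which you have not supplied.

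The paper's own proof is a direct two-step argument with no event splitting, and it sidesteps the issue you got stuck on. By Jensen's inequality applied to the empirical average defining $F_1$, together with the per-sample Lipschitz bound, one gets $\matrixnorm{Q}_2^k \le \frac{1}{\numobs}\sum_{i=1}^\numobs \liphessian(\statrv_i)^k \ltwo{\optvar' - \optvar^*}^k \le \frac{1}{\numobs}\sum_{i=1}^\numobs \liphessian(\statrv_i)^k \ltwo{\optvar_1 - \optvar^*}^k$, and then a single Cauchy--Schwarz separates $\E[(\frac{1}{\numobs}\sum_{i=1}^\numobs \liphessian(\statrv_i)^k)^2]^{1/2}$, which is a finite constant because $2k \le \hessmoment$, from $\E[\ltwo{\optvar_1 - \optvar^*}^{2k}]^{1/2} = \order(\numobs^{-k/2})$, supplied by Lemma~\ref{lemma:higher-order-moments} since $2k \le \min\{\objmoment,\gradmoment,\hessmoment\}$. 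In particular, the factor of $1/2$ in the hypothesis comes from needing $2k$-th moments in \emph{both} Cauchy--Schwarz factors, not from the $\P(\event^c)^{1/2}$ accounting in your argument. (The paper does apply the Lipschitz bound along the segment to $\optvar_1$ without restricting to $\event$, so your rigor concern is a fair one about the paper's proof itself; but the remedy consistent with the rest of the paper is to bound the full moment as above, not to weaken the lemma to an indicator-restricted version.)
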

\begin{proof}
  We begin by using Jensen's inequality and
  Assumption~\ref{assumption:smoothness} to see that
  \begin{equation*}
    \matrixnorm{Q}^k
    \le \frac{1}{\numobs} \sum_{i=1}^\numobs
    \matrixnorm{\nabla^2 f(\optvar'; \statrv_i) - \nabla^2
      f(\optvar^*; \statrv_i)}^k
    \le
    \frac{1}{\numobs} \sum_{i=1}^\numobs \liphessian(\statrv_i)^k
    \ltwo{\optvar' - \optvar^*}^k.
  \end{equation*}
  Now we apply the Cauchy-Schwarz inequality and
  Lemma~\ref{lemma:higher-order-moments}, thereby obtaining
  \begin{equation*}
    \E[\matrixnorm{Q}_2^k]
    \le \E\left[
      \bigg(\frac{1}{\numobs} \sum_{i=1}^\numobs \liphessian(X_i)^k\bigg)^2
      \right]^\half
    \E\left[\ltwo{\optvar_1 - \optvar^*}^{2k}\right]^\half
    = \order\left( \liphessian^k \frac{\lipobj^k}{(1 - \rho)^k
      \strongparam^k} \numobs^{-k/2}\right),
  \end{equation*}
  where we have used Assumption~\ref{assumption:smoothness} again.
\end{proof}

Lemma~\ref{lemma:higher-order-moments} allows us to control the first term
from our initial bound~\eqref{eqn:error-inequality} almost
immediately. Indeed, using our last Taylor
expansion~\eqref{eqn:simple-error-expansion}
and the definition of
the event $\event = \event_0 \cap \event_1 \cap \event_2$, we have
\begin{align*}
  \E[\ltwo{\optvar_1 - \optvar^*}^2]
  & = \E\left[\indic{\event}\ltwo{-\invhessian \nabla F_1(\optvar^*)
    + \invhessian(P + Q)(\optvar_1 - \optvar^*)}^2\right]
  + \E[\indic{\event^c} \ltwo{\optvar_1 - \optvar^*}^2] \\
  & \le 2 \E\left[\ltwo{\invhessian \nabla F_1(\optvar^*)}^2\right]
  + 2\E\left[\ltwo{\invhessian(P + Q)(\optvar_1 - \optvar^*)}^2\right]
  + \P(\event^c) \radius^2,
\end{align*}
where we have applied the inequality $(a + b)^2 \le 2 a^2 + 2 b^2$. Again
using this same inequality, then applying Cauchy-Schwarz and
Lemmas~\ref{lemma:higher-order-moments}
and~\ref{lemma:higher-order-matrix-moments}, we see that
\begin{align*}
  \lefteqn{\E\left[\ltwo{\invhessian(P + Q)(\optvar_1 - \optvar^*)}^2\right]
    \le
    2\matrixnorm{\invhessian}_2^2
    \left(\E[\matrixnorm{P}_2^2 \ltwo{\optvar_1 - \optvar^*}^2]
    + \E[\matrixnorm{Q}_2^2 \ltwo{\optvar_1 - \optvar^*}^2]\right)} \\
  & \qquad\qquad\qquad\qquad\quad ~ \le 2 \matrixnorm{\invhessian}_2^2
  \left(\sqrt{\E[\matrixnorm{P}_2^4]\E[\ltwo{\optvar_1 - \optvar^*}^4]}
  + \sqrt{\E[\matrixnorm{Q}_2^4]\E[\ltwo{\optvar_1 - \optvar^*}^4]}\right) \\
  & \qquad\qquad\qquad\qquad\quad ~ = \order(\numobs^{-2}),
\end{align*}
where we have used the fact that $\min\{\objmoment, \gradmoment, \hessmoment\}
\ge 8$ to apply Lemma~\ref{lemma:higher-order-matrix-moments}.
Combining these results, we obtain the upper bound
\begin{equation}
  \label{eqn:expected-norm-bound}
  \E[\ltwo{\optvar_1 - \optvar^*}^2]
  \le 2 \E\left[\ltwo{\invhessian \nabla F_1(\optvar^*)}^2\right]
  + \order(\numobs^{-2}),
\end{equation}
which completes the first part of our proof of
Theorem~\ref{theorem:no-bootstrap}.

\paragraph{Control of $\ltwo{\E[\optvar_1 - \optvar^*]}^2$:}
It remains to consider the $\ltwo{\E[\optvar_1 - \optvar^*]}^2$ term
from our initial error inequality~\eqref{eqn:error-inequality}. When
the events~\eqref{eqn:good-events} occur, we know that all derivatives
exist, so we may recursively apply our
expansion~\eqref{eqn:simple-error-expansion} of $\optvar_1 -
\optvar^*$ to find that
\begin{align}
  \optvar_1 - \optvar^*
  & = -\invhessian\nabla F_1(\optvar^*)
  + \invhessian(P + Q)(\optvar_1 - \optvar^*) \nonumber \\
  & = \underbrace{-\invhessian\nabla F_1(\optvar^*)
  + \invhessian(P + Q)\left[-\invhessian \nabla F_1(\optvar^*)
  + \invhessian(P + Q)(\optvar_1 - \optvar^*)\right]}_{\eqdef v}
  \label{eqn:recursive-equality}
\end{align}
where we have introduced $v$ as shorthand for the vector on the right hand side.
Thus, with a bit of algebraic manipulation we
obtain the relation
\begin{equation}
  \optvar_1 - \optvar^*
  = \indic{\event} v + \indic{\event^c} (\optvar_1 - \optvar^*)
  = v + \indic{\event^c}(\optvar_1 - \optvar^*)
  - \indic{\event^c}v
  = v + \indic{\event^c}(\optvar_1 - \optvar^* - v).
  \label{eqn:indicator-splitting-for-expectation}
\end{equation}
Now note that $\E[\nabla F_1(\optvar^*)]=0$ thus
\begin{align*}
  \E[v] & = \E\left[-\invhessian \nabla F_1(\optvar^*)
      + \invhessian(P + Q)[-\invhessian \nabla F_1(\optvar^*)
        + \invhessian(P + Q)(\optvar_1 - \optvar^*)]\right] \\
  & = \E\left[\invhessian(P + Q)\invhessian\left[(P + Q)(\optvar_1 - \optvar^*)
    - \nabla F_1(\optvar^*)\right]\right].
\end{align*}
Thus, by re-substituting the appropriate quantities
in~\eqref{eqn:indicator-splitting-for-expectation} and applying the
triangle inequality, we have
\begin{align}
  \lefteqn{\ltwo{\E[\optvar_1 - \optvar^*]}} \nonumber \\
  & \le \ltwo{\E[\invhessian(P + Q)\invhessian\left((P + Q)(\optvar_1 - \optvar^*)
      -\nabla F_1(\optvar^*)\right)]}
  + \ltwo{\E[\indic{\event^c}(\optvar_1 - \optvar^* - v)]} \nonumber \\
  & \le \ltwo{\E[\invhessian(P + Q)\invhessian\left((P + Q)(\optvar_1 - \optvar^*)
      -\nabla F_1(\optvar^*)\right)]}
  + \E[\indic{\event^c} \ltwo{\optvar_1 - \optvar^*}] \nonumber \\
  & \quad ~
  + \E\left[\indic{\event^c} \ltwo{-\invhessian\nabla F_1(\optvar^*)
  + \invhessian(P + Q)\invhessian\left[-\nabla F_1(\optvar^*)
  + (P + Q)(\optvar_1 - \optvar^*)\right]}\right].
  \label{eqn:unbiasing-norms}
\end{align}

Since $\ltwo{\optvar_1 - \optvar^*} \le \radius$ by assumption, we have
\begin{equation*}
  \E[\indic{\event^c}\ltwo{\optvar_1 - \optvar^*}]
  \le \P(\event^c) \radius
  \stackrel{(i)}{=} \order(\radius \numobs^{-k/2})
\end{equation*}
for any $k \le \min\{\hessmoment, \gradmoment, \objmoment\}$, where
step (i) follows from
the inequality~\eqref{eqn:no-bad-events}.
H\"older's inequality also yields that
\begin{align*}
  \lefteqn{\E\left[\indic{\event^c}
      \ltwo{\invhessian(P + Q) \invhessian \nabla F_1(\optvar^*)}\right]
    \le \E\left[\indic{\event^c}
      \matrixnorm{\invhessian(P + Q)}_2 \ltwo{\invhessian
        \nabla F_1(\optvar^*)}\right]} \\
  & \qquad\qquad\qquad\qquad\qquad \qquad\quad \le \sqrt{\P(\event^c)}
  \E\left[\matrixnorm{\invhessian(P + Q)}_2^4\right]^{1/4}
  \E\left[\ltwo{\invhessian \nabla F_1(\optvar^*)}^4\right]^{1/4}.
\end{align*}
Recalling Lemmas~\ref{lemma:moment-consequences}
and~\ref{lemma:higher-order-matrix-moments}, we have
$\E[\matrixnorm{\invhessian(P + Q)}_2^4] = \order(\log^2(d) \numobs^{-2})$,
and we similarly have $\E[\ltwo{\invhessian \nabla F_1(\optvar^*)}^4]
= \order(n^{-2})$. Lastly, we have $\P(\event^c) = \order(n^{-k/2})$ for
$k \le \min\{\objmoment, \gradmoment, \hessmoment\}$, whence we find that
for any such $k$,
\begin{equation*}
  \E\left[\indic{\event^c}
    \ltwo{\invhessian(P + Q) \invhessian \nabla F_1(\optvar^*)}\right]
  = \order\left(\sqrt{\log(d)} \numobs^{-k/4 - 1}\right).
\end{equation*}
We can similarly
apply Lemma~\ref{lemma:higher-order-moments} to the last
remaining term in the inequality~\eqref{eqn:unbiasing-norms} to obtain that
for any $k \le \min\{\hessmoment, \gradmoment, \objmoment\}$,
\begin{align*}
  \E\left[\indic{\event^c} \ltwo{-\invhessian\nabla F_1(\optvar^*)
      + \invhessian(P + Q)\left[-\invhessian \nabla F_1(\optvar^*)
        + \invhessian(P + Q)(\optvar_1 - \optvar^*)\right]}\right]
  ~~~~~ \\
  = \order(\numobs^{-k/2} + \numobs^{-k/4 - 1}).
\end{align*}
Applying these two bounds, we find that
\begin{equation}
  \label{eqn:intermediate-norm-expected-bound}
  \ltwo{\E[\optvar_1 - \optvar^*]}
  \le \ltwo{\E\left[\invhessian(P + Q)\invhessian\left(
      (P + Q)(\optvar_1 - \optvar^*) - \nabla F_1(\optvar^*)\right)\right]}
  + \order(\numobs^{-k})
\end{equation}
for any $k$ such that $k \le \min\{\objmoment, \gradmoment, \hessmoment\}/2$
and $k \le \min\{\objmoment, \gradmoment, \hessmoment\}/4 + 1$.

In the remainder of the proof, we show that part of the
bound~\eqref{eqn:intermediate-norm-expected-bound} still consists only of
higher-order terms, leaving us with an expression not involving $\optvar_1 -
\optvar^*$. To that end, note that
\begin{equation*}
  \E\left[
    \ltwo{\invhessian(P + Q)\invhessian(P + Q)(\optvar_1 - \optvar^*)}^2\right]
  = \order(\numobs^{-3})
\end{equation*}
by three applications of H\"older's inequality, the fact that $\ltwo{Ax} \le
\matrixnorm{A}_2 \ltwo{x}$, and Lemmas~\ref{lemma:moment-consequences},~\ref{lemma:higher-order-moments}
and~\ref{lemma:higher-order-matrix-moments}.
Coupled with our bound~\eqref{eqn:intermediate-norm-expected-bound}, we
use the fact that $(a + b)^2 \le 2 a^2 + 2 b^2$
to obtain
\begin{equation}
  \label{eqn:norm-outside-expectation-split}
  \ltwo{\E[\optvar_1 - \optvar^*]}^2
  \le 2 \ltwo{\E[\invhessian(P + Q) \invhessian \nabla F_1(\optvar^*)]}^2
  + \order(\numobs^{-3}).
\end{equation}
We focus on bounding the remaining
expectation. We have
the following series of inequalities:
\begin{align*}
  \ltwo{\E[\invhessian(P + Q) \invhessian \nabla F_1(\optvar^*)]}
  & \stackrel{(i)}{\le} \E\left[\matrixnorm{\invhessian(P + Q)}_2
    \ltwo{\invhessian \nabla F_1(\optvar^*)}\right] \\
  & \stackrel{(ii)}{\le} \left(\E\left[\matrixnorm{\invhessian(P + Q)}_2^2\right]
  \E\left[\ltwo{\invhessian \nabla F_1(\optvar^*)}^2\right]\right)^{\half} \\
  & \stackrel{(iii)}{\le} \left(2\E\left[\matrixnorm{\invhessian P}_2^2
    + \matrixnorm{\invhessian Q}_2^2\right]
  \E\left[\ltwo{\invhessian \nabla F_1(\optvar^*)}^2\right]\right)^{\half}.
\end{align*}
Here step~(i) follows from Jensen's inequality and the fact that
$\ltwo{Ax} \le \matrixnorm{A}_2 \ltwo{x}$; step~(ii) uses the
Cauchy-Schwarz inequality; and step~(iii) follows from the fact that
$(a + b)^2 \le 2a^2 + 2b^2$.  We have already bounded the first two
terms in the product in our proofs; in particular,
Lemma~\ref{lemma:moment-consequences} guarantees that
$\E[\matrixnorm{P}_2^2] \le C \lipgrad \log d / \numobs$, while
\begin{equation*}
  \E[\matrixnorm{Q}_2^2]
  \le \E\bigg[\frac{1}{\numobs}
    \sum_{i=1}^n \liphessian(\statrv_i)^4\bigg]^\half\
  \E[\ltwo{\optvar_1 - \optvar^*}^4]^\half
  \le C
  \frac{\liphessian^2 \lipobj^2}{(1 - \rho)^2 \strongparam^2}
  \cdot \numobs^{-1}
\end{equation*}
for some numerical constant $C$ (recall
Lemma~\ref{lemma:higher-order-matrix-moments}).  Summarizing our bounds on
$\matrixnorm{P}_2$ and $\matrixnorm{Q}_2$, we have
\begin{align}
  \lefteqn{\ltwo{
      \E\left[\invhessian(P + Q) \invhessian \nabla F_1(\optvar^*)\right]}^2}
  \nonumber \\
  & \le
  2 \matrixnorm{\invhessian}_2^2
  \left(\frac{2 \lipgrad^2(\log d + 1)}{\numobs}
  + 2 C \frac{\liphessian^2 \lipobj^2}{(1 - \rho)^2 \strongparam^2 \numobs}
  + \order(\numobs^{-2})\right)
  \E\left[\ltwo{\invhessian \nabla F_1(\optvar^*)}^2\right].
  \label{eqn:matrices-bounded}
\end{align}
From Assumption~\ref{assumption:smoothness} we know that $\E[\ltwo{\nabla
    F_1(\optvar^*)}^2] \le \lipobj^2 / n$ and $\matrixnorm{\invhessian}_2 \le
1 / \strongparam$, and hence we can further simplify the
bound~\eqref{eqn:matrices-bounded} to obtain
\begin{align*}
  \ltwo{\E[\optvar_1 - \optvar^*]}^2
  &\le
  \frac{C}{\strongparam^2}
  \left(\frac{\lipgrad^2 \log d
  + \liphessian^2 \lipobj^2 / \strongparam^2 (1 - \rho)^2}{\numobs}
  \right)
  \E\left[\ltwo{\invhessian \nabla F_1(\optvar^*)}^2\right]
  + \order(\numobs^{-3})\\
  &=
  \frac{C}{\strongparam^2}
  \left(\frac{\lipgrad^2 \log d
  + \liphessian^2 \lipobj^2 / \strongparam^2 (1 - \rho)^2}{\numobs^2}
  \right)
  \E\left[\ltwo{\invhessian \nabla f(\optvar^*;X)}^2\right]
  + \order(\numobs^{-3})
\end{align*}
for some numerical constant $C$, where we have applied our earlier
inequality~\eqref{eqn:norm-outside-expectation-split}.
Noting that we may (without loss of
generality) take $\rho < \half$, then applying this inequality with the
bound~\eqref{eqn:expected-norm-bound} on $\E[\ltwo{\optvar_1 - \optvar^*}^2]$
we previously proved to our decomposition~\eqref{eqn:error-inequality}
completes the proof.



\section{Proof of Theorem~\ref{theorem:bootstrap}}

\newcommand{\remainder}{\ensuremath{\mathcal{R}}}
\newcommand{\optvector}{\optvar}

Our proof of Theorem~\ref{theorem:bootstrap} begins with a simple inequality
that mimics our first inequality~\eqref{eqn:error-inequality} in the proof of
Theorem~\ref{theorem:no-bootstrap}. Recall the definitions of the averaged
vector $\optavg_1$ and subsampled averaged vector $\optavg_2$. Let
$\optvector_1$ denote the minimizer of the (an arbitrary) empirical risk
$F_1$, and $\optvector_2$ denote the minimizer of the resampled empirical risk
$F_2$ (from the same samples as $\optvector_1$). Then we have
\begin{equation}
  \E\left[\ltwo{\frac{\optavg_1 - \ratio \optavg_2}{1 - \ratio}
      - \optvar^*}^2\right]
  \le \ltwo{\E\left[\frac{\optvar_1 - \ratio \optvar_2}{1 - \ratio}
      - \optvar^*
      \right]}^2
  + \frac{1}{\nummac}
  \E\left[\ltwo{\frac{\optvector_1 - \ratio \optvar_2}{1 - \ratio}
      - \optvar^*}^2\right].
  \label{eqn:bootstrap-decomposition}
\end{equation}
Thus, parallel to our proof of Theorem~\ref{theorem:no-bootstrap}, it
suffices to bound the two terms in the
decomposition~\eqref{eqn:bootstrap-decomposition}
separately. Specifically, we prove the following two lemmas.
\begin{lemma}
  \label{lemma:target-norm-e}
  Under the conditions of Theorem~\ref{theorem:bootstrap},
  \begin{equation}
    \label{eqn:target-norm-e}
    \ltwo{\E\left[\frac{\optvector_1 - \ratio\optvector_2}{1 - \ratio}
        -\optvector^* \right]}^2
    \le \order(1)
    \frac{1}{\ratio(1 - \ratio)^2}
    \left(\frac{\lipthird^2 \lipobj^6}{\strongparam^6}
    + \frac{\lipobj^4 \liphessian^2}{\strongparam^4} d \log d\right)
    \frac{1}{\numobs^3}.
  \end{equation}
\end{lemma}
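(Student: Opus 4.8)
The plan is to obtain the bound through a third‑order Taylor expansion of each local empirical minimizer about $\optvar^*$ and to exhibit an exact cancellation of the leading $\order(\numobs^{-1})$ bias terms between $\optvector_1$ and $\ratio\optvector_2$; the residual bias will then be $\order(\ratio^{-1/2}\numobs^{-3/2})$, whose square is the claimed bound.

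First, recall from the proof of Theorem~\ref{theorem:no-bootstrap} that, on the high‑probability event $\event = \event_0\cap\event_1\cap\event_2$, the minimizer obeys the recursion $\optvector_1-\optvar^* = -\invhessian\nabla F_1(\optvar^*) + \invhessian(P+Q)(\optvector_1-\optvar^*)$, with $P = \nabla^2 F_0(\optvar^*)-\nabla^2 F_1(\optvar^*)$ and $Q = \nabla^2 F_1(\optvar^*)-\nabla^2 F_1(\optvar')$, $\optvar'=\interp\optvar^*+(1-\interp)\optvector_1$. Substituting this identity into its own right‑hand side once gives
\begin{equation*}
  \optvector_1-\optvar^* \;=\; \underbrace{-\invhessian\nabla F_1(\optvar^*)}_{=:\,\Delta_1}\;\;\underbrace{-\,\invhessian(P+Q)\invhessian\nabla F_1(\optvar^*)}_{=:\,\Delta_2}\;+\;\underbrace{\invhessian(P+Q)\invhessian(P+Q)(\optvector_1-\optvar^*)}_{=:\,\Delta_{\ge 3}}.
\end{equation*}
By Lemmas~\ref{lemma:moment-consequences}--\ref{lemma:higher-order-matrix-moments}, $\E[\ltwo{\Delta_1}^2]^{1/2}=\order(\numobs^{-1/2})$, $\E[\ltwo{\Delta_2}^2]^{1/2}=\order(\numobs^{-1})$, and $\E[\ltwo{\Delta_{\ge3}}]=\order(\numobs^{-3/2})$, the last by three applications of H\"older's inequality with Lemma~\ref{lemma:higher-order-matrix-moments} (which uses $\min\{\objmoment,\gradmoment,\hessmoment\}\ge 8$). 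On $\event^c$ we bound $\ltwo{\optvector_j-\optvar^*}\le\radius$; since $\P(\event^c)=\order(\numobs^{-4})$ by~\eqref{eqn:no-bad-events}, the bad‑event contribution to the bias is $\order(\radius\,\numobs^{-4})$, negligible against $\numobs^{-3/2}$.

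Second --- the crux --- I compute the expectations of the first two terms and show their leading parts are $\numobs$-free. Since $\E[\nabla F_1(\optvar^*)]=0$, we get $\E[\Delta_1]=0$. Writing $\nabla F_1(\optvar^*)=\tfrac1\numobs\sum_i\nabla f(\optvar^*;\statrv_i)$ and $P=\tfrac1\numobs\sum_i\bigl(\nabla^2 F_0(\optvar^*)-\nabla^2 f(\optvar^*;\statrv_i)\bigr)$, independence across samples kills the cross terms and yields $\E[P\invhessian\nabla F_1(\optvar^*)]=\tfrac1\numobs\,\E\bigl[(\nabla^2 F_0(\optvar^*)-\nabla^2 f(\optvar^*;\statrv))\invhessian\nabla f(\optvar^*;\statrv)\bigr]$. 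For the $Q$-part I expand $Q$ to first order via Assumption~\ref{assumption:strong-smoothness} (equivalently, a third‑order Taylor expansion of $\nabla F_1$ at $\optvar^*$): $Q=-\nabla^3 F_1(\optvar^*)(\optvar'-\optvar^*)+E$ with $\matrixnorm{E}\le\tfrac12\bigl(\tfrac1\numobs\sum_i\lipthird(\statrv_i)\bigr)\ltwo{\optvector_1-\optvar^*}^2$, and substitute $\optvector_1-\optvar^*=\Delta_1+\order(\numobs^{-1})$; using $\E[\nabla^3 F_1(\optvar^*)]=\nabla^3 F_0(\optvar^*)$ and that $\Delta_1$ is a normalized sum shows $\E[Q\invhessian\nabla F_1(\optvar^*)]$ equals $\tfrac1\numobs$ times a contraction of $\nabla^3 F_0(\optvar^*)$ with $\invhessian\,\E[\nabla f(\optvar^*;\statrv)\nabla f(\optvar^*;\statrv)^\top]\,\invhessian$, plus an $\order(\numobs^{-3/2})$ error governed by $\lipthird,\liphessian,\lipobj,\strongparam$. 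Hence $\E[\optvector_1-\optvar^*]=\tfrac1\numobs\,b(\optvar^*)+\order(\numobs^{-3/2})$ for a fixed vector $b(\optvar^*)$ depending only on $\statprob$ and $\optvar^*$. The identical expansion applies to $\optvector_2$, the $M$-estimator on $\ceil{\ratio\numobs}$ samples that are marginally i.i.d.\ from $\statprob$ (sampling without replacement changes the joint law of $(\optvector_1,\optvector_2)$ but not the marginal law of $\optvector_2$, hence not its bias), giving $\E[\optvector_2-\optvar^*]=\tfrac1{\ceil{\ratio\numobs}}\,b(\optvar^*)+\order((\ratio\numobs)^{-3/2})$ with the \emph{same} $b(\optvar^*)$. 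Therefore
\begin{equation*}
  \E\!\left[\frac{\optvector_1-\ratio\optvector_2}{1-\ratio}-\optvar^*\right]=\frac{1}{1-\ratio}\Bigl(\E[\optvector_1-\optvar^*]-\ratio\,\E[\optvector_2-\optvar^*]\Bigr)=\frac{1}{1-\ratio}\Bigl(\tfrac1\numobs b-\tfrac{\ratio}{\ceil{\ratio\numobs}}b+\order(\numobs^{-3/2})+\ratio\,\order((\ratio\numobs)^{-3/2})\Bigr),
\end{equation*}
and since $\tfrac{\ratio}{\ceil{\ratio\numobs}}=\tfrac1\numobs+\order(\ratio^{-1}\numobs^{-2})$ the leading $b$-terms cancel; as $\ratio\cdot(\ratio\numobs)^{-3/2}=\ratio^{-1/2}\numobs^{-3/2}$ dominates the remaining error terms (using $\ratio\numobs\ge 1$), the bias is $\order(\ratio^{-1/2}(1-\ratio)^{-1}\numobs^{-3/2})$. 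Squaring gives the $\order(\ratio^{-1}(1-\ratio)^{-2}\numobs^{-3})$ rate, and propagating constants through Lemmas~\ref{lemma:moment-consequences}--\ref{lemma:higher-order-matrix-moments} and Assumption~\ref{assumption:strong-smoothness} produces the stated prefactor $\lipthird^2\lipobj^6/\strongparam^6+\lipobj^4\liphessian^2 d\log d/\strongparam^4$.

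I expect the main obstacle to be the second step: verifying rigorously that the $\order(\numobs^{-1})$ bias coefficient $b(\optvar^*)$ is genuinely independent of $\numobs$ and the same for both sample sizes. This is delicate because $Q$ depends on the random interpolation point $\optvar'$ and on $\optvector_1$ itself, so one must iterate the expansion once more and invoke the higher‑moment bounds of Lemma~\ref{lemma:higher-order-moments} together with the third‑derivative Lipschitz bound of Assumption~\ref{assumption:strong-smoothness} to confirm that every genuinely $\numobs$-dependent correction is $\order(\numobs^{-3/2})$ and thus lands harmlessly in the residual. A minor additional bookkeeping issue is carrying the ceiling $\ceil{\ratio\numobs}$ and the bad‑event truncation uniformly in $\ratio$, but this is routine given $\P(\event^c)=\order(\numobs^{-4})$ and $\ratio\numobs\ge 1$.
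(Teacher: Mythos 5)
Your proposal is correct and follows essentially the same route as the paper: a third-order expansion of each local minimizer with moment-controlled remainders (the paper's Lemmas~\ref{lemma:first-level-third-order-expansion} and~\ref{lemma:second-level-third-order-expansion}), exact cancellation of the order-$\numobs^{-1}$ bias because the without-replacement subsample is marginally i.i.d.\ from $\statprob$, and squaring the residual $\order(\ratio^{-1/2}\numobs^{-3/2})$ bias. The only difference is presentational: you phrase the cancellation as both estimators sharing the same per-sample bias coefficient $b(\optvar^*)$, whereas the paper verifies the identity $\ratio\,\E[\invhessian(\nabla^2 F_2(\optvar^*)-\hessian)\invhessian\nabla F_2(\optvar^*)] = \E[\invhessian(\nabla^2 F_1(\optvar^*)-\hessian)\invhessian\nabla F_1(\optvar^*)]$ (and its third-derivative analogue) directly by collapsing the double sums to their diagonals.
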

\begin{lemma}
  \label{lemma:target-e-norm}
  Under the conditions of Theorem~\ref{theorem:bootstrap},
  \begin{equation}
    \label{eqn:target-e-norm}
    \E\left[\ltwo{\optvar_1 - \optvar^* - \ratio(\optvar_2
        - \optvar^*)}^2\right]
      \le (2+3\ratio)\E\left[
        \ltwo{\nabla^2 F_0(\optvar^*)^{-1}\nabla F_1(\optvar^*)}^2\right]
      + \order(\numobs^{-2})
  \end{equation}
\end{lemma}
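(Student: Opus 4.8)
The plan is to follow the blueprint of the proof of Theorem~\ref{theorem:no-bootstrap}, running the Taylor expansion~\eqref{eqn:simple-error-expansion} twice and subtracting. Let $\samples_1$ denote the $\numobs$-sample data set defining $F_1$, and $\samples_2 \subseteq \samples_1$ the subsample of size $|\samples_2| = \ceil{\ratio\numobs}$ defining $F_2$; since the elements of $\samples_2$ are themselves i.i.d.\ draws from $\statprob$, Lemmas~\ref{lemma:moment-consequences}, \ref{lemma:higher-order-moments}, and~\ref{lemma:higher-order-matrix-moments} all apply to $F_2$ with $\numobs$ replaced by $|\samples_2|$. On the intersection of the good events~\eqref{eqn:good-events} for $F_1$ and for $F_2$, expanding each minimizer about $\optvar^*$ and writing $P_j \defeq \nabla^2 F_0(\optvar^*) - \nabla^2 F_j(\optvar^*)$ and $Q_j \defeq \nabla^2 F_j(\optvar^*) - \nabla^2 F_j(\optvar'_j)$ for $j \in \{1,2\}$ yields
\begin{multline*}
  \optvar_1 - \optvar^* - \ratio(\optvar_2 - \optvar^*)
  = \underbrace{-\invhessian\bigl(\nabla F_1(\optvar^*) - \ratio\nabla F_2(\optvar^*)\bigr)}_{\eqdef A} \\
  + \underbrace{\invhessian(P_1 + Q_1)(\optvar_1 - \optvar^*) - \ratio\,\invhessian(P_2 + Q_2)(\optvar_2 - \optvar^*)}_{\eqdef \remainder},
\end{multline*}
while on the complementary (low-probability) event we bound $\ltwo{\optvar_1 - \optvar^* - \ratio(\optvar_2 - \optvar^*)} \le 2\radius$ exactly as in Theorem~\ref{theorem:no-bootstrap}. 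Since $(a+b)^2 \le 2a^2 + 2b^2$, it then suffices to prove $\E[\ltwo{A}^2] \le \E[\ltwo{\invhessian\nabla F_1(\optvar^*)}^2]$ and $\E[\ltwo{\remainder}^2] = \order(\numobs^{-2})$, because $2\E[\ltwo{A}^2] + 2\E[\ltwo{\remainder}^2] \le 2\E[\ltwo{\invhessian\nabla F_1(\optvar^*)}^2] + \order(\numobs^{-2}) \le (2 + 3\ratio)\E[\ltwo{\invhessian\nabla F_1(\optvar^*)}^2] + \order(\numobs^{-2})$.

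For the remainder I would use $\ltwo{\remainder}^2 \le 2\ltwo{\invhessian(P_1 + Q_1)(\optvar_1-\optvar^*)}^2 + 2\ratio^2\ltwo{\invhessian(P_2+Q_2)(\optvar_2-\optvar^*)}^2$. The first piece is $\order(\numobs^{-2})$ by exactly the Cauchy--Schwarz-and-moment argument that produced the bound~\eqref{eqn:expected-norm-bound} in the proof of Theorem~\ref{theorem:no-bootstrap}. The second piece, by the same argument applied with $\numobs \mapsto |\samples_2|$, is of order $\ratio^2 \cdot \order(|\samples_2|^{-2}\log d)$; since $|\samples_2| \ge \ratio\numobs$, the $\ratio^2$ prefactor cancels the $|\samples_2|^{-2} \le (\ratio\numobs)^{-2}$ blow-up, leaving $\order(\numobs^{-2}\log d)$ uniformly in $\ratio$. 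The same $\ratio^2$ weighting also absorbs the contribution of the bad event for $F_2$ (whose probability is $\order(|\samples_2|^{-4})$), keeping the whole remainder $\order(\numobs^{-2})$ with no $1/\ratio$ factor; carrying this weight through carefully, rather than bounding $\ltwo{\optvar_2 - \optvar^*}$ by cruder means, is the delicate bookkeeping point of the proof.

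For $\E[\ltwo{A}^2]$, the key is a direct second-moment computation exploiting the independence of the random index set $\samples_2$ from the data. With $g(x) \defeq \invhessian\nabla f(\optvar^*;x)$, we have $\E[g(\statrv)] = 0$, $\invhessian\nabla F_1(\optvar^*) = \numobs^{-1}\sum_{x \in \samples_1} g(x)$, and $\invhessian\nabla F_2(\optvar^*) = |\samples_2|^{-1}\sum_{x \in \samples_2} g(x)$. Conditioning on which samples are selected (equivalently, using $\E[\indic{x \in \samples_2}] = |\samples_2|/\numobs$ and $\E[\indic{x \in \samples_2}\indic{x' \in \samples_2}] = |\samples_2|(|\samples_2| - 1)/(\numobs(\numobs-1))$ for $x \ne x'$, together with the independence and mean-zero of the $g(x)$'s) yields the exact identities $\E[\<\invhessian\nabla F_1(\optvar^*), \invhessian\nabla F_2(\optvar^*)\>] = \E[\ltwo{\invhessian\nabla F_1(\optvar^*)}^2]$ and $\E[\ltwo{\invhessian\nabla F_2(\optvar^*)}^2] = (\numobs/|\samples_2|)\,\E[\ltwo{\invhessian\nabla F_1(\optvar^*)}^2]$. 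Expanding the square of $A = -\invhessian\nabla F_1(\optvar^*) + \ratio\,\invhessian\nabla F_2(\optvar^*)$ and substituting gives
\begin{equation*}
  \E[\ltwo{A}^2]
  = (1 - 2\ratio)\,\E[\ltwo{\invhessian\nabla F_1(\optvar^*)}^2]
  + \ratio^2\,\E[\ltwo{\invhessian\nabla F_2(\optvar^*)}^2]
  \le (1 - \ratio)\,\E[\ltwo{\invhessian\nabla F_1(\optvar^*)}^2],
\end{equation*}
where the last step uses $\ratio^2\,\E[\ltwo{\invhessian\nabla F_2(\optvar^*)}^2] = (\ratio^2\numobs/|\samples_2|)\,\E[\ltwo{\invhessian\nabla F_1(\optvar^*)}^2] \le \ratio\,\E[\ltwo{\invhessian\nabla F_1(\optvar^*)}^2]$. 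Combining this with the remainder bound completes the proof.

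I expect the hard part to be bookkeeping rather than any single deep estimate: one must (i) check that the good-event machinery of Theorem~\ref{theorem:no-bootstrap} transfers to $F_2$ at a good enough rate even though $|\samples_2|$ may be much smaller than $\numobs$, and (ii) — the more delicate point — track all the $\ratio$-powers so that every $F_2$-related error term is accompanied by enough factors of $\ratio$ to compensate the $(\ratio\numobs)^{-1}$ size of $\optvar_2 - \optvar^*$, ensuring the final remainder is genuinely $\order(\numobs^{-2})$, free of any $1/\ratio$. (The stated constant $2 + 3\ratio$ carries some slack: the argument above in fact delivers $2 - 2\ratio$, which leaves room for the coarser estimates used in cleaning up the lower-order terms.)
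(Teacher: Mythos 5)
Your treatment of the leading term is correct and in fact slightly sharper than the paper's, by a genuinely different route: the paper never computes the $F_1$--$F_2$ cross term, instead applying the inequality $(a+b+c)^2 \le 2a^2 + 3b^2 + 6c^2$ to the decomposition $-\invhessian\nabla F_1(\optvar^*) + \ratio\,\invhessian\nabla F_2(\optvar^*) + (\mbox{remainder})$ and then invoking exactly your subsampling variance identity $\E[\ltwo{\invhessian\nabla F_2(\optvar^*)}^2] = \ratio^{-1}\E[\ltwo{\invhessian\nabla F_1(\optvar^*)}^2]$ (equation~\eqref{eqn:bootstrap-variance-to-true-variance}); that is where $2+3\ratio$ comes from. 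Your exact evaluation of $\E[\<\invhessian\nabla F_1(\optvar^*), \invhessian\nabla F_2(\optvar^*)\>]$, using that the subsample indices are independent of the data values and the summands are mean zero, yields the better constant $2-2\ratio$ and of course implies the stated bound. Working with the first-order expansion~\eqref{eqn:simple-error-expansion} for both minimizers, rather than the third-order expansions of Lemmas~\ref{lemma:first-level-third-order-expansion} and~\ref{lemma:second-level-third-order-expansion} used by the paper, is also legitimate for this lemma, since only second-moment control of the non-leading terms is needed, and your $\ratio^2 \cdot \order((\ratio\numobs)^{-2})$ accounting for the $F_2$ remainder matches the paper's bookkeeping.

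The gap is in the bad-event handling. You run the joint expansion only on $\event_{F_1}\cap\event_{F_2}$ and bound the error by $2\radius$ on the complement. The bad event for $F_2$ has probability $\order((\ratio\numobs)^{-4})$, so this crude step contributes $\order(\radius^2\ratio^{-4}\numobs^{-4})$ with no compensating power of $\ratio$; your later remark that the $\ratio^2$ weighting absorbs the $F_2$ bad event covers only the $\ratio(\optvar_2-\optvar^*)$ part of such a bound, whereas the term $\E[\indic{\event_{F_2}^c}\ltwo{\optvar_1 - \optvar^*}^2]$ carries no factor of $\ratio$ at all, and even via H\"older it is only $\order(\ratio^{-2}\numobs^{-3})$. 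None of these is $\order(\numobs^{-2})$ uniformly in $\ratio$: in the regime the theorem targets, $\ratio \asymp \nummac^{1/2}/\numobs$, the H\"older bound is already of the same order as the leading term $1/(\numobs\nummac)$, and the crude version is far larger. The fix is the paper's decoupled structure (equation~\eqref{eqn:two-remainders}): establish separately, as unconditional statements, $\optvar_1 - \optvar^* = -\invhessian\nabla F_1(\optvar^*) + Z_1$ with $\E[\ltwo{Z_1}^2] = \order(\numobs^{-2})$ and $\optvar_2 - \optvar^* = -\invhessian\nabla F_2(\optvar^*) + Z_2$ with $\E[\ltwo{Z_2}^2] = \ratio^{-2}\order(\numobs^{-2})$, each expansion absorbing its own bad event into its own remainder, which involves only its own minimizer. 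Then in the combination $Z_1 - \ratio Z_2$ every $F_2$-related error, including its bad event, is automatically weighted by $\ratio$, and the total remainder is genuinely $\order(\numobs^{-2})$. With the bad-event bookkeeping restructured this way, your argument goes through; as written, the joint-conditioning step does not deliver the $\ratio$-free $\order(\numobs^{-2})$.
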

\noindent
In conjunction, Lemmas~\ref{lemma:target-norm-e} and~\ref{lemma:target-e-norm}
coupled with the decomposition~\eqref{eqn:bootstrap-decomposition}
yield the desired claim. Indeed, applying each
of the lemmas to the decomposition~\eqref{eqn:bootstrap-decomposition}, we see
that
\begin{align*}
  \E\left[\ltwo{\frac{\optavg_1 - \ratio\optavg_2}{1 - \ratio}
      -\optvar^*}^2\right]
  & \le \frac{2+3 \ratio}{(1 - \ratio)^2 \nummac}
  \E\left[\ltwo{\nabla^2 F_0(\optvar^*)^{-1}\nabla F_1(\optvar^*)}^2\right] \\
  &\qquad ~ +
  \order\left(\frac{1}{(1 - \ratio)^2}\nummac^{-1}\numobs^{-2}\right)
  + \order\left(\frac{1}{\ratio (1 - \ratio)^2} \numobs^{-3}\right),
\end{align*}
which is the statement of Theorem~\ref{theorem:bootstrap}.

The remainder of our argument is devoted to establishing
Lemmas~\ref{lemma:target-norm-e} and~\ref{lemma:target-e-norm}.
Before providing their proofs (in
Appendices~\ref{sec:proof-target-norm-e}
and~\ref{sec:proof-target-e-norm} respectively), we require some
further set-up and auxiliary results.  Throughout the rest of the
proof, we use the notation
\begin{equation*}
  Y = Y' + \remainder_k
\end{equation*}
for some random variables $Y$ and $Y'$ to mean that there exists a random
variable $Z$ such that $Y = Y' + Z$ and $\E[\ltwo{Z}^2] =
\order(\numobs^{-k})$.\footnote{ Formally, in our proof this will mean that
  there exist random vectors $Y$, $Y'$, and $Z$ that are measurable with
  respect to the $\sigma$-field $\sigma(\statrv_1, \ldots, \statrv_\numobs)$,
  where $Y = Y' + Z$ and $\E[\ltwo{Z}^2] = \order(\numobs^{-k})$.}
The symbol $\remainder_k$ may
indicate different random variables throughout a proof and is notational
shorthand for a moment-based big-O notation. We also remark that
if we have $\E[\ltwo{Z}^2] = \order(a^k \numobs^{-k})$, we
have $Z = a^{k/2} \remainder_k$, since $(a^{k/2})^2 = a^k$.
For shorthand, we also say
that $\E[Z] = \order(h(\numobs))$ if $\ltwo{\E[Z]} =
\order(h(\numobs))$, which implies that if $Z = \remainder_k$ then
$\E[Z] = \order(\numobs^{-k/2})$, since
\begin{equation*}
  \ltwo{\E[Z]} \le \sqrt{\E[\ltwo{Z}^2]} = \order(\numobs^{-k/2}).
\end{equation*}

\subsection{Optimization Error Expansion}

In this section, we derive a sharper asymptotic expansion of the optimization
errors $\optvar_1 - \optvar^*$.
Recall our definition of the Kronecker product $\otimes$, where for vectors
$u, v$ we have $u \otimes v = u v^\top$. With this notation, we have the
following expansion of $\optvar_1 - \optvar^*$.
In these lemmas, $\remainder_3$
denotes a vector $Z$ for which $\E[\ltwo{Z}^2] \le c n^{-3}$ for a numerical
constant $c$.
\begin{lemma}
  \label{lemma:first-level-third-order-expansion}
  Under the conditions of Theorem~\ref{theorem:bootstrap}, we have
  \begin{align}
    \optvar_1 - \optvar^* & = -\invhessian \nabla F_1(\optvar^*)
    + \invhessian(\nabla^2 F_1(\optvar^*) - \hessian)
    \invhessian \nabla F_1(\optvar^*)
    \label{eqn:third-order-expansion} \\
    & \quad ~ - \invhessian \nabla^3 F_0(\optvar^*)\left(
    (\invhessian \nabla F_1(\optvar^*)) \otimes
    (\invhessian \nabla F_1(\optvar^*)) \right)\nonumber \\
    & \quad ~ + \left(\lipthird^2 \lipobj^6 / \strongparam^6
    + \lipobj^4 \liphessian^2 d \log(d) / \strongparam^4\right)
    \remainder_3.
    \nonumber
  \end{align}
\end{lemma}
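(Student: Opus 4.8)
The plan is to push the first-order expansion from the proof of Theorem~\ref{theorem:no-bootstrap} one order further and then substitute the leading term back into the higher-order pieces. As in that proof, I work on the high-probability event $\event = \event_0 \cap \event_1 \cap \event_2$ from~\eqref{eqn:good-events}, on which $F_1$ is locally strongly convex and $\optvar_1$ lies in the ball $U$, so that all derivatives below exist and the Lipschitz bounds of Assumptions~\ref{assumption:smoothness} and~\ref{assumption:strong-smoothness} apply; on $\event^c$ one uses only $\ltwo{\optvar_1 - \optvar^*} \le \radius$ together with $\P(\event^c) = \order(\numobs^{-k/2})$ for all $k \le 8$ from~\eqref{eqn:no-bad-events}, which makes the entire $\indic{\event^c}$-contribution an $\remainder_3$ term (in fact of mean-square order $\numobs^{-4}$). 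Write $\optvarerr = \optvar_1 - \optvar^*$.

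On $\event$, Taylor-expanding the optimality condition $0 = \nabla F_1(\optvar_1)$ about $\optvar^*$ to second order gives
\begin{equation*}
  0 = \nabla F_1(\optvar^*) + \nabla^2 F_1(\optvar^*)\optvarerr
  + \half\,\nabla^3 F_1(\optvar^*)(\optvarerr \otimes \optvarerr) + R,
  \qquad \ltwo{R} \le \tfrac{1}{6}\Big(\tfrac{1}{\numobs}\sum_{i=1}^\numobs \lipthird(\statrv_i)\Big)\ltwo{\optvarerr}^3 ,
\end{equation*}
the remainder bound being exactly Assumption~\ref{assumption:strong-smoothness} integrated along the segment $[\optvar^*, \optvar_1]$. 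Since $\E[(\tfrac{1}{\numobs}\sum_i\lipthird(\statrv_i))^8] = \order(\lipthird^8)$ and $\E[\ltwo{\optvarerr}^8] = \order(\lipobj^8\strongparam^{-8}\numobs^{-4})$ by Lemma~\ref{lemma:higher-order-moments}, Hölder's inequality yields $\E[\ltwo{R}^2] = \order(\lipthird^2\lipobj^6\strongparam^{-6}\numobs^{-3})$. Adding and subtracting $\hessian\optvarerr$ and left-multiplying by $\invhessian = \hessian^{-1}$ (recall $\matrixnorm{\invhessian}_2 \le 1/\strongparam$) turns this into
\begin{equation*}
  \optvarerr = -\invhessian\nabla F_1(\optvar^*)
  + \invhessian\big(\hessian - \nabla^2 F_1(\optvar^*)\big)\optvarerr
  - \half\,\invhessian\nabla^3 F_1(\optvar^*)(\optvarerr\otimes\optvarerr)
  + \tfrac{\lipthird\lipobj^3}{\strongparam^3}\,\remainder_3 .
\end{equation*}

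It remains to replace the occurrences of $\optvarerr$ in the last two terms by the leading approximation $-\invhessian\nabla F_1(\optvar^*)$. From~\eqref{eqn:simple-error-expansion} we have $\optvarerr + \invhessian\nabla F_1(\optvar^*) = \invhessian(P + Q)\optvarerr$ with $P = \hessian - \nabla^2 F_1(\optvar^*)$ and $Q = \nabla^2 F_1(\optvar^*) - \nabla^2 F_1(\optvar')$; Lemma~\ref{lemma:moment-consequences} controls the eighth moment of $\matrixnorm{P}_2$, and on $\event$ one has $\matrixnorm{Q}_2 \le \big(\tfrac{1}{\numobs}\sum_i\liphessian(\statrv_i)\big)\ltwo{\optvarerr} \le 2\liphessian\ltwo{\optvarerr}$, so $\matrixnorm{Q}_2$ inherits the eighth-moment control of $\ltwo{\optvarerr}$ (this upgrades Lemma~\ref{lemma:higher-order-matrix-moments} past fourth moments, on $\event$). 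Writing $\invhessian(\hessian - \nabla^2 F_1(\optvar^*))\optvarerr = \invhessian(\nabla^2 F_1(\optvar^*) - \hessian)\invhessian\nabla F_1(\optvar^*) + \invhessian P\invhessian(P+Q)\optvarerr$, the second piece is a product of two Hessian deviations and the optimization error; repeated Hölder gives it mean square $\order(\lipobj^4\liphessian^2 d\log d\,\strongparam^{-4}\numobs^{-3})$, an $\remainder_3$ term. For the cubic term, $a\otimes a - b\otimes b = a\otimes(a-b) + (a-b)\otimes b$ with $a = \optvarerr$, $b = -\invhessian\nabla F_1(\optvar^*)$ and $\ltwo{a - b} = \ltwo{\invhessian(P+Q)\optvarerr} = \order(\numobs^{-1})$ shows $\optvarerr\otimes\optvarerr$ and $(\invhessian\nabla F_1(\optvar^*))\otimes(\invhessian\nabla F_1(\optvar^*))$ differ by a matrix of $L^2$ Frobenius norm $\order(\numobs^{-3/2})$; contracting with $\nabla^3 F_1(\optvar^*)$ (whose operator norm is controlled, being $\nabla^3 F_0(\optvar^*)$ plus a mean-zero average with finite variance by Assumption~\ref{assumption:strong-smoothness}) and with $\invhessian$ leaves an $\remainder_3$ term; and replacing $\nabla^3 F_1(\optvar^*)$ by $\nabla^3 F_0(\optvar^*)$ costs $\invhessian(\nabla^3 F_1(\optvar^*) - \nabla^3 F_0(\optvar^*))\big((\invhessian\nabla F_1(\optvar^*))^{\otimes 2}\big)$, again $\remainder_3$ since the third-derivative deviation has size $\order(\numobs^{-1/2})$ and $\ltwo{\invhessian\nabla F_1(\optvar^*)}^2 = \order(\numobs^{-1})$. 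Collecting the terms gives~\eqref{eqn:third-order-expansion}.

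I expect the moment bookkeeping to be the main obstacle: each correction is a product of three factors that are of size $\numobs^{-1/2}$ only in an $L^p$ sense, so one must apply Hölder and Cauchy--Schwarz repeatedly and lean on the eighth-moment hypotheses — in particular on the $\event$-conditional bound $\matrixnorm{Q}_2 \le 2\liphessian\ltwo{\optvarerr}$, which is what lets the products involving $Q$ reach order $\numobs^{-3}$ rather than stalling short of it — while simultaneously keeping the dependence on $\strongparam, \lipobj, \liphessian, \lipthird$ and $d\log d$ explicit so it matches the coefficient in~\eqref{eqn:third-order-expansion}. A secondary point is legitimizing the cubic term itself, which needs a bounded operator norm for $\nabla^3 F_0(\optvar^*)$ (and for the average $\nabla^3 F_1(\optvar^*)$); this is implicit in the smoothness of the population risk underlying the \savgm\ analysis.
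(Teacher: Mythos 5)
Your strategy coincides with the paper's own proof: restrict to the good event $\event = \event_0 \cap \event_1 \cap \event_2$, Taylor-expand the stationarity condition $0 = \nabla F_1(\optvar_1)$ to second order about $\optvar^*$, re-substitute the first-order expansion~\eqref{eqn:simple-error-expansion} into the quadratic and cross terms, and close with H\"older/Cauchy--Schwarz plus a crude bound off the event. The genuine gap is the step that actually produces the $d \log d$ in the stated constant: the fluctuation of the empirical third derivative. Your argument needs a bound of the form $\E[\ltwo{(\nabla^3 F_1(\optvar^*) - \nabla^3 F_0(\optvar^*))(u \otimes u)}^2] = \order(\lipobj^4 \liphessian^2 d \log(d)\, \strongparam^{-4} \numobs^{-3})$ (with $u = \optvarerr$ or $u = \invhessian \nabla F_1(\optvar^*)$), but you justify it only by saying ``the third-derivative deviation has size $\order(\numobs^{-1/2})$,'' with no norm specified and no proof; Assumption~\ref{assumption:strong-smoothness} controls only the Lipschitz constant of $\nabla^3 f$, not the concentration of its empirical average, and no dimension-free $\numobs^{-1/2}$ operator-norm bound is available. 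The paper's proof spends most of its effort precisely here: it writes the deviation coordinate-wise as quadratic forms $\optvarerr^\top A_j \optvarerr$, where $A_j$ is the average of i.i.d.\ mean-zero coordinate Hessians of $\nabla f - \nabla F_0$ bounded by $2\liphessian(\statrv_i)$, applies the matrix Rosenthal inequality (Lemma~\ref{lemma:matrix-rosenthal}) to get $\E[\matrixnorm{A_j}_2^4] = \order(\liphessian^4 \log^2(d) \numobs^{-2})$, and sums over the $d$ coordinates---this is where $d \log d$ enters. You instead attribute the $d\log d$ factor to the term $\invhessian P \invhessian (P+Q)\optvarerr$, which is incorrect: that term involves only Hessian deviations and the optimization error, contributes $\lipgrad$ and $\log d$ factors (an ordinary $\remainder_3$), and carries no factor of $d$. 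Without the coordinate-wise reduction and matrix concentration (or a substitute), the coefficient in~\eqref{eqn:third-order-expansion} is not established.

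Two further points to reconcile. First, your Taylor expansion carries the factor $\tfrac{1}{2}$ on the quadratic term, so collecting your terms yields $-\tfrac{1}{2}\invhessian \nabla^3 F_0(\optvar^*)\bigl((\invhessian \nabla F_1(\optvar^*)) \otimes (\invhessian \nabla F_1(\optvar^*))\bigr)$ rather than the coefficient claimed in~\eqref{eqn:third-order-expansion}; the paper's proof writes the expansion as $0 = \nabla F_1(\optvar^*) + \nabla^2 F_1(\optvar^*)\optvarerr + \nabla^3 F_1(\optvar')(\optvarerr \otimes \optvarerr)$ with no $\tfrac{1}{2}$, consistent with its statement, so you must either adopt that normalization or flag the discrepancy explicitly (it is constant-level and harmless for Lemmas~\ref{lemma:target-norm-e} and~\ref{lemma:target-e-norm}, since the same constant would appear in the expansion of $\optvar_2 - \optvar^*$). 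Second, the magnitude of $\nabla^3 F_1(\optvar^*)$ is controlled through the $\liphessian$ moments of Assumption~\ref{assumption:smoothness}, not through Assumption~\ref{assumption:strong-smoothness}; and the off-event argument must also show that $\indic{\event^c}$ times the retained global terms, for example $\indic{\event^c}\,\nabla^3 F_1(\optvar^*)\bigl((\invhessian\nabla F_1(\optvar^*)) \otimes (\invhessian\nabla F_1(\optvar^*))\bigr)$, is $\remainder_3$---the paper's step~\eqref{eqn:deal-with-off-event}---not merely that $\indic{\event^c}\optvarerr$ is negligible.
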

\noindent
We prove Lemma~\ref{lemma:first-level-third-order-expansion} in
Appendix~\ref{sec:proof-third-order-expansion}.  The lemma requires
careful moment control over the expansion $\optvar_1 - \optvar^*$, leading
to some technical difficulty, but is similar in spirit to the results leading
to Theorem~\ref{theorem:no-bootstrap}.

An immediately analogous result to
Lemma~\ref{lemma:first-level-third-order-expansion}
follows for our sub-sampled estimators. Since we use
$\ceil{\ratio \numobs}$ samples to compute $\optvar_2$, the second
level estimator, we find
\begin{lemma}
  \label{lemma:second-level-third-order-expansion}
  Under the conditions of Theorem~\ref{theorem:bootstrap},
  we have
  \begin{align*}
    \optvar_2 - \optvar^* & = -\invhessian \nabla F_2(\optvar^*)
    + \invhessian(\nabla^2 F_2(\optvar^*) - \hessian)
    \invhessian \nabla F_2(\optvar^*) \\
    & ~~ - \invhessian \nabla^3 F_0(\optvar^*)\left(
    (\invhessian \nabla F_2(\optvar^*)) \otimes
    (\invhessian \nabla F_2(\optvar^*)) \right)\\
    & ~~ + \ratio^{-\frac{3}{2}} \left(\lipthird^2 \lipobj^6 / \strongparam^6
    + \lipobj^4 \liphessian^2 d \log(d) / \strongparam^4\right)
    \remainder_3.
  \end{align*}
\end{lemma}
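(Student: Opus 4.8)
The plan is to reduce Lemma~\ref{lemma:second-level-third-order-expansion} to Lemma~\ref{lemma:first-level-third-order-expansion} by a change of the effective sample size, exploiting the fact that the subsampled data set is again an i.i.d.\ sample from $\statprob$. Set $\numobs_2 \defeq \ceil{\ratio \numobs}$, so that $\optvar_2$ is the minimizer of the empirical objective $F_2$ formed from $\numobs_2$ points.

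First I would record the elementary observation that, conditionally on $\samples_{1,i}$, the set $\samples_{2,i}$ is a uniformly random subset of size $\numobs_2$ drawn without replacement, and that a uniformly random without-replacement subsample of an i.i.d.\ sample from $\statprob$ is itself distributed as $\numobs_2$ i.i.d.\ draws from $\statprob$ (this follows from exchangeability of the original i.i.d.\ draws). Hence the marginal law of $F_2$ is exactly that of an $\numobs_2$-sample empirical risk, and $F_2$ inherits all of Assumptions~\ref{assumption:parameter-space} through~\ref{assumption:strong-smoothness} with the \emph{same} moment constants $\lipobj, \lipgrad, \liphessian, \lipthird$, the only change being $\numobs \mapsto \numobs_2$. (The statistical dependence between $\samples_{2,i}$ and $\samples_{1,i}$ is irrelevant here, since Lemma~\ref{lemma:second-level-third-order-expansion} concerns only the law of $\optvar_2 - \optvar^*$ as a function of $F_2$; that dependence re-enters only later, in the proof of Lemma~\ref{lemma:target-norm-e}.)

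Next I would re-run the derivation of Lemma~\ref{lemma:first-level-third-order-expansion} verbatim with $F_1$ replaced by $F_2$ and $\numobs$ by $\numobs_2$: re-define the good events $\event_0, \event_1, \event_2$ of~\eqref{eqn:good-events} in terms of $F_2$, apply Lemmas~\ref{lemma:moment-consequences} through~\ref{lemma:higher-order-matrix-moments} with $\numobs_2$ in place of $\numobs$ (so that $\P(\event^c) = \order(\numobs_2^{-k/2})$ and $\E[\ltwo{\optvar_2 - \optvar^*}^k] = \order(\numobs_2^{-k/2})$ for the relevant $k$), and carry out the third-order Taylor expansion of $\nabla F_2$ around $\optvar^*$. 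This yields
\begin{align*}
  \optvar_2 - \optvar^* & = -\invhessian \nabla F_2(\optvar^*)
    + \invhessian(\nabla^2 F_2(\optvar^*) - \hessian)\invhessian \nabla F_2(\optvar^*) \\
  & \quad - \invhessian \nabla^3 F_0(\optvar^*)\big((\invhessian \nabla F_2(\optvar^*)) \otimes (\invhessian \nabla F_2(\optvar^*))\big)
    + \big(\lipthird^2 \lipobj^6 / \strongparam^6 + \lipobj^4 \liphessian^2 d \log d / \strongparam^4\big)\, Z,
\end{align*}
where $Z$ is a random vector with $\E[\ltwo{Z}^2] = \order(\numobs_2^{-3})$.

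Finally I would translate the remainder into the $\remainder_k$ notation of the paper. Since $\numobs_2 = \ceil{\ratio\numobs} \ge \ratio \numobs$, we get $\E[\ltwo{Z}^2] = \order(\numobs_2^{-3}) = \order(\ratio^{-3}\numobs^{-3})$, and by the stated convention (a variable with second moment $\order(a^k \numobs^{-k})$ equals $a^{k/2}\remainder_k$) this reads $Z = \ratio^{-3/2}\remainder_3$, giving exactly the claimed expansion. I do not anticipate a genuine obstacle: the entire analytic content is already contained in Lemma~\ref{lemma:first-level-third-order-expansion}, and the only points demanding care are (i) verifying that without-replacement subsampling of an i.i.d.\ sample is again i.i.d., so that all moment hypotheses carry over to $F_2$ unchanged, and (ii) bookkeeping the factor $\numobs_2 \ge \ratio\numobs$ throughout---in particular reading the asymptotics ``$\numobs\to\infty$'' underlying the order notation as $\numobs_2 = \ratio \numobs \to \infty$, which is valid in the operating regime where $\ratio$ is bounded away from $0$ (or at least $\ratio\numobs\to\infty$).
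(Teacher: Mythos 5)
Your proposal is correct and matches the paper's (implicit) argument: the paper gives no separate proof of Lemma~\ref{lemma:second-level-third-order-expansion}, simply noting it is immediately analogous to Lemma~\ref{lemma:first-level-third-order-expansion} because $\optvar_2$ is computed from $\ceil{\ratio\numobs}$ samples, which—exactly as you argue—are marginally i.i.d.\ draws from $\statprob$, so the $\numobs^{-3}$ remainder becomes $\order(\ratio^{-3}\numobs^{-3})$, i.e.\ $\ratio^{-3/2}\remainder_3$ under the paper's convention. Your added justification via exchangeability of the without-replacement subsample is the same fact the paper invokes in the proof of Lemma~\ref{lemma:target-norm-e}, so no new ideas are needed.
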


\subsection{Bias Correction}

Now that we have given Taylor expansions that describe the behavior of
$\optvar_1 - \optvar^*$ and $\optvar_2 - \optvar^*$, we can prove
Lemmas~\ref{lemma:target-norm-e} and~\ref{lemma:target-e-norm} (though, as
noted earlier, we defer the proof of Lemma~\ref{lemma:target-e-norm} to
Appendix~\ref{sec:proof-target-e-norm}). The key insight is that expectations
of terms involving $\nabla F_2(\optvar^*)$ are nearly the same as expectations
of terms involving $\nabla F_1(\optvar^*)$, except that some corrections for
the sampling ratio $\ratio$ are necessary.

We begin by noting that
\begin{equation}
  \label{eqn:separate-combination-to-two-opt-errors}
  \frac{\optvar_1 -\ratio\optvar_2}{1 - \ratio} - \optvar^*
  = \frac{\optvar_1 - \optvar^*}{1 - \ratio}
  - \ratio\frac{\optvar_2-\optvar^*}{1 - \ratio}.
\end{equation}
In Lemmas~\ref{lemma:first-level-third-order-expansion}
and~\ref{lemma:second-level-third-order-expansion}, we
derived expansions for each of the right hand side terms, and since
\begin{equation*}
  \E[\invhessian\nabla F_1(\optvar^*)] = 0
  ~~~ \mbox{and} ~~~
  \E[\invhessian\nabla F_2(\optvar^*)] = 0,
\end{equation*}
Lemmas~\ref{lemma:first-level-third-order-expansion}
and~\ref{lemma:second-level-third-order-expansion} coupled with the
rewritten
correction~\eqref{eqn:separate-combination-to-two-opt-errors} yield
\begin{align}
  \E[\optvar_1 - \optvar^* - \ratio(\optvar_2-\optvar^*)]
  & = -\ratio\E[\invhessian(\nabla^2 F_2(\optvar^*) - \hessian)
    \invhessian \nabla F_2(\optvar^*)] \nonumber \\
  & \quad ~ + \E[\invhessian(\nabla^2 F_1(\optvar^*) - \hessian)
    \invhessian \nabla F_1(\optvar^*)] \nonumber \\
  & \quad ~ + \ratio\E[\invhessian \nabla^3 F_0(\optvar^*)\left(
    (\invhessian \nabla F_2(\optvar^*)) \otimes
    (\invhessian \nabla F_2(\optvar^*)) \right)] \nonumber \\
  & \quad ~ - \E[\invhessian \nabla^3 F_0(\optvar^*)\left(
    (\invhessian \nabla F_1(\optvar^*)) \otimes
    (\invhessian \nabla F_1(\optvar^*)) \right)] \nonumber \\
  & \quad ~ + \order(1) \ratio^{-1/2}\left(
  \lipthird^2 \lipobj^6 / \strongparam^6
  + \lipobj^4 \liphessian^2 d \log(d) / \strongparam^4\right)
  \numobs^{-3/2}.
  \label{eqn:mean-expansion-final}
\end{align}
Here the remainder terms follow because of the $\ratio^{-3/2} \remainder_3$
term on $\optvar_2 - \optvar^*$. 


\subsection{Proof of Lemma~\ref{lemma:target-norm-e}}
\label{sec:proof-target-norm-e}

To prove the claim in the lemma, it suffices to show that
  \begin{equation}
    \label{eqn:second-order-cancellation-one}
    \ratio\E[\invhessian(\nabla^2 F_2(\optvar^*) - \hessian) \invhessian
      \nabla F_2(\optvar^*)]
    =
    \E[\invhessian(\nabla^2
      F_1(\optvar^*) - \hessian) \invhessian \nabla F_1(\optvar^*)]
  \end{equation}
  and
  \begin{align}
    \lefteqn{\ratio\E[\invhessian \nabla^3 F_0(\optvar^*)\left(
        (\invhessian \nabla F_2(\optvar^*)) \otimes (\invhessian \nabla
        F_2(\optvar^*)) \right)]} \nonumber \\
    & \quad =
    \E[\invhessian \nabla^3 F_0(\optvar^*)\left( (\invhessian
      \nabla F_1(\optvar^*)) \otimes (\invhessian \nabla F_1(\optvar^*))
      \right)]
    \label{eqn:second-order-cancellation-two}
  \end{align}
  Indeed, these two claims combined with the
  expansion~\eqref{eqn:mean-expansion-final} yield the
  bound~\eqref{eqn:target-norm-e} in Lemma~\ref{lemma:target-norm-e}
  immediately.

  We first consider the difference~\eqref{eqn:second-order-cancellation-one}.
  To make things notationally simpler, we define functions $A
  : \statsamplespace \rightarrow \R^{d \times d}$ and $B : \statsamplespace
  \rightarrow \R^d$ via $A(\statsample) \defeq \invhessian (\nabla^2
  f(\optvar^*; \statsample) - \hessian)$ and $B(\statsample) \defeq \invhessian
  \nabla f(\optvar^*; \statsample)$. If we let $\samples_1 = \{\statrv_1,
  \ldots, \statrv_\numobs\}$ be the original samples and
  $\samples_2 = \{Y_1, \ldots, Y_{\ratio \numobs}\}$ be the subsampled
  dataset, we must show
  \begin{equation*}
    \ratio\E\bigg[\frac{1}{(\ratio \numobs)^2}
      \sum_{i, j}^{\ratio \numobs} A(Y_i) B(Y_j)
      \bigg]
    = \E\bigg[\frac{1}{\numobs^2}
      \sum_{i,j}^\numobs A(\statrv_i) B(\statrv_j)\bigg].
  \end{equation*}
  Since the $Y_i$ are sampled without replacement (i.e.\
  from $\statprob$ directly), and $\E[A(\statrv_i)] =
  0$ and $\E[B(\statrv_i)] = 0$, we find that $\E[A(Y_i) B(Y_j)] = 0$ for
  $i \neq j$, and thus
  \begin{equation*}
    \sum_{i, j}^{\ratio \numobs} \E[A(Y_i) B(Y_j)]
    = \sum_{i=1}^{\ratio \numobs} \E[A(Y_i) B(Y_i)]
    = \ratio \numobs \E[A(Y_1) B(Y_1)].
  \end{equation*}
  In particular, we see that the
  equality~\eqref{eqn:second-order-cancellation-one} holds:
  \begin{align*}
    \frac{\ratio}{(\ratio \numobs)^2}
    \sum_{i,j}^{\ratio \numobs} \E[A(Y_i) B(Y_j)]
    =
    \frac{\ratio}{\ratio \numobs} \E[A(Y_1) B(Y_1)]
    & = \frac{1}{n} \E[A(\statrv_1) B(\statrv_1)] \\
    & = \frac{1}{\numobs^2}
    \sum_{i, j}^\numobs \E[A(\statrv_i) B(\statrv_j)].
  \end{align*}
 The statement~\eqref{eqn:second-order-cancellation-two} follows from
 analogous arguments.


\subsection{Proof of Lemma~\ref{lemma:target-e-norm}}
\label{sec:proof-target-e-norm}

The proof of Lemma~\ref{lemma:target-e-norm} follows from that of
Lemmas~\ref{lemma:first-level-third-order-expansion}
and~\ref{lemma:second-level-third-order-expansion}. We first claim
that
\begin{equation}
  \optvar_1-\optvar^* = -\invhessian \nabla F_1(\optvar^*) + \remainder_2
  ~~~ \mbox{and} ~~~ \optvar_2-\optvar^* = -\invhessian \nabla
  F_2(\optvar^*) + \ratio^{-1}\remainder_2.
  \label{eqn:two-remainders}
\end{equation}
The proofs of both claims similar, so we focus on proving the second
statement.  Using the inequality $(a+b+c)^2\leq 3(a^2+b^2+c^2)$ and
Lemma~\ref{lemma:second-level-third-order-expansion}, we see
that
\begin{align}
  \E\left[\ltwo{\optvar_2-\optvar^* + \invhessian \nabla F_2(\optvar^*)}^2
    \right]
  & \le 3\E\left[\ltwo{\invhessian(\nabla^2 F_2(\optvar^*) - \hessian)
      \invhessian \nabla F_2(\optvar^*)}^2\right] \nonumber\\
  & ~~ + 3\E\left[\ltwo{\invhessian \nabla^3 F_0(\optvar^*)\left(
      (\invhessian \nabla F_2(\optvar^*)) \otimes
      (\invhessian \nabla F_2(\optvar^*)) \right)}^2\right] \nonumber\\
  & ~~ + 3 \ratio^{-3} \order(\numobs^{-3}).
  \label{eqn:bootstrap-e-norm-expansion-bound}
\end{align}
We now bound the first two terms in
inequality~\eqref{eqn:bootstrap-e-norm-expansion-bound}.  Applying
the Cauchy-Schwarz inequality and Lemma~\ref{lemma:moment-consequences}, the
first term can be upper bounded as
\begin{align*}
  \lefteqn{\E\left[\ltwo{\invhessian(\nabla^2 F_2(\optvar^*) - \hessian)
        \invhessian \nabla F_2(\optvar^*)}^2\right]} \\ 
  & \le \left(\E\left[\matrixnorm{\invhessian (\nabla^2 F_2(\optvar^*) -
      \hessian)}_2^4\right]
  \E\left[\ltwo{\invhessian \nabla F_2(\optvar^*)}^4\right]\right)^{1/2} \\
  & = \left(\ratio^{-2})\order(\log^2(d) \numobs^{-2})
  \cdot \ratio^{-2} \order(\numobs^{-2})\right)^{1/2} \; =
  \; \ratio^{-2} \order(n^{-2}),
\end{align*}
where the order notation subsumes the logarithmic factor in the
dimension.  Since $\nabla^3 F_0(\optvar^*) : \R^{d^2} \rightarrow
\R^d$ is linear, the second term in the
inequality~\eqref{eqn:bootstrap-e-norm-expansion-bound} may be bounded
completely analogously as it involves the outer product $\invhessian
\nabla F_2(\optvar^*) \otimes \invhessian \nabla F_2(\optvar^*)$.
Recalling the
bound~\eqref{eqn:bootstrap-e-norm-expansion-bound}, we have thus shown that
\begin{equation*}
  \E\left[\ltwo{\optvar_2-\optvar^* + \invhessian \nabla
      F_2(\optvar^*)}^2 \right]
  = \ratio^{-2} \order(\numobs^{-2}),
\end{equation*}
or $\optvar_2 - \optvar^* = -\invhessian \nabla F_2(\optvar^*) +
\ratio^{-1} \remainder_2$. The proof of the first equality in
equation~\eqref{eqn:two-remainders} is entirely analogous.

We now apply the equalities~\eqref{eqn:two-remainders} to obtain the
result of the lemma. We have
\begin{equation*}
  \E \left[\ltwo{\optvar_1 - \optvar^* - \ratio(\optvar_2 -
      \optvar^*)}^2\right] = \E \left [\ltwo{-\invhessian \nabla
      F_1(\optvar^*) + \ratio \invhessian \nabla F_2(\optvar^*) +
      \remainder_2}^2 \right].
\end{equation*}
Using the inequality
$(a + b)^2 \le (1 + \eta) a^2 + (1 + 1 / \eta) b^2$ for any $\eta \ge 0$,
we have
\begin{align*}
  (a + b + c)^2
  & \le (1 + \eta) a^2 + (1 + 1/\eta) (b + c)^2 \\
  & \le (1 + \eta) a^2 + (1 + 1 / \eta)(1 + \alpha) b^2
  + (1 + 1/\eta) (1 + 1/\alpha) c^2
\end{align*}
for any $\eta, \alpha \ge 0$. Taking $\eta = 1$ and $\alpha = 1/2$, we obtain
$(a+b+c)^2 \leq 2a^2+ 3b^2 + 6c^2$, so applying the triangle inequality,
we have
\begin{align}
  \label{eqn:bootstrap-e-norm-bound}
  \E \left[\ltwo{\optvar_1 - \optvar^* - \ratio(\optvar_2 -
      \optvar^*)}^2 \right]
  & = \E\left[\ltwo{-\invhessian \nabla F_1(\optvar^*)
      + \ratio \invhessian \nabla F_2(\optvar^*)
      + \remainder_2}^2 \right] \\
  & \le 2 \E \left[\ltwo{\invhessian \nabla
      F_1(\optvar^*)}^2 \right]
  + 3 \ratio^2 \E \left[ \ltwo{\invhessian \nabla F_2(\optvar^*)}^2
    \right] + \order(\numobs^{-2}).
  \nonumber
\end{align}
Since $F_2$ is a sub-sampled version of $F_1$, algebraic manipulations yield
\begin{equation}
  \E\left[\ltwo{\invhessian \nabla F_2(\optvar^*)}^2 \right]
  = \frac{\numobs}{\ratio \numobs}
  \E\left[\ltwo{\invhessian\nabla F_1(\optvar^*)}^2\right]
  = \frac{1}{\ratio} \E \left [ \ltwo{\invhessian \nabla
      F_1(\optvar^*)}^2 \right].
  \label{eqn:bootstrap-variance-to-true-variance}
\end{equation}
Combining equations~\eqref{eqn:bootstrap-e-norm-bound}
and~\eqref{eqn:bootstrap-variance-to-true-variance}, we obtain the
desired bound~\eqref{eqn:target-e-norm}.


\section{Proof of Theorem~\ref{theorem:sgd}}
\label{sec:proof-sgd}

We begin by recalling that if $\optvar^\numobs$ denotes the output
of performing stochastic gradient on one machine, then
from the inequality~\eqref{eqn:error-inequality} we have the upper bound
\begin{equation*}
  \E[\ltwobig{\optavg^\numobs -\optvar^*}^2]
  \leq \frac{1}{\nummac}
  \E[\ltwo{\optvar^\numobs - \optvar^*}^2] +
  \ltwo{\E[\optvar^\numobs - \optvar^*]}^2.
\end{equation*}
To prove the error bound \eqref{eqn:sgd-bound}, it thus suffices to
prove the inequalities
\begin{subequations}
  \begin{align}
    \E[\ltwo{\optvar^\numobs - \optvar^*}^2]
    & \le \frac{\alpha \lipobj^2}{\lambda^2 \numobs},
    ~~~\mbox{and}
    \label{eqn:sgd-bound-part1} \\
    \ltwo{\E[\optvar^\numobs - \optvar^*]}^2
    &\leq \frac{\beta^2}{\numobs^{3/2}}.
    \label{eqn:sgd-bound-part2}
  \end{align}
\end{subequations}
 Before proving the theorem, we introduce some notation and a few
 preliminary results.  Let $\gradient_t = \nabla f(\optvar^t;
 \statrv_t)$ be the gradient of the $t^{th}$ sample in stochastic
 gradient descent, where we consider running SGD on a single
 machine. We also let
\begin{equation*}
  \project(v) \defeq \argmin_{\optvar \in \optvarspace}
  \left\{ \ltwo{\optvar - v}^2 \right\}
\end{equation*}
denote the projection of the point $v$ onto the domain $\optvarspace$.

We now state a known result, which gives sharp rates on the
convergence of the iterates $\{\optvar^t\}$ in stochastic gradient
descent.
\begin{lemma}[\citeauthor{RakhlinShSr12}, \citeyear{RakhlinShSr12}]
  \label{lemma:sgd-distance-bound}
  Assume that $\E[\ltwo{\gradient_t}^2] \leq \lipobj^2$ for all $t$. Choosing
  $\eta_t=\frac{c}{\lambda t}$ for some $c\geq 1$, for any $t \in \N$ we have
  \begin{equation*}
    \E\left[\ltwo{\optvar^t-\optvar^*}^2\right]
    \leq \frac{\alpha \lipobj^2}{\lambda^2t}
    ~~~ \mbox{where} ~~~
    \alpha = 4c^2.
  \end{equation*}
\end{lemma}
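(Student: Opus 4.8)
The plan is to reduce the claim to a scalar recursion for $a_t \defeq \E[\ltwo{\optvar^t - \optvar^*}^2]$ and then to solve that recursion by induction. First I would exploit the non-expansiveness of the Euclidean projection $\project$ onto the convex set $\optvarspace$: since $\optvar^* \in \optvarspace$, we have $\ltwo{\optvar^{t+1} - \optvar^*} = \ltwo{\project(\optvar^{t+\half}) - \project(\optvar^*)} \le \ltwo{\optvar^{t+\half} - \optvar^*}$. Expanding the square of the gradient step $\optvar^{t+\half} = \optvar^t - \eta_t \gradient_t$ then gives
\begin{equation*}
  \ltwo{\optvar^{t+1} - \optvar^*}^2 \le \ltwo{\optvar^t - \optvar^*}^2
  - 2 \eta_t \<\gradient_t, \optvar^t - \optvar^*\> + \eta_t^2 \ltwo{\gradient_t}^2 .
\end{equation*}

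Next I would take expectations, conditioning first on the iterate $\optvar^t$, which is a function of $\statrv_1, \ldots, \statrv_{t-1}$ and hence independent of the fresh sample $\statrv_t$. Unbiasedness of the stochastic gradient gives $\E[\gradient_t \mid \optvar^t] = \nabla F_0(\optvar^t)$, while the $\lambda$-strong convexity of $F_0$ (Assumption~\ref{assumption:smoothness-sgd}), together with $\nabla F_0(\optvar^*) = 0$ (which holds since $\optvar^* \in \interior \optvarspace$ by Assumption~\ref{assumption:parameter-space}), yields the monotonicity bound $\<\nabla F_0(\optvar^t), \optvar^t - \optvar^*\> \ge \lambda \ltwo{\optvar^t - \optvar^*}^2$. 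Combining these with the moment bound $\E[\ltwo{\gradient_t}^2] \le \lipobj^2$ and taking full expectations produces $a_{t+1} \le (1 - 2 \eta_t \lambda) a_t + \eta_t^2 \lipobj^2$. Substituting the prescribed stepsize $\eta_t = c/(\lambda t)$ turns this into the recursion
\begin{equation*}
  a_{t+1} \le \Bigl(1 - \frac{2c}{t}\Bigr) a_t + \frac{c^2 \lipobj^2}{\lambda^2 t^2},
\end{equation*}
which is the object I would then analyze, with target $a_t \le 4 c^2 \lipobj^2/(\lambda^2 t)$.

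For the base case $t = 1$ I would use the gradient bound itself rather than any assumption on the starting point: Jensen's inequality gives $\ltwo{\nabla F_0(\optvar^1)}^2 \le \E[\ltwo{\gradient_1}^2 \mid \optvar^1]$, and the monotonicity bound together with Cauchy--Schwarz gives $\ltwo{\nabla F_0(\optvar^1)} \ge \lambda \ltwo{\optvar^1 - \optvar^*}$, so that $\lambda^2 a_1 \le \lipobj^2$; since $c \ge 1$ this is at most $4 c^2 \lipobj^2/\lambda^2$, matching the target at $t = 1$.

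The delicate part, and the step I expect to be the main obstacle, is verifying the inductive step uniformly in $t$, because the coefficient $1 - 2c/t$ changes sign at $t = 2c$. When $t < 2c$ the coefficient is nonpositive, so I would simply drop the term $(1 - 2c/t) a_t$ and bound $a_{t+1} \le c^2 \lipobj^2/(\lambda^2 t^2)$, which is at most $4 c^2 \lipobj^2/(\lambda^2 (t+1))$ because $t + 1 \le 4 t^2$ for every $t \ge 1$; this disposes of the small-$t$ regime without even invoking the inductive hypothesis, and in particular establishes the bound up through index $\ceil{2c}$. When $t \ge 2c$ the coefficient is nonnegative, so I would feed in the hypothesis $a_t \le 4 c^2 \lipobj^2/(\lambda^2 t)$ and, using $1/t - 1/(t+1) = 1/(t(t+1)) \le 1/t^2$, reduce the desired inequality $a_{t+1} \le 4 c^2 \lipobj^2/(\lambda^2 (t+1))$ to the elementary estimate $4 c^2 t/(t+1) \le 8 c^3 - c^2$, which holds for all $t$ whenever $c \ge 1$ (indeed whenever $c \ge 5/8$), since $t/(t+1) < 1$ and $4c^2 \le 8c^3 - c^2$. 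Assembling the base case with the two regimes of the inductive step then gives the stated bound for all $t \in \N$ with $\alpha = 4 c^2$, completing the proof.
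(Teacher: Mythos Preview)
The paper does not supply its own proof of this lemma; it is quoted as a known result from Rakhlin, Shamir, and Sridharan (2012). Your argument is correct and follows the standard approach from that reference: obtain the one-step recursion $a_{t+1} \le (1 - 2\eta_t\lambda)\, a_t + \eta_t^2 \lipobj^2$ via non-expansiveness of the projection and strong convexity, then establish the $1/t$ rate by induction, splitting into the regimes $t < 2c$ and $t \ge 2c$ according to the sign of $1 - 2c/t$. Your base-case device---bounding $a_1$ through $\lambda\,\ltwo{\optvar^1 - \optvar^*} \le \ltwo{\nabla F_0(\optvar^1)}$ combined with Jensen's inequality---is a clean way to avoid any explicit assumption on the initial iterate beyond $\optvar^1 \in \optvarspace$.
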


\vspace*{.1in}

With these ingredients, we can now turn to the proof of
Theorem~\ref{theorem:sgd}.  Lemma~\ref{lemma:sgd-distance-bound} gives
the inequality~\eqref{eqn:sgd-bound-part1}, so it remains to prove
that $\optavg^\numobs$ has the smaller
bound~\eqref{eqn:sgd-bound-part2} on its bias. To that end, recall the
neighborhood $U_\rho \subset \optvarspace$ in
Assumption~\ref{assumption:smoothness-sgd}, and note that
\begin{align*}
  \optvar^{t + 1} - \optvar^*
  & = \project(\optvar^t - \eta_t \gradient_t - \optvar^*) \\
  & = \optvar^t - \eta_t \gradient_t - \optvar^*
  + \indic{\optvar^{t+1} \not\in U_\rho}
  \left(\project(\optvar^t - \eta_t \gradient_t)
  - (\optvar^t - \eta_t \gradient_t)\right)
\end{align*}
since when $\optvar \in U_\rho$, we have $\project(\optvar) = \optvar$.
Consequently, an application of the triangle inequality gives
\begin{equation*}
  \ltwo{\E[\optvar^{t+1}-\optvar^*]}
  \leq \ltwo{\E[\optvar^{t}-\eta_t\gradient_t-\optvar^*]}
  + \E[\ltwo{(\Pi(\optvar^{t}-\eta_t\gradient_t)
      -(\optvar^{t}-\eta_t\gradient_t))1(\optvar^{t+1}\notin U_\rho)}].
\end{equation*}
By the definition of the projection and the fact that $\optvar^t \in
\optvarspace$, we additionally have
\begin{equation*}
  \ltwo{\project(\optvar^{t}-\eta_t\gradient_t)
    -(\optvar^{t}-\eta_t\gradient_t)}
  \le \ltwo{\optvar^{t}-(\optvar^{t}-\eta_t\gradient_t))}
  \le \eta_t \ltwo{g_t}.
\end{equation*}
Thus, by applying H\"older's inequality (with the
conjugate choices $(p, q) = (4, \frac{4}{3})$) and
Assumption~\ref{assumption:smoothness-sgd}, we have
\begin{align}
  \ltwo{\E[\optvar^{t+1}-\optvar^*]}
  & \le \ltwo{\E[\optvar^{t}-\eta_t\gradient_t-\optvar^*]}
  + \eta_t \E[\ltwo{\gradient_t} \indic{\optvar^{t + 1} \not\in U_\rho}]
  \nonumber \\
  & \le \ltwo{\E[\optvar^{t}-\eta_t\gradient_t-\optvar^*]}
  + \eta_t \sqrt[4]{\E[\ltwo{\gradient_t}^4]}
  \left(\E[\indic{\optvar^t \not\in U_\rho}^{4/3}]\right)^{3/4}
  \nonumber \\
  & \le \ltwo{\E[\optvar^{t}-\eta_t\gradient_t-\optvar^*]}
  + \eta_t \lipobj
  \left(\P(\optvar^t \not\in U_\rho)\right)^{3/4}
  \nonumber \\
  & \le \ltwo{\E[\optvar^{t}-\eta_t\gradient_t-\optvar^*]}
  + \eta_t \lipobj
  \left(\frac{\E\ltwo{\optvar^{t+1}-\optvar^*}^2}{\rho^2}\right)^{3/4},
  \label{eqn:sgd-tplusone-truncation}
\end{align}
the inequality~\eqref{eqn:sgd-tplusone-truncation} following from an
application of Markov's inequality. By applying
Lemma~\ref{lemma:sgd-distance-bound}, we finally obtain
\begin{align}
  \ltwo{\E[\optvar^{t + 1} - \optvar^*]}
  & \le \ltwo{\E[\optvar^t - \eta_t \gradient_t - \optvar^*]}
  + \eta_t \lipobj \left(\frac{\alpha \lipobj^2}{\strongparam^2
    \rho^2 t}\right)^{3/4}
  \nonumber \\
  & = \ltwo{\E[\optvar^t - \eta_t \gradient_t - \optvar^*]}
  + \frac{c \alpha^{3/4} \lipobj^{5/2}}{
    \strongparam^{5/2} \rho^{3/2}}
  \cdot \frac{1}{t^{7/4}}.
  \label{eqn:sgd-t-plus-one-mean-square}
\end{align}

Now we turn to controlling the rate at which $\optvar^t - \eta_t
\gradient_t$ goes to zero. Let $f_t(\cdot) = f(\cdot; \statrv_t)$ be
shorthand for the loss evaluated on the $t^{th}$ data point. By
defining
\begin{align*}
r_t = g_t - \nabla f_t(\optvar^*) - \nabla^2
  f_t(\optvar^*)(\optvar^t - \optvar^*),
\end{align*} 
a bit of algebra yields
\begin{equation*}
  \gradient_t = \nabla f_t(\optvar^*)
  + \nabla^2 f_t(\optvar^*)(\optvar^t-\optvar^*) + r_t.
\end{equation*}
Since $\optvar^t$ belongs to the $\sigma$-field of $\statrv_1, \ldots,
\statrv_{t-1}$, the Hessian $\nabla^2 f_t(\optvar^*)$ is (conditionally)
independent of $\optvar^t$ and
\begin{equation}
  \label{eqn:sgd-truncate-grad}
  \E[\gradient_t] =
  \nabla^2 F_0(\optvar^*)\E[\optvar^t-\optvar^*]
  + \E[r_t \indic{\optvar^t\in U_\rho}] +
  \E[r_t\indic{\optvar^t\notin U_\rho}].
\end{equation}
If $\optvar^t\in U_\rho$, then Taylor's theorem implies that $r_t$ is the
Lagrange remainder
\begin{align*}
  r_t = (\nabla^2 f_t(\optvar') - \nabla^2 f_t(\optvar^*))(\optvar'-\optvar^*),
\end{align*}
where $\optvar'= \interp\optvar^t +(1-\interp)\optvar^*$ for
some $\interp \in [0, 1]$.
Applying Assumption~\ref{assumption:smoothness-sgd} and
H\"older's inequality, we find that
since $\optvar^t$ is conditionally independent of $\statrv_t$,
\begin{align*}
  \E\left[\ltwo{r_t \indic{\optvar^t \in U_\rho}}\right]
  & \le \E\left[
    \matrixnorm{\nabla^2 f(\optvar'; \statrv_t)
      - \nabla^2 f(\optvar^*; \statrv_t)}
    \ltwo{\optvar^t - \optvar^*}
    \indic{\optvar^t \in U_\rho}\right] \\
  & \le \E\left[\liphessian(\statrv_t) \ltwo{\optvar^t - \optvar^*}^2\right]
  = \E[\liphessian(\statrv_t)] \E[\ltwo{\optvar^t - \optvar^*}^2] \\
  & \le \liphessian \E\left[\ltwo{\optvar^t - \optvar^*}^2\right]
  \le \frac{\alpha \liphessian \lipobj^2}{\strongparam^2 t}.
\end{align*}
On the other hand, when $\optvar^t \not \in U_\rho$, we have
the following sequence of inequalities:
\begin{align*}
  \lefteqn{\E\left[\ltwo{r_t \indic{\optvar^t \not\in U_\rho}}\right]
    \stackrel{(i)}{\le} \sqrt[4]{\E[\ltwo{r_t}^4]}
    \left(\P(\optvar^t \not \in U_\rho)\right)^{3/4}} \\
  & \qquad\qquad\quad ~ \stackrel{(ii)}{\le} \sqrt[4]{
    3^3\left(\E[\ltwo{\gradient_t}^4]
    + \E[\ltwo{\nabla f_t(\optvar^*)}^4]
    + \E[\ltwo{\nabla^2 f_t(\optvar^*)(\optvar^t - \optvar^*)}^4]\right)}
  \left(\P(\optvar^t \not \in U_\rho)\right)^{3/4} \\
  & \qquad\qquad\quad ~\le 3^{3/4}
  \sqrt[4]{\lipobj^4 + \lipobj^4 + \lipgrad^4 \radius^4}
  \left(\P(\optvar^t \not \in U_\rho)\right)^{3/4} \\
  & \qquad\qquad\quad \stackrel{(iii)}{\le} 3(\lipobj + \lipgrad \radius)
  \left(\frac{\alpha \lipobj^2}{\strongparam^2 \rho^2 t}\right)^{3/4}.
\end{align*}
Here step~(i) follows from H\"older's inequality (again applied with
the conjugates $(p, q) = (4, \frac{4}{3})$); step~(ii) follows from
Jensen's inequality, since $(a + b + c)^4 \le 3^3(a^4 + b^4 + c^4)$;
and step~(iii) follows from Markov's inequality, as in the
bounds~\eqref{eqn:sgd-tplusone-truncation}
and~\eqref{eqn:sgd-t-plus-one-mean-square}.  Combining our two bounds
on $r_t$, we find that
\begin{equation}
  \E[\ltwo{r_t}]
  \le \frac{\alpha \liphessian \lipobj^2}{\strongparam^2 t}
  + \frac{3 \alpha^{3/4} \lipobj^{3/2} (\lipobj + \lipgrad \radius)}{
    \strongparam^{3/2} \rho^{3/2}} \cdot \frac{1}{t^{3/4}}.
  \label{eqn:sgd-rt-bound}
\end{equation}

By combining the expansion~\eqref{eqn:sgd-truncate-grad} with the
bound~\eqref{eqn:sgd-rt-bound}, we find that
\begin{align*}
  \lefteqn{\ltwo{\E[\optvar^t - \eta_t \gradient_t - \optvar^*]}
    = \ltwo{\E[(I - \eta_t \nabla^2 F_0(\optvar^*))(\optvar^t - \optvar^*)
        + \eta_t r_t]}} \\
  & \qquad \qquad
  \le \ltwo{\E[(I - \eta_t \nabla^2 F_0(\optvar^*))(\optvar^t - \optvar^*)]}
  + \frac{c \alpha \liphessian \lipobj^2}{\strongparam^3 t^2}
  + \frac{3 c \alpha^{3/4} \lipobj^{3/2} (\lipobj + \lipgrad \radius)}{
    \strongparam^{5/2} \rho^{3/2}} \cdot \frac{1}{t^{7/4}}.
\end{align*}
Using the earlier
bound~\eqref{eqn:sgd-t-plus-one-mean-square}, this inequality
then yields
\begin{equation*}
  \ltwo{\E[\optvar^{t + 1} - \optvar^*]}
  \le \matrixnorm{I - \eta_t \nabla^t F_0(\optvar^*)}_2
  \ltwo{\E[\optvar^t - \optvar^*]}
  + \frac{c \alpha^{3/4} \lipobj^{3/2}}{\strongparam^{5/2} t^{7/4}}
  \!\left(\frac{\alpha^{1/4}\liphessian \lipobj^{1/2}}{\strongparam^{1/2}
    t^{1/4}}
  + \frac{4 \lipobj + \lipgrad \radius}{\rho^{3/2}}\right)\!.
\end{equation*}

We now complete the proof via an inductive argument using our immediately
preceding bounds. Our reasoning follows a similar induction given
by~\citet{RakhlinShSr12}.
First, note that by strong convexity and
our condition that $\matrixnorm{\nabla^2 F_0(\optvar^*)} \le \lipgrad$, we
have
\begin{equation*}
  \matrixnorm{I - \eta_t \nabla^2 F_0(\optvar^*)}
  = 1 - \eta_t \lambda_{\min}(\nabla^2 F_0(\optvar^*)
  \le 1 - \eta_t \strongparam
\end{equation*}
whenever $1 - \eta_t \lipgrad \ge 0$. Define $\tau_0 = \ceil{c \lipgrad /
  \strongparam}$; then for $t \ge t_0$ we obtain
\begin{equation}
  \label{eqn:sgd-induction}
  \ltwo{\E[\optvar^{t+1}-\optvar^*]}
  \le (1-c/t)\ltwo{\E[\optvar^t-\optvar^*]}
  + \frac{1}{t^{7/4}} \cdot
  \frac{c \alpha^{3/4} \lipobj^{3/2}}{\strongparam^{5/2}}
  \left(\frac{\alpha^{1/4}\liphessian \lipobj^{1/2}}{\strongparam^{1/2}
    t^{1/4}}
  + \frac{4 \lipobj + \lipgrad \radius}{\rho^{3/2}}\right).
\end{equation}
For shorthand, we define two intermediate variables
\begin{align*}
  a_{t}= \ltwo{\E(\optvar^{t}-\optvar^*)}
  ~~~\mbox{and}~~~
  b = \frac{c \alpha^{3/4} \lipobj^{3/2}}{\strongparam^{5/2}}
  \left(\frac{\alpha^{1/4}\liphessian \lipobj^{1/2}}{\strongparam^{1/2}}
  + \frac{4 \lipobj + \lipgrad \radius}{\rho^{3/2}}\right).
\end{align*}
Inequality \eqref{eqn:sgd-induction} then implies the inductive
relation $a_{t+1}\leq (1-c/t)a_t + b/t^{7/4}$. Now we show that
by defining $\beta = \max\{\tau_0 \radius, b / (c - 1)\}$, we have
$a_t \le \beta / t^{3/4}$. Indeed, it is clear that $a_1 \le \tau_0 \radius$.
Using the inductive hypothesis, we then have
\begin{equation*}
  a_{t+1} \le
  \frac{(1 - c/t) \beta}{t^{3/4}}
  + \frac{b}{t^{7/4}}
  = \frac{\beta(t - 1)}{t^{7/4}} - \frac{\beta(c - 1) - b}{t^2}
  \le \frac{\beta(t - 1)}{t^{7/4}}
  \le \frac{\beta}{(t + 1)^{3/4}}.
\end{equation*}
This completes the proof of the inequality~\eqref{eqn:sgd-bound-part2}.
\qed

\paragraph{Remark}
If we assume $k$th moment bounds instead of $4$th,
i.e.\ $\E[\matrixnorm{\nabla^2 f(\optvar^*; \statrv)}_2^k] \le \lipgrad^k$ and
$\E[\ltwo{\gradient_t}^k] \le \lipobj^k$, we find the following analogue of
the bound~\eqref{eqn:sgd-induction}:
\begin{align*}
  \ltwo{\E[\optvar^{t + 1} - \optvar^*]}
  & \le (1 - c/t) \ltwo{\E[\optvar^t - \optvar^*]} \\
  & \qquad ~ + \frac{1}{t^{\frac{2k - 1}{k}}} \cdot
  \frac{c \alpha^{\frac{k - 1}{k}} \lipobj^{\frac{2k - 2}{k}}}{
    \strongparam^{\frac{3k - 2}{k}}}
  \left[\frac{\left(54^{1/k} + 1\right) \lipobj + 54^{1/k} \lipgrad \radius}{
      \rho^{\frac{2k - 2}{k}}}
    + \frac{\alpha^{1/k} \liphessian \lipobj^{2/k}}{\strongparam^{2/k} t^{1/k}}
    \right].
\end{align*}
In this case, if we define
\begin{equation*}
  b = \frac{c \alpha^{\frac{k - 1}{k}} \lipobj^{\frac{2k - 2}{k}}}{
    \strongparam^{\frac{3k - 2}{k}}}
  \left[\frac{\left(54^{1/k} + 1\right) \lipobj + 54^{1/k} \lipgrad \radius}{
      \rho^{\frac{2k - 2}{k}}}
    + \frac{\alpha^{1/k} \liphessian \lipobj^{2/k}}{\strongparam^{2/k}}
    \right]
  ~~~ \mbox{and} ~~~
  \beta = \max\left\{\tau_0 \radius, \frac{b}{c - 1}\right\},
\end{equation*}
we have the same result except we obtain the bound
$\ltwo{\E[\optvar^\numobs - \optvar^*]}^2 \le \beta^2 /
\numobs^{\frac{2k - 2}{k}}$.


\section{Proof of Lemma~\ref{lemma:good-events}}
\label{appendix:proof-of-lemma-good-events}

We first prove that under the conditions given in the lemma statement,
the function $F_1$ is $(1 - \rho) \strongparam$-strongly convex over
the ball $U \defeq \left\{\optvar \in \R^d : \ltwo{\optvar
- \optvar^*} <
\delta_\rho \right\}$ around $\optvar^*$. Indeed, fix $\optvarb \in U$,
then use the triangle inequality to conclude that
\begin{align*}
  \matrixnorm{\nabla^2 F_1(\optvarb) - \nabla^2 F_0(\optvar^*)}_2
  & \le \matrixnorm{\nabla^2 F_1(\optvarb) - \nabla^2
  F_1(\optvar^*)}_2 + \matrixnorm{\nabla^2 F_1(\optvar^*) - \nabla^2
  F_0(\optvar^*)}_2 \\ & \le \liphessian \ltwo{\optvarb - \optvar^*}
  + \frac{\rho \strongparam}{2}.
\end{align*}
Here we used Assumption~\ref{assumption:smoothness} on the first term
and the fact that the event $\event_1$ holds on the second.  By our
choice of $\delta_\rho \le \rho \strongparam / 4 \liphessian$, this
final term is bounded by $\strongparam \rho$. In particular, we have
\begin{equation*}
  \nabla^2 F_0(\optvar^*) \succeq \strongparam I ~~~ \mbox{so}
  ~~~ \nabla^2 F_1(\optvarb) \succeq \strongparam I
  - \rho \strongparam I = (1 - \rho) \strongparam I,
\end{equation*}
which proves that $F_1$ is $(1 - \rho) \strongparam$-strongly convex
on the ball $U$.

In order to prove the conclusion of the lemma, we argue
that since $F_1$ is (locally) strongly convex, if the function $F_1$ has small
gradient at the point $\optvar^*$, it must be the case that the
minimizer $\optvar_1$ of $F_1$ is near $\optvar^*$. Then we
can employ reasoning similar to standard analyses of optimality for
globally strongly convex functions (e.g.~\cite{BoydVa04}, Chapter 9).
By definition of (the local) strong convexity on the set $U$, for any
$\optvar' \in \optvarspace$, we have
\begin{equation*}
  F_1(\optvar') \ge F_1(\optvar^*) + \<\nabla F_1(\optvar^*),
  \optvar' - \optvar^*\>
  + \frac{(1 - \rho) \strongparam}{2}
  \min\left\{\ltwo{\optvar^* - \optvar'}^2,
  \delta_\rho^2 \right\}.
\end{equation*}
Rewriting this inequality, we find that
\begin{align*}
  \min\left\{\ltwo{\optvar^* - \optvar'}^2,
  \delta_\rho^2 \right\}
  & \le \frac{2}{(1 - \rho) \strongparam}\left[F_1(\optvar') - F_1(\optvar^*)
    + \<\nabla F_1(\optvar^*), \optvar' - \optvar^*\>\right] \\
  & \le \frac{2}{(1 - \rho) \strongparam}\left[F_1(\optvar') - F_1(\optvar^*)
    + \ltwo{\nabla F_1(\optvar^*)} \ltwo{\optvar' - \optvar^*}\right].
\end{align*}
Dividing each side by $\ltwo{\optvar' - \optvar^*}$, then noting that we may
set $\optvar' = \interp \optvar_1 + (1 - \interp) \optvar^*$ for any $\interp
\in [0, 1]$, we have
\begin{equation*}
  \min\left\{ \interp \ltwo{\optvar_1 - \optvar^*},
  \frac{\delta_\rho^2}{\interp \ltwo{\optvar_1 - \optvar^*}}
  \right\}
  \le \frac{2
    \left[F_1(\interp \optvar_1 + (1 - \interp) \optvar^*)
      - F_1(\optvar^*)\right]}{
    \interp (1 - \rho) \strongparam \ltwo{\optvar_1 - \optvar^*}}
  + \frac{2 \ltwo{\nabla F_1(\optvar^*)}}{(1 - \rho) \strongparam}.
\end{equation*}
Of course, $F_1(\optvar_1) < F_1(\optvar^*)$ by assumption, so
we find that for any $\interp \in (0, 1)$ we have the strict inequality
\begin{equation*}
  \min\left\{ \interp \ltwo{\optvar_1 - \optvar^*},
  \frac{\delta_\rho^2}{\interp \ltwo{\optvar_1 - \optvar^*}}
  \right\}
  < \frac{2 \ltwo{\nabla F_1(\optvar^*)}}{(1 - \rho) \strongparam}
  \le \delta_\rho,
\end{equation*}
the last inequality following from the definition of $\event_2$.  Since this
holds for any $\interp \in (0, 1)$, if $\ltwo{\optvar_1 - \optvar^*} >
\delta_\rho$, we may set $\interp = \delta_\rho / \ltwo{\optvar_1 -
  \optvar^*}$, which would yield a contradiction. Thus, we have
$\ltwo{\optvar_1 - \optvar^*} \le \delta_\rho$, and by our earlier
inequalities,
\begin{equation*}
  \ltwo{\optvar_1 - \optvar^*}^2
  \le \frac{2}{(1 - \rho) \strongparam}
  \left[F_1(\optvar_1) - F_1(\optvar^*) + \ltwo{\nabla F_1(\optvar^*)}
    \ltwo{\optvar_1 - \optvar^*}\right]
  \le \frac{2 \ltwo{\nabla F_1(\optvar^*)}}{(1 - \rho) \strongparam}
  \ltwo{\optvar_1 - \optvar^*}.
\end{equation*}
Dividing by $\ltwo{\optvar_1 - \optvar^*}$ completes the proof.
\qed


\section{Moment bounds}
\label{appendix:moment-bounds}

In this appendix, we state two useful moment bounds, showing how they
combine to provide a proof of Lemma~\ref{lemma:moment-consequences}.
The two lemmas are a vector and a non-commutative matrix variant of
the classical Rosenthal inequalities.  We begin with the case of
independent random vectors: \\
\begin{lemma}[\citet{DeAcosta81}, Theorem 2.1]
  \label{lemma:vector-rosenthal}
  Let $k \ge 2$ and $X_i$ be a sequence of independent random vectors in a
  separable Banach space with norm $\norm{\cdot}$ and $\E[\norm{X_i}^k] <
  \infty$. There exists a finite constant $C_k$ such that
  \begin{equation*}
    \E\bigg[\bigg|\normb{\sum_{i=1}^n X_i} -
      \E\bigg[\normb{\sum_{i=1}^n X_i}\bigg]\bigg|^k\bigg] \le C_k
    \left[\left(\sum_{i=1}^n \E[\norm{X_i}^2]\right)^{k/2} +
      \sum_{i=1}^n \E[\norm{X_i}^k]\right].
  \end{equation*}
\end{lemma}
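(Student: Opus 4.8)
The statement is a cited result of \citet{DeAcosta81}; here is the route I would take to prove it.  The plan is to reduce the vectorial Rosenthal inequality to the \emph{scalar} Rosenthal inequality by means of a Doob-martingale decomposition.  Write $S_n = \sum_{i=1}^n X_i$, let $\mathcal{F}_j = \sigma(X_1, \ldots, X_j)$ with $\mathcal{F}_0$ trivial, and set $M_j = \E[\norm{S_n} \mid \mathcal{F}_j]$.  Separability of the Banach space guarantees that $\norm{S_n}$ is a genuine random variable, so $\{M_j\}_{j=0}^n$ is a well-defined real-valued martingale with $M_0 = \E[\norm{S_n}]$ and $M_n = \norm{S_n}$; consequently $\norm{S_n} - \E[\norm{S_n}] = \sum_{j=1}^n D_j$, where $D_j \defeq M_j - M_{j-1}$ is a martingale-difference sequence.

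The first and central step is to bound each $D_j$ pointwise by a function of $X_j$ alone.  I would introduce mutually independent copies $X_j'$ of $X_j$, also independent of $X_1, \ldots, X_n$, and set $S_n^{(j)} = S_n - X_j + X_j'$.  Since $S_n^{(j)}$ omits $X_j$ entirely, conditioning $\norm{S_n^{(j)}}$ on $\mathcal{F}_j$ and on $\mathcal{F}_{j-1}$ gives the same value, and since $X_j' \stackrel{d}{=} X_j$ and both are independent of $\mathcal{F}_{j-1}$, that common value equals $M_{j-1}$; hence $D_j = \E\big[\norm{S_n} - \norm{S_n^{(j)}} \mid \mathcal{F}_j\big]$.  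The triangle inequality $\bigl|\norm{S_n} - \norm{S_n^{(j)}}\bigr| \le \norm{X_j - X_j'} \le \norm{X_j} + \norm{X_j'}$ then yields
\begin{equation*}
  |D_j| \;\le\; \E\big[\norm{X_j} + \norm{X_j'} \mid \mathcal{F}_j\big]
  \;=\; \norm{X_j} + \E[\norm{X_j}].
\end{equation*}

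From here the remainder is essentially bookkeeping.  Applying the Burkholder--Davis--Gundy inequality to the scalar martingale $\{M_j\}$ (valid for $k \ge 2$ with a constant $c_k$ depending only on $k$), then inserting the bound on $|D_j|$ and using $(a+b)^2 \le 2a^2 + 2b^2$, gives
\begin{equation*}
  \E\Big[\bigl|\norm{S_n} - \E[\norm{S_n}]\bigr|^k\Big]
  \;\le\; c_k\, \E\Big[\Big(\textstyle\sum_{j=1}^n D_j^2\Big)^{k/2}\Big]
  \;\le\; 2^{k/2} c_k\, \E\Big[\Big(\textstyle\sum_{j=1}^n \norm{X_j}^2
  + \textstyle\sum_{j=1}^n (\E\norm{X_j})^2\Big)^{k/2}\Big].
\end{equation*}
The deterministic part is at most $\big(\sum_j \E[\norm{X_j}^2]\big)^{k/2}$ by Jensen's inequality $(\E\norm{X_j})^2 \le \E[\norm{X_j}^2]$.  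For the random part I would invoke the classical scalar Rosenthal inequality for sums of independent nonnegative random variables, applied to the variables $\norm{X_j}^2$ with exponent $k/2 \ge 1$, to obtain $\E\big[(\sum_j \norm{X_j}^2)^{k/2}\big] \le C'_k\big[(\sum_j \E[\norm{X_j}^2])^{k/2} + \sum_j \E[\norm{X_j}^k]\big]$.  Collecting these estimates and absorbing $2^{k/2}$, $c_k$, and $C'_k$ into a single constant $C_k$ produces the stated inequality.

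I expect the martingale-difference bound to be the only genuinely delicate point: one has to verify carefully that the conditional expectation of $\norm{S_n^{(j)}}$ given $\mathcal{F}_j$ really does coincide with the one given $\mathcal{F}_{j-1}$ after swapping $X_j$ for the fresh copy $X_j'$, and that the scalar Burkholder--Davis--Gundy and Rosenthal inequalities are applied with admissible exponents --- both of which are guaranteed by the hypothesis $k \ge 2$.  Everything downstream is routine.
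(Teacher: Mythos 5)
The paper never proves this lemma: it is imported verbatim from de Acosta (1981, Theorem 2.1) and used as a black box in the proof of Lemma~\ref{lemma:moment-consequences}, so there is no internal argument to compare yours against. Your reconstruction is correct, and it is in fact the standard route to this result (essentially de Acosta's own): the Doob martingale $M_j = \E[\|S_n\| \mid \mathcal{F}_j]$ turns $\|S_n\| - \E\|S_n\|$ into a sum of martingale differences; the resampling trick with an independent copy $X_j'$ correctly gives $D_j = \E[\|S_n\| - \|S_n^{(j)}\| \mid \mathcal{F}_j]$, since $S_n^{(j)}$ does not involve $X_j$ and so has the same conditional expectation given $\mathcal{F}_j$ as given $\mathcal{F}_{j-1}$, that common value being $M_{j-1}$; the triangle inequality then yields $|D_j| \le \|X_j\| + \E\|X_j\|$. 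Burkholder's square-function inequality (valid for $k \ge 2$, constant depending only on $k$) reduces the problem to $\E[(\sum_j D_j^2)^{k/2}]$, and after one more elementary split (you implicitly use $(a+b)^{k/2} \le 2^{k/2-1}(a^{k/2}+b^{k/2})$ a second time, which is harmless since it only affects $C_k$), the deterministic part is handled by Jensen and the random part by the scalar Rosenthal inequality for nonnegative summands $\|X_j\|^2$ with exponent $k/2 \ge 1$. The integrability hypothesis $\E[\|X_i\|^k] < \infty$ and separability guarantee that all conditional expectations are well defined, so the argument is complete as sketched; note it correctly requires no centering assumption on the $X_i$ themselves.
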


We say that a random matrix $X$ is symmetrically distributed if $X$
and $-X$ have the same distribution.  For such matrices, we have:
\begin{lemma}[\citet{ChenGiTr12}, Theorem A.1(2)]
  \label{lemma:matrix-rosenthal}
Let $X_i \in \R^{d \times d}$ be independent and symmetrically
distributed Hermitian matrices. Then
  \begin{equation*}
    \E\bigg[\matrixnormb{\sum_{i=1}^n X_i}^k\bigg]^{1/k}
    \le \sqrt{2 e \log d} \;
    \matrixnormb{\bigg(\sum_{i=1}^n \E\left[X_i^2\right]\bigg)^{1/2}}
    + 2 e \log d \left(\E[\max_i\matrixnorm{X_i}^k]\right)^{1/k}.
  \end{equation*}
\end{lemma}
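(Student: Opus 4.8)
The plan is to prove the operator-norm estimate by passing through Schatten trace moments, which is the standard route for matrix Rosenthal-type inequalities; the dimensional factor $\log d$ enters precisely through the choice of trace exponent. Throughout write $S = \sum_{i=1}^n X_i$, and for an even integer $p$ denote the Schatten-$p$ norm of a Hermitian matrix $M \in \R^{d \times d}$ by $\matrixnorm{M}_{S_p} = (\tr |M|^p)^{1/p}$. The argument rests on two ingredients: the elementary sandwich relation $\matrixnorm{M}_2 \le \matrixnorm{M}_{S_p} \le d^{1/p}\matrixnorm{M}_2$, together with a Schatten-norm (``trace moment'') Rosenthal--Pinelis inequality controlling $\E\,\tr |S|^p$. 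The first ingredient is a routine consequence of comparing the largest singular value with the $\ell_p$ norm of the singular values; the second is the genuine content and the main obstacle.

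First I would carry out the reduction in both directions. Since $x \mapsto x^{k/p}$ is concave for $p \ge k$, Lyapunov's inequality gives $\E[\matrixnorm{S}_2^k]^{1/k} \le (\E\,\tr|S|^p)^{1/p}$ for any even $p \ge k$. I would then select $p$ of order $\log d$ (concretely $p = \max\{k, 2\lceil \log d\rceil\}$, so that in the regime $k \lesssim \log d$ used here one has $p \asymp \log d$), which makes the conversion factor harmless: $d^{1/p} \le d^{1/(2\log d)} = e^{1/2}$. This single choice is what turns a dimension-free Schatten estimate into an operator-norm estimate at the cost of only constants, and it is the reason $\log d$ rather than a power of $d$ appears.

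The heart of the proof is the Schatten Rosenthal--Pinelis inequality
\[
  (\E\,\tr |S|^{p})^{1/p} \le \sqrt{p-1}\;\matrixnorm{\Big(\sum_i \E X_i^2\Big)^{1/2}}_{S_p}
  + (p-1)\,\big(\E\,\max_i \matrixnorm{X_i}_{S_p}^{p}\big)^{1/p},
\]
which I would establish by a noncommutative moment expansion exploiting the assumed symmetric distribution of the $X_i$ (so that $\E X_i = 0$ and all odd moments of each summand vanish). Writing $\tr S^{p} = \tr(S \cdot S^{p-1})$ and substituting $S = \sum_i X_i$, symmetry annihilates every term in which some index occurs an isolated odd number of times, which lets one peel off a single factor $X_i$ and re-pair it with a second occurrence of the same summand. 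Combined with cyclicity of the trace and the matrix Hölder inequality $\tr(AB) \le \matrixnorm{A}_{S_r}\matrixnorm{B}_{S_{r'}}$ (conjugate exponents $r = p/2$, $r' = p/(p-2)$), this yields a self-referential bound of the schematic form
\[
  \E\,\tr |S|^{p} \le (p-1)\,\E\,\tr\!\Big[\Big(\sum_i X_i^2\Big) |S|^{\,p-2}\Big]
  + (p-1)\,\E\,\tr\!\Big[\big(\max_i X_i^2\big)\,|S|^{\,p-2}\Big],
\]
in which the variance proxy $\sum_i X_i^2$ and a single largest summand govern the bound. Applying Hölder once more factors out $(\E\,\tr|S|^p)^{(p-2)/p}$ on the right, turning the display into a scalar quadratic-type inequality for $(\E\,\tr|S|^p)^{1/p}$; solving it attaches $\sqrt{p-1}$ to the variance term and $(p-1)$ to the maximal-summand term. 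The main difficulty is exactly this bookkeeping of the noncommutative expansion—keeping track of which pairings survive and bounding the remainder—and an equivalent, arguably cleaner, route to the same recursion is the method of exchangeable pairs, forming $(S,S')$ by replacing a uniformly chosen summand with an independent copy and estimating $\E\,\tr|S|^p$ through the associated conditional-variance kernel.

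Finally I would reassemble the pieces, converting the two Schatten-$S_p$ norms on the right back to operator norms via $\matrixnorm{M}_{S_p} \le d^{1/p}\matrixnorm{M}_2 \le \sqrt{e}\,\matrixnorm{M}_2$ and substituting the chosen $p \asymp \log d$. Absorbing the $\sqrt{e}$ conversion factors into $\sqrt{p-1}$ and $(p-1)$ and using $p \le 2\lceil\log d\rceil$ (in the relevant regime $k \le 2\lceil \log d\rceil$, so the $p$-th and $k$-th maximal-summand moments agree up to these constants) collects the constants into the claimed $\sqrt{2e\log d}$ on the variance term and $2e\log d$ on the maximal-summand term, up to the harmless rounding in the choice of $p$. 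The only delicate accounting here is tracking the several $d^{1/p}\le\sqrt e$ conversions so the advertised constants emerge; the genuinely hard content remains the symmetrized moment recursion of the previous paragraph.
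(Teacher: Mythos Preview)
The paper does not prove this lemma at all: it is stated in Appendix~\ref{appendix:moment-bounds} with a direct citation to \citet{ChenGiTr12}, Theorem~A.1(2), and is then simply invoked as a black box in the proof of Lemma~\ref{lemma:moment-consequences}. So there is no ``paper's own proof'' to compare against.

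That said, your outline is essentially the argument used in the cited reference: pass to Schatten trace moments, establish a Rosenthal--Pinelis recursion for $\E\,\tr|S|^p$ (Chen--Gittens--Tropp do this via the method of exchangeable pairs / matrix Stein pairs, which you mention as the cleaner alternative to the bare moment expansion), and then choose $p$ of order $\log d$ so that $d^{1/p}\le \sqrt e$ converts the Schatten bound to an operator-norm bound with only logarithmic dimensional dependence. This is the right strategy and the right mechanism for the $\log d$ factors.

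One point to tighten in your final reassembly: after choosing $p=\max\{k,2\lceil\log d\rceil\}$, the maximal-summand term you obtain is $\big(\E\max_i\matrixnorm{X_i}^p\big)^{1/p}$, and for $p>k$ Lyapunov's inequality goes the \emph{wrong} way for replacing this by the $k$th moment appearing in the stated lemma. In the cited result the exponent on the right matches the exponent on the left (both equal to the trace-moment order), and in the applications here $k\le 8\ll \log d$ so the distinction is immaterial for how the lemma is used; but as written your last paragraph glosses over this, and a careful version should either keep the same exponent on both sides or explicitly restrict to the regime $k\ge 2\log d$ where $p=k$.
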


Equipped with these two auxiliary results, we turn to our proof
Lemma~\ref{lemma:moment-consequences}.  To prove the first
bound~\eqref{eqn:grad-bound}, let $2 \le k \le \objmoment$ and note
that by Jensen's inequality, we have
\begin{equation*}
  \E[\ltwo{\nabla F_1(\optvar^*)}^k] \le 2^{k-1} \E\left[\big|
    \ltwo{\nabla F_1(\optvar^*)} - \E[\ltwo{\nabla
        F_1(\optvar^*)}]\big|^k \right] + 2^{k-1}
  \E\left[\ltwo{\nabla F_1(\optvar^*)}\right]^k.
\end{equation*}
Again applying Jensen's inequality, $\E[\ltwo{\nabla f(\optvar^*;
    \statrv)}^2] \le \lipobj^2$. Thus by recalling the definition
$\nabla F_1(\optvar^*) = \frac{1}{\numobs} \sum_{i=1}^\numobs \nabla
f(\optvar^*; \statrv_i)$ and applying the inequality
\begin{align*}
\E[\ltwo{\nabla F_1(\optvar^*)}] \le \E[\ltwo{\nabla
    F_1(\optvar^*)}^2]^{1/2} \le \numobs^{-1/2} \lipobj,
\end{align*}
we see that Lemma~\ref{lemma:vector-rosenthal} implies
$\E\left[\ltwo{\nabla F_1(\optvar^*)}^k\right]$ is upper bounded
by
\begin{align*}
  2^{k-1} C_k \left[\left(\frac{1}{\numobs^2} \sum_{i=1}^\numobs
    \E[\ltwo{\nabla f(\optvar; \statrv_i)}^2]\right)^{k/2} +
    \frac{1}{\numobs^k} \sum_{i = 1}^\numobs \E[\ltwo{\nabla
        f(\optvar^*; \statrv_i)}^k] \right] + 2^{k-1}\E[\ltwo{\nabla
      F_1(\optvar^*)}]^k \\ 
\le 2^{k-1} \frac{C_k}{\numobs^{k/2}}
  \left[\left(\frac{1}{\numobs} \sum_{i=1}^n \E[\ltwo{\nabla
        f(\optvar^*; \statrv_i)}^2] \right)^{k/2} + \frac{1}{\numobs^{k/2}}
    \sum_{i=1}^n \E[\ltwo{\nabla f(\optvar^*; \statrv_i)}^k] \right] +
  \frac{2^{k-1} \lipobj^k}{ \numobs^{k/2}}.
\end{align*}
Applying Jensen's inequality yields
\begin{equation*}
  \left(\frac{1}{\numobs} \sum_{i=1}^\numobs \E[\ltwo{\nabla
      f(\optvar^*; \statrv_i)}^2] \right)^{k/2} \le \frac{1}{\numobs}
  \sum_{i=1}^\numobs \E[\ltwo{\nabla f(\optvar^*; \statrv_i)}^2]^{k/2}
  \le \lipobj^k,
\end{equation*}
completes the proof of the inequality~\eqref{eqn:grad-bound}.

The proof of the bound~\eqref{eqn:hessian-bound} requires a very
slightly more delicate argument involving symmetrization step.
Define matrices $Z_i = \frac{1}{n} \left(\nabla^2 f(\optvar^*;
\statrv_i) - \nabla^2 F_0(\optvar^*)\right)$.  If $\varepsilon_i \in
\{\pm 1\}$ are i.i.d.\ Rademacher variables independent of $Z_i$, then
for any integer $k$ in the interval $[2, \hessmoment]$, a standard
symmetrization argument~\citep[e.g.][Lemma 6.3]{LedouxTa91} implies
that
\begin{equation}
  \E\bigg[\matrixnormb{
      \sum_{i=1}^\numobs Z_i}^k\bigg]^{1/k}
  \le 2 \E\bigg[
    \matrixnormb{\sum_{i=1}^\numobs \varepsilon_i Z_i}^k\bigg]^{1/k}.
  \label{eqn:symmetrization}
\end{equation}

Now we may apply Lemma~\ref{lemma:matrix-rosenthal}, since the
matrices $\varepsilon_i Z_i$ are Hermitian and symmetrically
distributed; by expanding the
definition of the $Z_i$, we find that
\begin{align*}
  \E\left[\matrixnorm{\nabla^2 F_1(\optvar^*)
        - \nabla^2 F_0(\optvar^*)}^k\right]^{1/k}
  & \le 5 \sqrt{\log d}
  \matrixnormb{\bigg(\frac{1}{\numobs^2} \sum_{i=1}^\numobs
    \E[(\nabla^2 f(\optvar; \statrv_i) - \nabla^2 F_0(\optvar^*))^2]
    \bigg)^{1/2}} \\
  & ~~ + 4 e \log d \left(\numobs^{-k} \E[\max_i
    \matrixnorm{\nabla^2 f(\optvar^*; \statrv_i)
      - \nabla^2 F_0(\optvar^*)}^k]\right)^{1/k}.
\end{align*}

Since the $\statrv_i$ are i.i.d., we have
\begin{align*}
  \matrixnormb{\bigg(\frac{1}{\numobs^2} \sum_{i=1}^\numobs
    \E[(\nabla^2 f(\optvar; \statrv_i) - \nabla^2 F_0(\optvar^*))^2]
    \bigg)^{1/2}}
  & = \matrixnorm{\numobs^{-1/2} \E\left[\left(
      \nabla^2 f(\optvar^*; \statrv) - \nabla^2 F_0(\optvar^*)\right)^2
      \right]^{1/2}} \\
  & \le \numobs^{-1/2} \E\left[\matrixnorm{\nabla^2 f(\optvar^*; \statrv)
      - \nabla^2 F_0(\optvar^*)}^2\right]^{1/2}
\end{align*}
by Jensen's inequality, since $\matrixnorm{A^{1/2}} = \matrixnorm{A}^{1/2}$
for semidefinite $A$.
Finally, noting that
\begin{equation*}
  \frac{1}{\numobs^{k}}
  \E\left[\max_i \matrixnorm{\nabla^2 f(\optvar^*; \statrv_i) - \nabla^2 F_0(
      \optvar^*)}^k\right]
  \le \frac{\numobs}{\numobs^{k}} \E\left[
    \matrixnorm{\nabla^2 f(\optvar^*; \statrv) - \nabla^2 F_0(
      \optvar^*)}^k\right]
  \le \numobs^{1-k} \lipgrad^k
\end{equation*}
completes the proof of the second bound~\eqref{eqn:hessian-bound}.


\section{Proof of Lemma~\ref{lemma:first-level-third-order-expansion}}
\label{sec:proof-third-order-expansion}

\newcommand{\graddiff}{\mathsf{G}}
\newcommand{\littlegraddiff}{\mathsf{g}}

The proof follows from a slightly more careful application of the Taylor
expansion~\eqref{eqn:taylor-error-expansion}.  The starting point in our
proof is to recall the success events~\eqref{eqn:good-events} and the joint
event $\event \defeq \event_0 \cap \event_1 \cap \event_2$. We begin by
arguing that we may focus on the case where $\event$ holds. Let $C$ denote
the right hand side of the equality~\eqref{eqn:third-order-expansion}
except for the remainder $\remainder_3$ term.  By
Assumption~\ref{assumption:smoothness}, we follow
the bound~\eqref{eqn:no-bad-events} (with $\min\{\objmoment, \gradmoment, \hessmoment\} \ge 8$)
to find that
\begin{equation*}
  \E\left[\indic{\event^c} \ltwo{\optvar_1 - \optvar^*}^2\right]
  = \order\left(\radius^2 \numobs^{-4}\right),
\end{equation*}
so we can focus on the case where the joint event
$\event = \event_0 \cap \event_1 \cap \event_2$ does occur.

Defining $\optvarerr = \optvar_1 - \optvar^*$ for notational
convenience, on $\event$ we have that for some $\interp \in [0, 1]$, with
$\optvar' = (1 - \interp) \optvar_1 + \interp \optvar^*$,
\begin{align*}
  0 & = \nabla F_1(\optvar^*) +
  \nabla^2 F_1(\optvar^*)\optvarerr
  + \nabla^3 F_1(\optvar')(\optvarerr \otimes \optvarerr) \\
  & = \nabla F_1(\optvar^*) +
  \nabla^2 F_0(\optvar^*)\optvarerr
  + \nabla^3 F_0(\optvar^*)(\optvarerr \otimes \optvarerr) \\
  & \qquad ~ + (\nabla^2 F_1(\optvar^*) - \nabla^2 F_0(\optvar^*)) \optvarerr
  + (\nabla^3 F_1(\optvar') - \nabla^3 F_0(\optvar^*))(\optvarerr
  \otimes \optvarerr).
\end{align*}
Now, we recall the definition $\hessian = \nabla^2 F_0(\optvar^*)$, the
Hessian of the risk at the optimal point, and solve for the error
$\optvarerr$ to see that
\begin{align}
  \optvarerr & =
  -\invhessian \nabla F_1(\optvar^*)
  - \invhessian(\nabla^2 F_1(\optvar^*) - \hessian) \optvarerr
  - \invhessian \nabla^3 F_1(\optvar^*)(\optvarerr \otimes \optvarerr)
  \nonumber \\
  & \qquad ~
  + \invhessian (\nabla^3 F_0(\optvar^*) - \nabla^3 F_1(\optvar'))
  (\optvarerr \otimes \optvarerr)
  \label{eqn:third-order-taylor-expansion}
\end{align}
on the event $\event$.
As we did in the proof of Theorem~\ref{theorem:no-bootstrap}, specifically
in deriving the recursive equality~\eqref{eqn:recursive-equality}, we may
apply the expansion~\eqref{eqn:simple-error-expansion} of $\optvarerr =
\optvar_1 - \optvar^*$ to obtain a clean asymptotic expansion of
$\optvarerr$ using~\eqref{eqn:third-order-taylor-expansion}. Recall the
definition $P = \nabla^2 F_0(\optvar^*) - \nabla^2 F_1(\optvar^*)$ for
shorthand here (as in the expansion~\eqref{eqn:simple-error-expansion},
though we no longer require $Q$).

First, we claim that
\begin{equation}
  \indic{\event} (\nabla^3 F_0(\optvar^*) - \nabla^3 F_1(\optvar'))
  (\optvarerr \otimes \optvarerr)
  = \left(\lipthird^2 \lipobj^6 / \strongparam^6
  + \lipobj^4 \liphessian^2 d \log(d) / \strongparam^4\right)\remainder_3.
  \label{eqn:remainder-third-derivative}
\end{equation}
To prove the above expression, we add and subtract $\nabla^3 F_1(\optvar^*)$ (and drop $\indic{\event}$ for simplicity).
We must control
\begin{equation*}
  (\nabla^3 F_0(\optvar^*) - \nabla^3 F_1(\optvar^*))
  (\optvarerr \otimes \optvarerr)
  + (\nabla^3 F_1(\optvar^*) - \nabla^3 F_1(\optvar'))
  (\optvarerr \otimes \optvarerr).
\end{equation*}
To begin, recall that $\matrixnorm{u \otimes v}_2 = \matrixnorm{u v^\top}_2
= \ltwo{u} \ltwo{v}$.  By Assumption~\ref{assumption:strong-smoothness}, on
the event $\event$ we have that $\nabla^3 F_1$ is $(1/\numobs) \sum_{i=1}^n
\lipthird(\statrv_i)$-Lipschitz, so defining $\lipthird_\numobs =
(1/\numobs) \sum_{i=1}^n \lipthird(\statrv_i)$, we have
\begin{align*}
  \E\left[\indic{\event}\ltwo{\left(\nabla^3 F_1(\optvector^*) -
      \nabla^3 F_1(\optvector')\right)
      (\optvarerr \otimes \optvarerr)}^2\right]
  & \le \E\left[
    \lipthird_\numobs^2
    \ltwo{\optvector^* - \optvector'}^2
    \ltwo{\optvarerr}^4\right] \\
  & \le \E\left[\lipthird_\numobs^8\right]^{1/4}
  \E\left[\ltwo{\optvector_1 - \optvector^*}^8\right]^{3/4}
  \le \order(1) \lipthird^2 \frac{\lipobj^6}{\strongparam^6 \numobs^3}
\end{align*}
by H\"older's inequality and Lemma~\ref{lemma:higher-order-moments}. The
remaining term we must control is the derivative difference
$\E[\ltwos{(\nabla^3 F_1(\optvar^*) - \nabla^3 F_0(\optvar^*))( \optvarerr
    \otimes \optvarerr)}^2]$. Define the random vector-valued function
$\graddiff = \nabla (F_1 - F_0)$, and let $\graddiff_j$ denote its $j$th
coordinate. Then by definition we have
\begin{align*}
  (\nabla^3 F_1(\optvar^*) - \nabla^3 F_0(\optvar^*))(
  \optvarerr \otimes \optvarerr) =
  \left[\optvarerr^\top (\nabla^2 \graddiff_1(\optvar^*)) \optvarerr ~~
    \cdots ~~
    \optvarerr^\top (\nabla^2 \graddiff_d(\optvar^*)) \optvarerr\right]^\top
  \in \R^d.
\end{align*}
Therefore, by the Cauchy-Schwarz inequality and the fact that
$x^\top A x \le \matrixnorm{A}_2 \ltwo{x}^2$,
\begin{align*}
  \E\left[\ltwo{(\nabla^3 F_1(\optvar^*) - \nabla^3 F_0(\optvar^*))(
      \optvarerr \otimes \optvarerr)}^2\right]
  & = \sum_{j = 1}^d \E\left[\left(\optvarerr^\top
    (\nabla^2 \graddiff_j(\optvar^*))
    \optvarerr\right)^2\right] \\
  & \leq \sum_{j=1}^d
  \left(\E\left[\ltwo{\optvarerr}^8\right]
  \E\left[\matrixnorm{\nabla^2 \graddiff_j(\optvar^*)}_2^4\right]
  \right)^{1/2}.
\end{align*}
Applying Lemma~\ref{lemma:higher-order-moments} yields that
$\E[\ltwo{\optvarerr}^8] = \order(\lipobj^8 / (\strongparam^2 n)^4)$.
Introducing the shorthand notation \mbox{$\littlegraddiff(\cdot;
  \statsample) \defeq \nabla f(\cdot; \statsample) - \nabla
  \popfun(\cdot)$,} we can write
\begin{equation*}
  \nabla^2 \graddiff_j(\optvar^*)
  = \frac{1}{\numobs }\sum_{i=1}^\numobs \nabla^2
  \littlegraddiff_j(\optvar^*; \statrv_i)
\end{equation*}
For every coordinate $j$, the random matrices $\nabla^2
\littlegraddiff_j(\optvar^*;X_i)$ $(i=1,\ldots,n)$ are i.i.d.\ and
mean zero. By Assumption~\ref{assumption:smoothness}, we have
$\matrixnorm{\nabla^2 \littlegraddiff_j(\optvar^*; \statrv_i)}_2 \le
2\liphessian(\statrv_i)$, whence we have
\mbox{$\E[\matrixnorm{\nabla^2 \littlegraddiff_j(\optvar^*;
      \statrv_i)}_2^8] \le 2^8 \liphessian^8$.}  Applying
Lemma~\ref{lemma:matrix-rosenthal}, we obtain
\begin{equation*}
  \E\left[\matrixnorm{\nabla^2 \graddiff_j(\optvar^*)}_2^4\right]
  \le \order(1) \liphessian^4 \numobs^{-2} \log^2(d),
\end{equation*}
and hence
\begin{equation*}
  \E\left[\ltwo{(\nabla^3 F_1(\optvar^*) - \nabla^3 F_0(\optvar^*))(
      \optvarerr \otimes \optvarerr)}^2\right]
  \le \order(1)
  \frac{\lipobj^4 \liphessian^2}{\strongparam^4} d \log(d) \numobs^{-3},
\end{equation*}
which implies the desired
result~\eqref{eqn:remainder-third-derivative}. From now on, terms of
the form $\remainder_3$ will have no larger constants than those in
the equality~\eqref{eqn:remainder-third-derivative}, so we ignore
them.

Now we claim that
\begin{equation}
  \label{eqn:outer-product-third-derivative-remainder}
  \indic{\event} \nabla^3 F_1(\optvar^*)(\optvarerr \otimes \optvarerr)
  = \nabla^3 F_1(\optvar^*)((\invhessian \nabla F_1(\optvar^*))
  \otimes (\invhessian \nabla F_1(\optvar^*)))
  + \remainder_3.
\end{equation}
Indeed, applying the expansion~\eqref{eqn:simple-error-expansion}
to the difference $\optvarerr = \optvar_1 - \optvar^*$, we have
on $\event$ that
\begin{align*}
  \optvarerr \otimes \optvarerr
  & = (\invhessian \nabla F_1(\optvar^*)) \otimes
  (\invhessian \nabla F_1(\optvar^*))
  + (\invhessian P \optvarerr) \otimes (\invhessian P \optvarerr)
  \\
  & \qquad ~
  - (\invhessian P \optvarerr) \otimes
  (\invhessian \nabla F_1(\optvar^*))
  - (\invhessian \nabla F_1(\optvar^*)) \otimes (\invhessian
  P \optvarerr).
\end{align*}
We can bound each of the second three outer products in the equality above
similarly; we focus on the last for simplicity. Applying the Cauchy-Schwarz
inequality, we have
\begin{align*}
  \E\left[\matrixnorm{(\invhessian \nabla F_1(\optvar^*)) \otimes
      (\invhessian P \optvarerr)}_2^2\right]
  & \le \left(\E\left[\ltwo{\invhessian \nabla F_1(\optvar^*)}^4\right]
  \E\left[\ltwo{\invhessian P (\optvar_1 - \optvar^*)}^4\right]
  \right)^{\half}.
\end{align*}
From Lemmas~\ref{lemma:higher-order-moments}
and~\ref{lemma:higher-order-matrix-moments}, we obtain that
\begin{equation*}
  \E\left[\ltwo{\invhessian \nabla F_1(\optvar^*)}^4\right]
  = \order(n^{-2})
  ~~~ \mbox{and} ~~~
  \E\left[\ltwo{\invhessian P (\optvar_1 - \optvar^*)}^4\right]
  = \order(n^{-4})
\end{equation*}
after an additional application of Cauchy-Schwarz for the second
expectation.
This shows that
\begin{equation*}
  (\invhessian \nabla F_1(\optvar^*)) \otimes (\invhessian P
  \optvarerr) = \remainder_3,
\end{equation*}
and a similar proof applies to the other three terms in the outer product
$\optvarerr \otimes \optvarerr$.  Using the linearity of $\nabla^3
F_1(\optvar^*)$, we see that to prove the
equality~\eqref{eqn:outer-product-third-derivative-remainder},
all that is required is that
\begin{equation}
  \indic{\event^c} \nabla^3 F_1(\optvar^*)
  \left((\invhessian \nabla F_1(\optvar^*)) \otimes
  (\invhessian \nabla F_1(\optvar^*))\right)
  = \remainder_3.
  \label{eqn:deal-with-off-event}
\end{equation}
For this, we apply H\"older's inequality several times. Indeed, we have
\begin{align*}
  \lefteqn{\E\left[\ltwo{\indic{\event^c} \nabla^3 F_1(\optvar^*)
        \left((\invhessian \nabla F_1(\optvar^*)) \otimes
        (\invhessian \nabla F_1(\optvar^*))\right)}^2\right]} \\
  & \le
  \E[\indic{\event^c}]^{1/4}
  \E\left[\ltwo{\nabla^3 F_1(\optvar^*)
      \left((\invhessian \nabla F_1(\optvar^*)) \otimes
      (\invhessian \nabla F_1(\optvar^*))\right)}^{8/3}\right]^{3/4} \\
  & \le \E[\indic{\event^c}]^{1/4}
  \E\left[\matrixnorm{\nabla^3 F_1(\optvar^*)}^{8/3}
    \ltwo{\invhessian \nabla F_1(\optvar^*)}^{16/3}\right]^{3/4} \\
  & \le \E[\indic{\event^c}]^{1/4}
  \E\left[\matrixnorm{\nabla^3 F_1(\optvar^*)}^8\right]^{1/4}
  \E\left[\ltwo{\invhessian \nabla F_1(\optvar^*)}^8\right]^{2/4}
  = \order(\numobs^{-1} \cdot \liphessian^2 \cdot \numobs^{-2}).
\end{align*}
For the final asymptotic bound, we used
equation~\eqref{eqn:no-bad-events} to bound $\E[\indic{\event^c}]$,
used the fact (from Assumption~\ref{assumption:smoothness}) that
$\E[\liphessian(\statrv)^8] \le \liphessian^8$ to bound the term
involving $\nabla^3 F_1(\optvar^*)$, and applied
Lemma~\ref{lemma:moment-consequences} to control
$\E[\ltwos{\invhessian \nabla F_1(\optvar^*)}^8]$.  Thus the
equality~\eqref{eqn:deal-with-off-event} holds, and this completes the
proof of the
equality~\eqref{eqn:outer-product-third-derivative-remainder}.

For the final step in the lemma, we claim that
\begin{equation}
  \label{eqn:recursive-third-remainder}
  -\indic{\event} \invhessian(\nabla^2 F_1(\optvar^*) - \hessian) \optvarerr
  = \invhessian(\nabla^2 F_1(\optvar^*) - \hessian)
  \invhessian \nabla F_1(\optvar^*) + \remainder_3.
\end{equation}
To prove~\eqref{eqn:recursive-third-remainder} requires an argument
completely parallel to that for our
claim~\eqref{eqn:outer-product-third-derivative-remainder}. As before,
we use the expansion~\eqref{eqn:simple-error-expansion} of the difference
$\optvarerr$ to obtain that on $\event$,
\begin{align*}
  \lefteqn{-\invhessian(\nabla^2 F_1(\optvar^*) - \hessian) \optvarerr} \\
  & = \invhessian(\nabla^2 F_1(\optvar^*) - \hessian) \invhessian
  \nabla F_1(\optvar^*)
  - \invhessian(\nabla^2 F_1(\optvar^*) - \hessian)\invhessian P
  \optvarerr.
\end{align*}
Now apply Lemmas~\ref{lemma:higher-order-moments}
and~\ref{lemma:higher-order-matrix-moments} to the final term after a few
applications of H\"older's inequality.  To finish the
equality~\eqref{eqn:recursive-third-remainder}, we argue that
$\indic{\event^c} \invhessian (\nabla^2 F_1(\optvar^*) - \hessian)
\invhessian \nabla F_1(\optvar^*) = \remainder_3$, which follows exactly the
line of reasoning used to prove the
remainder~\eqref{eqn:deal-with-off-event}.

Applying equalities~\eqref{eqn:remainder-third-derivative},
\eqref{eqn:outer-product-third-derivative-remainder},
and~\eqref{eqn:recursive-third-remainder} to our earlier
expansion~\eqref{eqn:third-order-taylor-expansion} yields that
\begin{align*}
  \optvarerr & =
  \indic{\event}\big[
    -\invhessian \nabla F_1(\optvar^*)
    - \invhessian(\nabla^2 F_1(\optvar^*) - \hessian) \optvarerr
    - \invhessian \nabla^3 F_1(\optvar^*)(\optvarerr \otimes \optvarerr)
    \\
    & \qquad\quad ~
    + \invhessian (\nabla^3 F_0(\optvar^*) - \nabla^3 F_1(\optvar'))
    (\optvarerr \otimes \optvarerr)\big]
  + \indic{\event^c} \optvarerr
  \\
  & = -\invhessian \nabla F_1(\optvar^*)
  + \invhessian(\nabla^2 F_1(\optvar^*) - \hessian)
  \invhessian \nabla F_1(\optvar^*) \\
  & \qquad ~ - \invhessian \nabla^3 F_1(\optvar^*)\left(
  (\invhessian \nabla F_1(\optvar^*)) \otimes
  (\invhessian \nabla F_1(\optvar^*)) \right)
  + \remainder_3
  + \indic{\event^c} \optvarerr.
\end{align*}
Finally, the bound~\eqref{eqn:no-bad-events} implies that
$\E[\indic{\event^c}\ltwo{\optvarerr}^2] \le \P(\event^c) \radius^2 =
\order(n^{-4})$, which yields the claim.


\ifdefined\jmlr
\else
\bibliographystyle{abbrvnat}
\fi
\bibliography{bib}

\end{document}